\documentclass[letterpaper,11 pt]{article} 
\usepackage{amsmath,amsfonts,amssymb,color}
\usepackage{mathtools}
\usepackage{dsfont}
\usepackage[dvipsnames]{xcolor}
\definecolor{mygreen}{rgb}{0.0, 0.5, 0.0}
\definecolor{winered}{rgb}{0.8,0,0}
\definecolor{myblue}{rgb}{0,0,0.8}
\usepackage{tikz}
\usepackage{amsthm}
\usepackage{empheq}

\newtheorem{definition}{Definition}
\newtheorem{theorem}{Theorem}
\newtheorem{lemma}{Lemma}

\newtheorem{remark}{Remark}
\newtheorem{assumption}{Assumption}
\DeclarePairedDelimiter\ceil{\lceil}{\rceil}
\DeclarePairedDelimiter\floor{\lfloor}{\rfloor}

\DeclareMathOperator*{\argmax}{\arg\!\max}

\DeclarePairedDelimiterX{\norm}[1]{\lVert}{\rVert}{#1}

\usepackage{algorithm}
\usepackage{algpseudocode}
\usepackage{epsfig}
\usepackage{array}
\usepackage{multirow}
\usepackage{epstopdf}
\usepackage{tikz}
\usepackage{relsize}
\usetikzlibrary{shapes,arrows}
\usepackage[hidelinks]{hyperref}
\hypersetup{
    colorlinks=true,
    linkcolor={winered},
    citecolor={mygreen}
}
\usepackage{cite}
\usepackage[margin=1in]{geometry}
\title{\LARGE Linear Stochastic Bandits over a Bit-Constrained Channel}
\author{Aritra Mitra, Hamed Hassani, and George J. Pappas
% <-this % stops a space
\thanks{The authors are with the Department of Electrical and Systems Engineering, University of Pennsylvania. Email: {\tt \{amitra20, hassani, pappasg\}@seas.upenn.edu}. This work was supported by NSF Award 1837253, NSF CAREER award CIF 1943064, and the Air Force Office
of Scientific Research Young Investigator Program (AFOSR-YIP) under award FA9550-20-1-0111.}}
\date{}
\begin{document}
\maketitle
\thispagestyle{empty}
\pagestyle{empty}
\begin{abstract}
One of the primary challenges in large-scale distributed learning stems from stringent communication constraints. While several recent works address this challenge for static optimization problems, sequential decision-making under uncertainty has remained much less explored in this regard. Motivated by this gap, we introduce a new linear stochastic bandit formulation over a bit-constrained channel. 
Specifically, in our setup, an agent interacting with an environment transmits encoded  estimates of an unknown model parameter to a server over a communication channel of finite capacity. The goal of the server is to take actions based on these estimates to minimize cumulative regret. To this end, we develop a novel and general algorithmic framework that hinges on two main components: (i) an adaptive encoding mechanism that exploits statistical concentration bounds, and (ii) a decision-making principle based on confidence sets that account for encoding errors. As our main result, we prove that when the unknown model is $d$-dimensional, a channel capacity of $O(d)$ bits suffices to achieve order-optimal regret. To demonstrate the generality of our approach, we then show that the same result continues to hold for non-linear observation models satisfying standard regularity conditions. Finally, we establish that for the simpler unstructured multi-armed bandit problem, $1$ bit channel-capacity is sufficient for achieving optimal regret bounds. Overall, our work takes a significant first step towards paving the way for statistical decision-making over finite-capacity channels.  
\end{abstract}
\section{Introduction}
In modern distributed computing paradigms such as federated learning (FL), a group of agents typically interact with a parameter server to train a common statistical model. A major bottleneck in such settings is the network communication cost of uploading (potentially high-dimensional) models and gradient vectors to the server. Motivated by this emerging concern, several works draw on ideas from quantization theory \cite{seide,strom, wen,qsgd,khirirat,DIANA,ADIANA,horvathq,fedpaq,Fedcomgate}, sparsification \cite{emp1,emp2,alistarhsparse,stichsparse,reddySignSGD,reddystich,beznosikov,lin_comp,mitraNIPS21}, and rate-distortion theory \cite{mitchell} to design communication-efficient algorithms that achieve a desired level of precision while exchanging as few bits as possible. This rich body of work contributes significantly to the study of static optimization problems under communication constraints. Nonetheless,  there remains a considerable gap in our understanding of similar questions when it comes to sequential decision-making under uncertainty (e.g., bandit problems and reinforcement learning). Our primary goal in this paper is to bridge the above gap. 

A common abstraction for analyzing optimization under limited communication is one where a worker agent transmits quantized gradients to a server over a finite bit-rate communication channel \cite{mayekar,gandikota,kostina}. Inspired by this model, for our problem of interest, we introduce and study a new linear stochastic bandit formulation comprising of an agent connected to a decision-making entity (server) by a noiseless communication channel of finite capacity $B$; see Fig.~\ref{fig:Model}. The agent interacts with an environment and observes noisy rewards that depend linearly on an unknown parameter vector $\theta_* \in \mathbb{R}^d$. It then encodes and transmits finite-precision estimates of  $\theta_*$ to the server. Based on these estimates, the role of the server is to play a sequence of actions that maximizes the sum of rewards accrued over a time horizon $T$ - a performance metric captured by cumulative regret.\footnote{A formal description of our setup is provided in Section \ref{sec:model}.} Notably, the agent can only transmit encoded estimates of the parameter vector, but not the rewards themselves. The reason for this is twofold. First, our formulation is motivated by the popular federated learning framework \cite{konevcny} where due to privacy concerns, agents exchange their local models with the server instead of their raw observations. In our setup, the parameter vector is the model and the rewards are the observations. Second, our goal is to build a theory that is eventually applicable to multi-agent decision-making. For such settings, it is more natural for the server to perform fusion on the agents' local models instead of directly fusing observations that may belong to different signal spaces. 

% One can think of the agent as a low-level robot tasked with exploring an unknown environment based on instructions (actions) passed down by a supervisory controller, namely the server. 

The main technical challenge in our setup arises from the fact that the channel from the agent to the server introduces additional uncertainty into the decision-making process. Unless accounted for carefully, the instantaneous encoding errors resulting from such uncertainty can accumulate over time and lead to sub-optimal regret bounds. Given this challenge, the central question we investigate is the following. 

\vspace{1mm}
\textit{Under what conditions on the channel capacity $B$ can we achieve the order-optimal regret bound $\tilde{O}(d\sqrt{T})$?}\footnote{When the channel has infinite capacity, i.e., when $B=\infty$, $\tilde{O}(d\sqrt{T})$ regret is known to be optimal; see Chapter 24 of \cite{tor} for further details on this topic.} 

\vspace{1mm}
In this work, we rigorously answer the above question via a set of algorithmic and theoretical contributions discussed below.

$\bullet$ \textbf{Algorithmic Contributions.} For the setting of interest, we develop a novel framework for statistical decision-making under communication constraints. Our approach hinges on two main components. The first is an adaptive quantization mechanism that encodes the change (\textit{innovation})  in successive estimates of $\theta_*$ at the agent. The main intuition here is that with high probability, the gap between successive model estimates shrinks over time; as a result, the innovation signals are contained in balls of progressively smaller radii. Thus, roughly speaking, to achieve the same precision, it takes fewer bits to encode the innovation signals as compared to the model estimates (that can be of a much larger magnitude). A key feature of our encoding scheme is that the dynamic quantizer ranges are designed based on statistical concentration bounds specific to the stochastic process we study. As such, our encoding scheme is novel,  and differs significantly from standard quantization approaches for optimization. 

The second integral component of our framework is the decision-making policy at the server that comprises of two phases: (i) a pure exploration phase that facilitates the estimation of $\theta_*$, and (ii) an information-constrained exploration-exploitation phase. Specifically, in the latter phase, actions are taken based on certain  ``inflated" confidence sets that are carefully constructed: the radii of such sets need to be large enough to account for the errors induced by compression. At the same time, the compression errors need to gradually shrink to ensure that taking decisions based on ``inflated" confidence sets does not lead to sub-optimal regret bounds. Thus, the design of the  encoding scheme at the agent is tightly coupled with the  decision-making policy at the server. Notably, the construction of the confidence sets is a key  algorithmic contribution of our work that sets it apart from communication-constrained static optimization where a decision-making component is absent. We refer to our overall scheme as the Information-Constrained \texttt{LinUCB} algorithm (\texttt{IC-LinUCB}). 

$\bullet $ \textbf{Theoretical Contributions.} Our first main result (Theorem \ref{thm:ICLinUCB}) reveals that with a channel capacity $B=O(d)$ bits, \texttt{\texttt{IC-LinUCB}} guarantees a regret bound of $\tilde{O}(d\sqrt{T})$. The main implication of this result is that one can achieve order-optimal regret guarantees  with a bit-rate that is independent of the horizon $T$, and that depends only on the dimension $d$ of the unknown model $\theta_*$. As far as we are aware, this is the first result of its kind for linear stochastic bandits, and complements similar results for stochastic optimization: the authors in \cite{mayekar} recently showed that with $d$-dimensional quantized gradients, a bit-rate of $\tilde{O}(d)$ bits is sufficient for achieving the optimal optimization convergence rate.  On the technical front, we note that the proof of Theorem \ref{thm:ICLinUCB} is non-trivial, and relies on some key intermediate ideas that we outline in Section \ref{sec:guarantees}. 

To demonstrate the generality of our approach, we significantly extend our analysis to the generalized linear bandit setting that accounts for non-linear observation models \cite{GLM1,GLM2}. Once again, we establish that with a bit-rate of $O(d)$ bits, one can achieve optimal regret bounds; see Theorem \ref{thm:ICGLMUCB} for a formal statement of the result. 

Finally, we ask: \textit{When the action sets have additional structure, can we exploit such structure to achieve optimal performance with fewer than $O(d)$ bits?} To answer this question, we study a special case of the linear bandit problem where the actions are the standard orthonormal basis vectors. This setting corresponds to the multi-armed bandit (MAB) problem with a finite number of arms \cite{auer}. For this setting, we prove that with a bit-rate $B=1$, one can achieve both gap-dependent (Theorem \ref{thm:ICUCB}) and gap-independent (Theorem \ref{thm:gap_ind}) regret bounds matching those of the celebrated upper-confidence bound (\texttt{UCB}) algorithm. 

%as a special case of the linear bandit problem, we consider a setting where the action sets have additional structure: the multi-armed bandit (MAB) problem with a finite number of arms \cite{auer}. For this setting, we prove that with a bit-rate $B=1$, one can achieve both gap-dependent (Theorem \ref{thm:ICUCB}) and gap-independent (Theorem \ref{thm:gap_ind}) regret bounds matching those of the celebrated upper-confidence bound (\texttt{UCB}) algorithm. Our analysis thus reveals that when the action sets have additional structure, such structure can be exploited to achieve optimal performance with just a few bits. 

Overall, we envision that the algorithmic and analytical insights from this work will pave the way for studying more complex statistical decision-making problems in distributed and multi-agent settings under channel capacity constraints. 

\textbf{Further Related Work.} Our formulation is inspired by the classical work \cite{tatikonda} that studies the problem of stabilizing a linear time-invariant dynamical system over a bit-constrained channel. There, as in our setup, the estimation module (sensor) is separated from the decision-making module (controller) by the channel. Aside from the fact that we study a fundamentally different problem, our work departs from \cite{tatikonda} in that our setup is inherently stochastic, while the authors in \cite{tatikonda} consider a fully deterministic setting. In particular, while the state estimates encoded in \cite{tatikonda} are deterministic, the model parameter estimates that we encode in our setting are high-dimensional random vectors. 

Our work is naturally related to the seminal papers on linear stochastic bandits \cite{dani,abbasi} that introduce and analyze the \texttt{LinUCB} algorithm. The results in this paper extend those in \cite{dani,abbasi} to the communication-constrained setting that we study. In the context of multi-agent bandits \cite{landgren1,landgren2,shahrampour,kolla,dMAB3,sankararaman,martinez,dubeyICML20,dubeyNIPS20,lalitha20,chawla1,chawla2,ghoshbandits,agarwal21,FedMAB1,FedMAB3}, a body of work focuses on achieving benefits of collaboration while minimizing the number of communication rounds \cite{dMAB3, dubeyNIPS20, chawla1, agarwal21}. The main goal of these papers is to achieve desirable performance while minimizing the \textit{frequency} of communication. Our focus is orthogonal - that of studying the impact of finite-precision communication channels on the performance of bandit algorithms. As a result, our problem formulation, algorithmic techniques, and theoretical results differ considerably from the above strand of literature. 

\textbf{Notation}. Given two scalars $a$ and $b$, we use $a \vee b$ and $a \wedge b$ to represent $\max\{a,b\}$ and $\min\{a,b\}$, respectively.  For any positive integer $n$, we use $[n]$ to denote the set of integers $\{1, \ldots, n\}$. We use $\mathcal{B}_d(0,1)$ and $\mathbb{S}^{d-1}$ to represent the $d$-dimensional Euclidean ball and the $d$-dimensional Euclidean sphere, respectively, of unit radius centered at the origin. Given a matrix $A$, we use $\lambda_{\max}(A)$ and $\lambda_{\min}(A)$ to represent the largest and smallest eigenvalues, respectively, of $A$. Moreover, we use $A'$ to denote the transpose of $A$. Given two symmetric positive semi-definite matrices $A$ and $B$, we use $B \preccurlyeq A$ to imply that $A-B$ is positive semi-definite. 

\newpage
%------------- Model and Problem Formulation ------------------------------- %
\section{Model and Problem Formulation}
\label{sec:model}

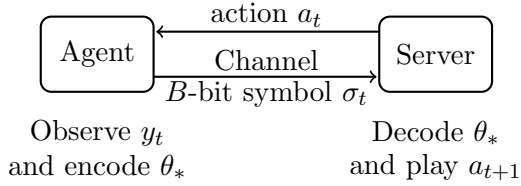
\begin{figure}[t]
\begin{center}
%\hspace*{-1cm}
\begin{tikzpicture}
[->,shorten >=1pt,scale=.75,inner sep=1pt, minimum size=12pt, auto=center, node distance=3cm,
  thick, node/.style={circle, draw=black, thick},]
\tikzstyle{block1} = [rectangle, draw, fill=red!10, 
    text width=8em, text centered, rounded corners, minimum height=0.8cm, minimum width=1cm];
\node (rect) at (0,0) (A) [draw, rounded corners, minimum width= 1.5cm, minimum height=1cm] {Agent};
\node (rect) at (6,0) (S) [draw, rounded corners, minimum width= 1.5cm, minimum height=1cm] {Server}; 
\coordinate (a1) at (1, 0.4);
\coordinate (s1) at (5,0.4);
\draw[->,thick]
 (s1) -- node[pos=.5,above]{action $a_t$} (a1);  
 
\coordinate (a2) at (1,-0.4);
\coordinate (s2) at (5,-0.4);
\draw[->,thick]
 (a2) -- node[pos=.5,below] {$B$-bit symbol $\sigma_t$}  node[pos=.5,above] {Channel} (s2);

\node at (0,-1.4) {Observe $y_t$};
\node at (0,-2) {and encode $\theta_*$};
\node at (6, -1.4) {Decode $\theta_*$};
\node at (6, -2) {and play $a_{t+1}$}; 
\end{tikzpicture}
\end{center}
\caption{At each round $t$, the action $a_t$ played by the server is sent to the agent without any loss of information. The agent then observes a reward $y_t$ as per Eq. \eqref{eqn:obs_model}, encodes an estimate of the model $\theta_*$, and transmits the encoded symbol $\sigma_t$ back to the server under the $B$-bits per round channel constraint. The server performs decoding and plays the next action $a_{t+1}$.}
\label{fig:Model}
\end{figure}

We study a setting comprising of an agent and a decision-maker (server) separated by a noiseless communication channel of finite capacity; see Fig.~\ref{fig:Model}. Based on all the information acquired by the server up to time-step $t-1$, it chooses an action $a_t \in \mathcal{A}_t$ at time $t$, where $\mathcal{A}_t \subset \mathbb{R}^d$ is the feasible decision set at time $t$. The agent then receives a reward (observation) according to the following model:
\begin{equation}
    y_t= \langle \theta_*, a_t \rangle + \eta_t,
\label{eqn:obs_model}
\end{equation}
where $\{\eta_t\}$ is a sequence of i.i.d. $1$-subgaussian noise random variables.  Here, $\theta_*$ is an unknown parameter that belongs to a known compact set $\Theta \subset \mathbb{R}^d$; for each $\theta \in \Theta$, it holds that ${\Vert \theta \Vert}_2 \leq M$, where $M \geq 1$. Our performance measure of interest is the following regret metric $R_T$:
\begin{equation}
    R_T = \mathbb{E} \left[\sum_{t=1}^{T} \max_{a\in\mathcal{A}_t} \langle \theta_*, a-a_t \rangle \right],
\label{eqn:regret}
    \end{equation}
where $T$ is the time horizon. The goal of the server is to play a sequence of actions such that ${R}_T$ grows sub-linearly in $T$. When there is no loss of information from the agent to the server (i.e., in the absence of the  channel), it is well known that one can achieve  $\tilde{O}\left(d\sqrt{T}\right)$ regret by following the popular \texttt{LinUCB} algorithm \cite{abbasi}. Our \textbf{goal} in this work is to develop an algorithm that achieves the same performance subject to communication constraints that we describe next. 

\textbf{Communication constraints.} To capture communication constraints, we assume that the channel from the agent to the server has a finite capacity of $B$ bits. Thus, at each time-step, the channel can transmit without error one of $2^B$ symbols denoted by $\sigma \in \Sigma$, where $\vert \Sigma \vert = 2^B$. As explained and motivated in the introduction,  we impose an additional information constraint that the agent can only transmit encoded estimates of the unknown model parameter $\theta_*$, but not the rewards themselves. We note here that encoding a high-dimensional model estimate is much more challenging than encoding a scalar-valued reward.  

In section \ref{sec:guarantees}, we will establish that with $B=O(d)$ bits, one can ensure that  $R_T = \tilde{O}\left(d\sqrt{T}\right)$. Arriving at this result is however  quite  non-trivial, and requires overcoming certain key technical challenges that we outline next.

\textbf{Challenges.} In the standard linear stochastic bandit formulation, the chief difficulty lies in taking decisions that incur low regret despite statistical uncertainty concerning the unknown parameter $\theta_*$. In our setting, such uncertainty is accentuated by the loss of information incurred over the finite-capacity channel. Unless the server explicitly accounts for this additional source of error in its decision-making process, it can end up taking sub-optimal actions that generate low rewards. Moreover, since our problem is of an inherently sequential nature, the effect of ``poor" actions coupled with channel-induced errors can pile up over time, resulting in the agent-server pair suffering linear regret. The above discussion highlights the challenge in decision-making.

In terms of communication, one natural idea to encode the parameter $\theta_*$ could be to uniformly quantize the set $\Theta$ at each time-step, since $\theta_* \in \Theta$. To ensure that the sum of the instantaneous encoding errors do not grow linearly with the horizon $T$, such errors need to scale inversely with $T$. However, to achieve such a precision with a  \textit{non-adaptive} encoding scheme that always encodes the entire set $\Theta$, the channel capacity $B$, in turn, needs to scale with $T$. 
This is highly undesirable since the horizon-length $T$ can be arbitrarily long. To sum up, the design of a joint encoding-decoding  and decision-making strategy that achieves order-optimal regret with a horizon-independent channel capacity is not at all obvious a priori. Nonetheless, we will develop such a strategy in the next section. For now, we lay down certain standard technical assumptions that will be made throughout the paper. 

\begin{assumption} \label{ass:actions}
The following hold: \begin{enumerate}
    \item[(i)] $\max_{t\in [T]} \sup_{a,b \in \mathcal{A}_t} \langle \theta_*, a-b \rangle \leq 1.$ 
    \item[(ii)] ${\Vert a \Vert}_2 \leq L$, $\forall a \in  \bigcup_{t=1}^{T} \mathcal{A}_t$. \item[(iii)] At each time-step $t\in[T]$, the decision set $\mathcal{A}_t$ contains the unit sphere $\mathbb{S}^{d-1}$.
\end{enumerate}
\end{assumption}
While assumptions (i) and (ii) are typical in the literature on linear stochastic bandits \cite{tor}, assumption (iii) is also quite standard and has been used in various different contexts \cite{amani,yangICLR}. Without loss of generality, we assume that $L \geq 1$; furthermore, we assume that the horizon is long-enough relative to the dimension of the model:  $T \geq d^2$. 

%------------ ICLINUCB Algo -----%
%--------------------------------%
\section{Information-Constrained Optimism in the Face of Uncertainty}
\label{sec:lin_algo}

In this section, we will develop our proposed algorithm (Algorithm \ref{algo:ICLUCB}) called Information Constrained \texttt{LinUCB}  (\texttt{IC-LinUCB}) that comprises of two phases. Phase I is a pure exploration phase where the server picks i.i.d. actions from the uniform distribution over the unit sphere; such actions are feasible owing to Assumption \ref{ass:actions}-(iii). During this phase which lasts for $\bar{T}+1$ time-steps, the only transmission from the agent to the server takes place at time-step $\bar{T}+1$. The purpose of the pure exploration phase and the choice of the parameter $\bar{T}$ will be explained shortly. During each time-step of Phase II, the agent employs an \textit{adaptive} encoding strategy (outlined in Algorithm~\ref{algo:Encoder}) to transmit information about the unknown parameter $\theta_*$ to the server. Based on this information, the server takes decisions by constructing an ``inflated" confidence that accounts for encoding errors. We now describe in detail the two key ingredients of $\texttt{IC-LinUCB}$: (i) the adaptive encoding strategy at the agent, and (ii) the decision-making rule at the server.

$\bullet$ \textbf{Adaptive Encoding at Agent.} To describe the encoder, we will require the notion of an $\epsilon$-net \cite{versh}. 

\begin{definition} (\textbf{$\epsilon$-net}). Consider a subset $\mathcal{K} \subset \mathbb{R}^{d}$ and let $\epsilon > 0$. A subset $\mathcal{N} \subseteq \mathcal{K}$ is called an $\epsilon$-net of $\mathcal{K}$ if every point in $\mathcal{K}$ is within a distance of $\epsilon$ of some point of $\mathcal{N}$, i.e., 
$$ \forall x \in \mathcal{K}, \exists x_0 \in \mathcal{N}: {\Vert x-x_0 \Vert}_2 \leq \epsilon. $$
Equivalently, $\mathcal{N}$ is an $\epsilon$-net of $\mathcal{K}$ if and only if $\mathcal{K}$ can be covered by balls with centers in $\mathcal{N}$ and radii $\epsilon$. 
\end{definition}
Next, consider the least-squares estimate $\hat{\theta}^{(a)}_t$ maintained by the agent:
\begin{equation}
\hat{\theta}^{(a)}_t=V^{-1}_t \sum_{s=1}^{t}a_{s}y_{s}, \hspace{2mm} \textrm{where} \hspace{2mm} 
    V_t = \lambda I_d + \sum_{s=1}^{t}a_s a'_s
\label{eqn:least_square}
\end{equation}
is the covariance matrix at time-step $t$.\footnote{We use $x'$ to denote the transpose of a vector $x$.}  Here,  $\lambda >0$ is a scalar regularization parameter. Let $\hat{\theta}^{(s)}_t$ be the estimate of $\theta_*$ maintained by the server; $\hat{\theta}^{(s)}_t$ is initialized from any arbitrary vector in $\Theta$ at time-step $\bar{T}+1$. The choice of this initial vector is known to both the agent and the server. 

\textbf{Main Ideas.} The key ideas guiding our encoding strategy are as follows. Once the agent has acquired  sufficiently many observations, the gap $\hat{\theta}^{(a)}_{t}-\hat{\theta}^{(a)}_{t-1}$ between successive estimates will start shrinking due to the pure exploration phase; see Remark \ref{rem:Pureexp}.  Thus, at this stage, if the gap  $\hat{\theta}^{(a)}_{t-1} - \hat{\theta}^{(s)}_{t-1}$ is not too large, then the gap $e_t=\hat{\theta}^{(a)}_{t} - \hat{\theta}^{(s)}_{t-1}$ should not be too large either. In other words, eventually, a new observation $y_t$ will not cause the agent's estimate of $\theta_*$ to deviate drastically from the estimate of $\theta_*$ held by the server. Intuitively, it thus makes sense to encode and transmit only the \textit{new information} about $\theta_*$ contained in $y_t$, i.e., the ``innovation" signal $e_t$ (as opposed to encoding $\hat{\theta}^{(a)}_{t}$). However, given the stochastic nature of our setup, $e_t$ is a random variable. Thus, encoding $e_t$ poses the technical hurdle of characterizing the region containing $e_t$ with high probability. To this end, in Lemma \ref{lemma:enc_dec} of Section \ref{sec:guarantees}, we establish that with high probability, $\forall t \geq \bar{T}+1$, $e_t \in \mathcal{B}_d(0,p_t)$, where $p_t$ is the radius of the ball containing the innovation $e_t$. Our encoding strategy is adaptive since it requires dynamically updating the radius $p_t$ (as per Eq. \eqref{eqn:quantizer_eqs}) based on statistical concentration bounds specific to our problem.
\begin{figure}[t]
\centering
  \includegraphics[width=0.75\linewidth]{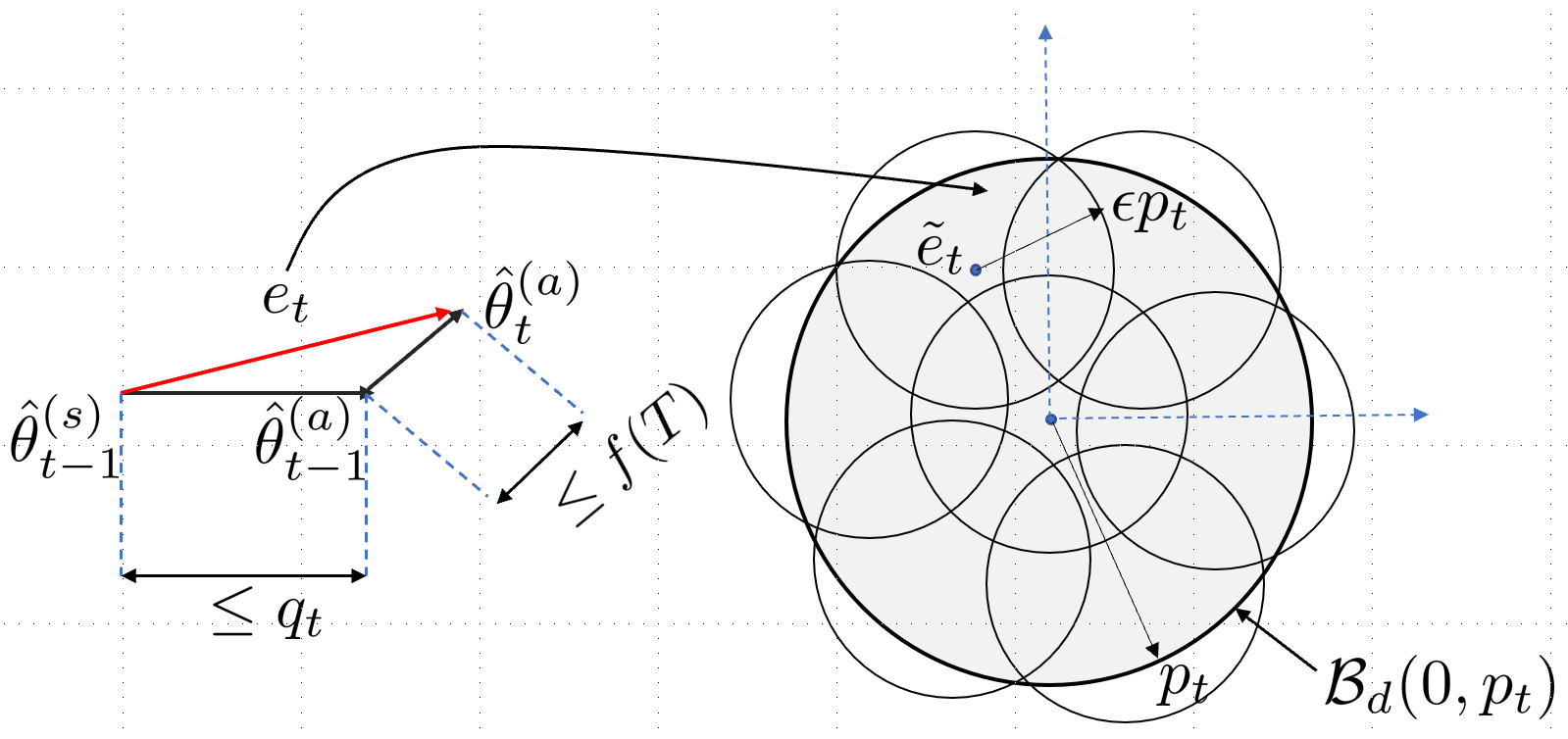}
  \caption{Illustration of the  encoding technique in Algorithm   \ref{algo:Encoder}. The agent computes the innovation signal $e_t$ that belongs to $\mathcal{B}_d(0,p_t)$ with high probability. An $\epsilon p_t$-net of $\mathcal{B}_d(0,p_t)$ is constructed, and the center $\tilde{e}_t$ of the ball containing $e_t$ is decoded by the server.}
  \label{fig:covering}
\end{figure} 

\begin{algorithm}[H]
\caption{Adaptive Encoding at the Agent}
\label{algo:Encoder}  
 \begin{algorithmic}[1] 
\Statex \hspace{-5mm} \textbf{Input Parameters:} $\hat{\theta}^{(s)}_{\bar{T}}$ is any arbitrary vector in $\Theta$; $q_{\bar{T}}=10M$; and $f(T)=\frac{3}{5L}\sqrt{\frac{\beta_T}{T\log(dLT)}}$.
\For {$t\in \{\bar{T}+1, \ldots, T\}$}
\State Observe $y_t$; compute $\hat{\theta}^{(a)}_t=V^{-1}_t \sum_{s=1}^{t}a_{s}y_{s}$ and innovation $e_t=\hat{\theta}^{(a)}_t-\hat{\theta}^{(s)}_{t-1}$. 
\State Encode $e_t$ by constructing an $\epsilon p_t$-net of $\mathcal{B}_d(0,p_t)$, where
\begin{equation}
  q_{t}=\epsilon \left( q_{t-1} + f(T)\right); \hspace{2mm}  p_t=q_t+f(T).
\label{eqn:quantizer_eqs}
\end{equation}
\EndFor
\end{algorithmic}
 \end{algorithm}
 
\textbf{Summary of Encoding Strategy.} The overall encoding technique in Algorithm \ref{algo:Encoder} can be summarized as follows. At each time-step $t \geq \bar{T}+1$, the agent observes $y_t$, computes $\hat{\theta}^{(a)}_t$ as per \eqref{eqn:least_square}, and then evaluates the innovation signal $e_t=\hat{\theta}^{(a)}_t-\hat{\theta}^{(s)}_{t-1}$. Given that $e_t \in \mathcal{B}_d(0,p_t)$ with high probability (as justified by Lemma \ref{lemma:enc_dec}),  the region  $\mathcal{B}_d(0,p_t)$ is covered by balls of radius $\epsilon p_t$, where $\epsilon\in (0,1)$ is a pre-decided constant, i.e., the agent constructs an $\epsilon p_t$-net of $\mathcal{B}_d(0,p_t)$.\footnote{For a discussion on constructing such coverings, see \cite{dumer,verger} and the references therein.} The agent then determines the ball $e_t$ falls into, and transmits the symbol $\sigma\in\Sigma$ corresponding to that ball.\footnote{In case $e_t$ lands on the boundary of more than one ball, it is assigned the label/symbol of any one of those balls based on a fixed priority rule.} If $e_t$ falls outside $\mathcal{B}_d(0,p_t)$, the agent transmits a special symbol to indicate an overflow. We succinctly represent the entire operation described above by a dynamic encoder map $\mathcal{E}_t$ that takes as input $e_t$  and generates as output the symbolic encoding $\sigma_t \in \Sigma$ that is transmitted to the server.

\textbf{Decoding at Server.} As a basic requirement for correct decoding, we assume that the server is aware of the encoding operation at the agent. Moreover, note that the sequences $\{p_t\}$ and $\{q_t\}$  defined in Eq. \eqref{eqn:quantizer_eqs} are deterministic, and can be computed by the server at its end. Thus, at any time-step $t\geq \bar{T}+1$, the server is aware of the region $\mathcal{B}_d(0,p_t)$ being encoded. Upon receiving $\sigma_t$, the server can thus correctly determine the center $\tilde{e}_t$ of the ball containing  $e_t$. We represent the above decoding operation at time $t$ by the decoder map $\mathcal{D}_t$ that takes as input $\sigma_t$ and outputs $\tilde{e}_t$. Having decoded the innovation signal, the server computes an estimate $\hat{\theta}^{(s)}_t$ of $\theta_*$ as per line 7 of Algo. \ref{algo:ICLUCB}. The agent computes $\hat{\theta}^{(s)}_t$ on its end as well in order to evaluate the innovation signal at time $t+1$; see line 2 of Algo. \ref{algo:Encoder}. This concludes the description of the encoding-decoding operation, a pictorial illustration of which is presented in Figure \ref{fig:covering}.

Till now, we have only described how to transit information about $\theta_*$ from the agent to the server over a finite-capacity channel. However, the key question that remains unanswered is the following: \textit{How should the server take decisions that yield low cumulative regret while accounting for the additional uncertainty introduced by the channel?} We now turn to answering this question. 

\begin{algorithm}[t]
\caption{Information Constrained \texttt{LinUCB}  (\texttt{IC-LinUCB})}
\label{algo:ICLUCB}  
 \begin{algorithmic}[1] 
\Statex \hspace{-5mm} \textbf{Input Parameters:} $\bar{T}=\ceil{10 L^2d\sqrt{T}\log(dLT)}$.
\Statex \hspace{-5mm} \textbf{Phase I:} \textit{Pure Exploration}
\For {$t\in \{1, \ldots, \bar{T}+1\}$} 
\State Server plays $a_t \sim  \textrm{Unif}(\mathbb{S}^{d-1})$. 
\State Agent receives reward $y_t$ as per \eqref{eqn:obs_model} and computes estimate  $\hat{\theta}^{(a)}_t=V^{-1}_t \sum_{s=1}^{t}a_{s}y_{s}$.
\EndFor
\State Agent encodes $e_{\bar{T}+1}=\hat{\theta}^{(a)}_{\bar{T}+1}-\hat{\theta}^{(s)}_{\bar{T}}$ as per Algo. \ref{algo:Encoder}, and transmits $\sigma_{\bar{T}+1}=\mathcal{E}_{\bar{T}+1}\left(e_{\bar{T}+1}\right)$. 
\Statex \hrulefill
\Statex \hspace{-5mm} \textbf{Phase II:} \textit{Information-Constrained Exploration-Exploitation}
 \For {$t\in \{\bar{T}+2, \ldots, T\}$}
\State Server decodes $\tilde{e}_{t-1} =\mathcal{D}_{t-1}(\sigma_{t-1})$, and generates $\hat{\theta}^{(s)}_{t-1}=\hat{\theta}^{(s)}_{t-2}+\tilde{e}_{t-1}$. 
\State Server constructs confidence ellipsoid: 
\begin{equation}
    \mathcal{C}^{(s)}_t = \{\theta\in\mathbb{R}^d: {\Vert \theta - \hat{\theta}^{(s)}_{t-1} \Vert}_{V_{t-1}} \leq \sqrt{\beta_T} + \textcolor{black}{\left(\sqrt{\lambda+(t-1)L^2}\right) q_t\}},
\label{eqn:dec_conf_int}
\end{equation}
where  $V_{t-1}$, $q_t$, and $\sqrt{\beta_T}$ are given by \eqref{eqn:least_square}, \eqref{eqn:quantizer_eqs}, and \eqref{eqn:conf_radius}, respectively. 
\State Server plays action $a_t = \argmax_{a\in\mathcal{A}_t} \max_{\theta\in\mathcal{C}^{(s)}_t} \langle \theta, a \rangle$. 
\State Agent receives reward $y_t$ as per \eqref{eqn:obs_model} and computes estimate  $\hat{\theta}^{(a)}_t=V^{-1}_t \sum_{s=1}^{t}a_{s}y_{s}$.
\State Agent encodes the innovation $e_t=\hat{\theta}^{(a)}_t-\hat{\theta}^{(s)}_{t-1}$ as per Algo. \ref{algo:Encoder}, and transmits $\sigma_t = \mathcal{E}_t(e_t)$. 
\EndFor
\end{algorithmic}
 \end{algorithm}

$\bullet$ \textbf{Decision-Making at the Server.} When there is no loss of information over the channel, i.e., when $\hat{\theta}^{(s)}_t=\hat{\theta}^{(a)}_t$, the celebrated \texttt{LinUCB} algorithm relies on the principle of \textit{optimism in the face of uncertainty.} Specifically, at each time-step, an ellipsoidal confidence set is constructed that contains $\theta_*$ with high-probability. The learner then acts optimistically by playing an action that yields the highest reward over all possible values of $\theta$ in the confidence set. While  our approach builds on the same high-level principle, it  relies crucially on the construction of a new ``inflated" ellipsoidal confidence set: 
\begin{equation}
    \mathcal{C}^{(s)}_t = \{\theta\in\mathbb{R}^d: {\Vert \theta - \hat{\theta}^{(s)}_{t-1} \Vert}_{V_{t-1}} \leq \sqrt{\beta_T} + \textcolor{myblue}{\left(\sqrt{\lambda+(t-1)L^2}\right) q_t\}},
\end{equation}
where 
\begin{equation}
    \sqrt{\beta_T} = \sqrt{\lambda}M + \sqrt{2\log\left(\frac{1}{\delta}\right)+d\log\left(\frac{d\lambda+TL^2}{d\lambda}\right)}.
\label{eqn:conf_radius}
\end{equation}
Here, recall that $M \geq 1$ is such that for each $\theta\in\Theta$, ${\Vert \theta \Vert}_2 \leq M$; $\delta\in(0,1)$ is a constant to be picked later.  Notably, when $\hat{\theta}^{(s)}_{t-1}=\hat{\theta}^{(a)}_{t-1}$, and $q_t=0$, $\mathcal{C}^{(s)}_t$ reduces to the confidence set in \texttt{LinUCB}. The inflation in the radius of the confidence set (relative to \texttt{LinUCB}) carefully accounts for the quantization errors resulting from the finite capacity of the channel. Our main technical contribution in this regard is to establish that $\forall t\geq \bar{T}+2$, $\theta_* \in \mathcal{C}^{(s)}_t$ with high probability; see Lemma \ref{lemma:dec_conf_region} in Section \ref{sec:guarantees}. This result, in turn,  justifies the optimistic decision-making rule of \texttt{IC-LinUCB} in line 9 of Algorithm  \ref{algo:ICLUCB}. During the pure exploration phase, the server simply samples actions independently from the uniform distribution over $\mathbb{S}^{d-1}$, i.e., $a_t \sim  \textrm{Unif}(\mathbb{S}^{d-1}), \forall t\in[\bar{T}+1]$.\footnote{To be more precise, a random variable $Z$ is uniformly distributed on $\mathbb{S}^{d-1}$ if, for every Borel subset $\mathcal{K} \subset \mathbb{S}^{d-1}$, the probability $\mathbb{P}(Z \in \mathcal{K})$ equals the ratio of the $(d-1)$-dimensional areas of $\mathcal{K}$ and $\mathbb{S}^{d-1}$.} At every time-step $t\in[T]$, the action $a_t$ decided upon by the server is passed down to the agent without any loss of information.

We summarize below the essential steps of our approach. 
\begin{itemize}
\item \textbf{Bound Gap between Successive Model Estimates.} Bound the gap $\hat{\theta}^{(a)}_{t}-\hat{\theta}^{(a)}_{t-1}$ between successive estimates of $\theta_*$ (at the agent), and argue that this gap is eventually small due to the pure exploration phase; see Remark \ref{rem:Pureexp}. 

\item \textbf{Characterize Encoding Region.} Using the bound from the above step, encode the innovation signal $e_t=\hat{\theta}^{(a)}_{t}-\hat{\theta}^{(s)}_{t-1}$ by characterizing the region that contains $e_t$ with high-probability.

\item \textbf{Construct Inflated Confidence Set.} Construct a confidence set that (i) is centered at the server's estimate of $\theta_*$, namely $\hat{\theta}^{(s)}_t$, (ii) accounts for encoding errors, and (iii) contains $\theta_*$ with high-probability. Play actions optimistically w.r.t. this confidence set. 
\end{itemize}

In Section \ref{sec:gen_lin}, we will demonstrate that the above ideas are applicable to parameterized observation models well beyond the specific linear model considered in Eq. \eqref{eqn:obs_model}. As such, our approach provides a general recipe for decision-making under information constraints. Before moving on to the performance guarantees of \texttt{IC-LinUCB}, a couple of important remarks are in order. 

\begin{remark} \label{rem:Pureexp}  (\textbf{Need for Pure Exploration Phase}) Note that $\forall t\geq \bar{T}+1$, $\lambda_{min}(V_t) \geq \lambda_{min}(V_{\bar{T}})$. Moreover, for $\bar{T}$ as chosen in Algorithm  \ref{algo:ICLUCB}, the pure exploration phase ensures that $\lambda_{min}(V_{\bar{T}})$ is bounded below by a linear function of $\bar{T}$.  The above two facts collectively imply that the gap between consecutive estimates of $\theta_*$ (at the agent) is eventually small - a key requirement for our encoding strategy. This point is made precise in Lemma \ref{lemma:iterate_bound} of Appendix \ref{app:ThmICLinUCB}. The duration $\bar{T}$ of the pure exploration phase is chosen carefully to ensure that the term $\left(\sqrt{\lambda+(t-1)L^2}\right) q_t$ in $\mathcal{C}^{(s)}_t$ is eventually $O\left(\sqrt{\beta_T}\right)$. In other words, such a choice of $\bar{T}$ enables us to preserve order-optimal regret despite taking actions based on an inflated confidence set. 
\end{remark}

\begin{remark} (\textbf{Horizon-Independent Channel Capacity}) In our encoding technique, although the radii of the balls used to cover the encoding region shrink over time, the radius of the ball being encoded shrinks commensurately. This is precisely what enables us to achieve order-optimal regret with a channel capacity $B=O(d)$ bits that is independent of the time-horizon $T$. Roughly speaking, the main intuition here is that to achieve the same level of encoding precision, it takes fewer bits to encode the innovation $e_t$ as compared to the model $\theta_*$. This is because while $e_t$ resides (with high probability) in a ball of progressively shrinking radius, $\theta_*$ belongs to the set $\Theta$ of fixed radius.  
\end{remark}
%----------- Analysis of IC-LinUCB---
%------------------------------------
\section{Analysis of the \texttt{IC-LinUCB} Algorithm}
\label{sec:guarantees}
Our main result concerning the performance of the \texttt{IC-LinUCB} algorithm is as follows.

\begin{theorem} (\textbf{Regret of \texttt{IC-LinUCB}}) \label{thm:ICLinUCB}
Suppose Assumption \ref{ass:actions} holds, and let the channel capacity satisfy $B \geq 6d$. Then, with $\epsilon=1/2$ and $\delta=1/T$ in Algorithm \ref{algo:ICLUCB}, the \texttt{IC-LinUCB} algorithm guarantees:
\begin{equation}
    R_T = O\left(L^2 d \sqrt{T} \log(dLT)\right) = \tilde{O}\left(d\sqrt{T}\right).
\end{equation}
\end{theorem}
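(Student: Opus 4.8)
The plan is to run the standard optimism-based regret analysis of \texttt{LinUCB}, but with the inflated confidence set $\mathcal{C}^{(s)}_t$ in place of the usual one, and to treat the pure-exploration phase separately. First I would assemble, on a single high-probability ``good'' event $\mathcal{G}$, the three facts the encoding scheme is engineered to guarantee: (a) after Phase~I, consecutive agent estimates are close, $\|\hat{\theta}^{(a)}_t - \hat{\theta}^{(a)}_{t-1}\|_2 \le f(T)$ (Lemma~\ref{lemma:iterate_bound}); (b) consequently the innovation stays in its ball, $e_t \in \mathcal{B}_d(0,p_t)$, so decoding is exact up to $\epsilon p_t$ and the accumulated quantization error obeys $\|\hat{\theta}^{(a)}_t - \hat{\theta}^{(s)}_t\|_2 \le q_{t+1}$ (Lemma~\ref{lemma:enc_dec}); and (c) $\theta_* \in \mathcal{C}^{(s)}_t$ for all $t \ge \bar{T}+2$ (Lemma~\ref{lemma:dec_conf_region}). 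Fact (c) follows from $\|\theta_* - \hat{\theta}^{(s)}_{t-1}\|_{V_{t-1}} \le \|\theta_* - \hat{\theta}^{(a)}_{t-1}\|_{V_{t-1}} + \sqrt{\lambda_{\max}(V_{t-1})}\,\|\hat{\theta}^{(a)}_{t-1} - \hat{\theta}^{(s)}_{t-1}\|_2$, bounding the first term by $\sqrt{\beta_T}$ via the self-normalized martingale bound of \cite{abbasi} and the second by $\sqrt{\lambda+(t-1)L^2}\,q_t$ using $\lambda_{\max}(V_{t-1}) \le \lambda+(t-1)L^2$ and fact (b) at step $t-1$ — this is exactly the inflation in $\mathcal{C}^{(s)}_t$. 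With $\delta = 1/T$ I would arrange $\mathbb{P}(\mathcal{G}^c) = O(1/T)$.

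Given $\mathcal{G}$, I would split $R_T$ into the Phase-I and Phase-II contributions. Phase~I lasts $\bar{T}+1$ steps and, by Assumption~\ref{ass:actions}-(i), each instantaneous regret is at most $1$, so it contributes at most $\bar{T}+1 = O(L^2 d\sqrt{T}\log(dLT))$ — in fact the dominant term in the final bound. For Phase~II, on $\mathcal{G}$ I invoke optimism: if $\tilde{\theta}_t$ attains the inner maximum in line~9 of Algorithm~\ref{algo:ICLUCB}, then since $\theta_* \in \mathcal{C}^{(s)}_t$ we have $\langle \tilde{\theta}_t, a_t\rangle \ge \max_{a\in\mathcal{A}_t}\langle\theta_*, a\rangle$, whence $r_t \le \langle \tilde{\theta}_t - \theta_*, a_t\rangle \le \|\tilde{\theta}_t - \theta_*\|_{V_{t-1}}\|a_t\|_{V_{t-1}^{-1}} \le 2\rho_t\,(\|a_t\|_{V_{t-1}^{-1}}\wedge 1)$, where $\rho_t = \sqrt{\beta_T} + \sqrt{\lambda+(t-1)L^2}\,q_t$ is the radius of $\mathcal{C}^{(s)}_t$ and I also use $r_t \le 1$. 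The contribution of $\mathcal{G}^c$ is at most $T\cdot\mathbb{P}(\mathcal{G}^c) = O(1)$.

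The crux is bounding $\sum_t 2\rho_t\,(\|a_t\|_{V_{t-1}^{-1}}\wedge 1)$ and showing the inflation does not spoil the rate. I would split $\rho_t$ into $\sqrt{\beta_T}$ and the inflation, and split $q_t$ using the closed form of the recursion \eqref{eqn:quantizer_eqs}: with $\epsilon=1/2$ its fixed point is $f(T)$, so $q_t \le 10M\,2^{-(t-\bar{T})} + f(T)$. For the $\sqrt{\beta_T}$ and $f(T)$ pieces, the key observation is that $\sqrt{\lambda+(t-1)L^2}\,f(T) \le \tfrac{3}{5}\sqrt{\beta_T/\log(dLT)} = O(\sqrt{\beta_T})$ for all $t\le T$ — precisely what the definition of $f(T)$ is built to deliver — so both behave like the standard radius; Cauchy--Schwarz together with the elliptical-potential (determinant) lemma gives $\sum_t(\|a_t\|_{V_{t-1}^{-1}}\wedge 1) \le \sqrt{T\cdot 2d\log(1+TL^2/(d\lambda))}$, yielding $O(\sqrt{\beta_T}\cdot\sqrt{dT\log(dLT)}) = O(d\sqrt{T}\log(dLT))$. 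For the geometric piece $10M\,2^{-(t-\bar{T})}$, using $\|a_t\|_{V_{t-1}^{-1}}\wedge 1 \le 1$ and $\sqrt{\lambda+(t-1)L^2} \le \sqrt\lambda + L\sqrt{t}$, the sum $\sum_{s\ge 2}(\sqrt\lambda + L\sqrt{\bar{T}+s})2^{-s}$ collapses to $O(ML\sqrt{\bar{T}}) = O(ML^2\sqrt{d}\,T^{1/4}\sqrt{\log(dLT)}) = o(d\sqrt{T})$ and is absorbed. Combining the two phases gives $R_T = O(L^2 d\sqrt{T}\log(dLT)) = \tilde{O}(d\sqrt{T})$.

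Finally I would record the capacity bound: encoding the $\epsilon p_t$-net of $\mathcal{B}_d(0,p_t)$ with $\epsilon=1/2$ needs at most $(1+2/\epsilon)^d = 5^d$ codewords plus one overflow symbol, so $2^B \ge 5^d+1$ suffices and $B \ge 6d$ leaves ample room. The genuine obstacle I expect is facts (b)--(c): proving that quantization errors do not accumulate across the horizon. This is where the coupled design pays off — the recursion is arranged so that the one-step decoding error $\epsilon p_t$ equals $q_{t+1}$, and adding the fresh-innovation bound $f(T)$ reproduces $p_{t+1}=q_{t+1}+f(T)$, closing an induction that keeps $\|\hat{\theta}^{(a)}_t - \hat{\theta}^{(s)}_t\|_2 \le q_{t+1}$ uniformly in $t$. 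Establishing the premise of that induction — the $f(T)$ bound on successive agent estimates — is itself nontrivial and rests on a matrix-concentration argument showing the pure-exploration phase forces $\lambda_{\min}(V_{\bar{T}}) = \Omega(\bar{T}/d)$, which is exactly what pins down the length $\bar{T}$ and makes it simultaneously large enough for encoding and small enough to preserve the rate.
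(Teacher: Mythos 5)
Your proposal is correct and follows essentially the same route as the paper: the same clean event (Matrix Bernstein for $\lambda_{\min}(V_{\bar{T}})$, noise and confidence-region bounds), the same induction showing $e_t \in \mathcal{B}_d(0,p_t)$ with one-step decoding error $\epsilon p_t = q_{t+1}$, the same triangle-inequality argument validating the inflated set $\mathcal{C}^{(s)}_t$, optimism plus the elliptical-potential lemma for Phase II, the trivial per-step bound for Phase I, and the same $5^d+1$ covering-number count giving $B \geq 6d$. The only (immaterial) difference is bookkeeping for the inflation term: the paper waits an extra $\tilde{T} = O(\log(dLT))$ steps after $\bar{T}$, charging trivial regret there so that $\left(\sqrt{\lambda+(t-1)L^2}\right)q_t$ is uniformly $O(\sqrt{\beta_T/\log(dLT)})$ afterwards, whereas you split $q_t \leq 10M\,2^{-(t-\bar{T})} + f(T)$ and sum the geometric transient directly over Phase II --- both yield the same $O(L^2 d \sqrt{T}\log(dLT))$ bound.
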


\textbf{Discussion.} We note that for the \texttt{IC-LinUCB} algorithm, the dependence of the regret on $d$ and $T$ exactly matches that of \texttt{LinUCB}. Thus, our work is the first to establish that with a horizon-independent channel capacity of $O(d)$ bits, one can achieve the same performance as when the channel has infinite capacity. Thus, Theorem \ref{thm:ICLinUCB} can be seen as an extension of the results in \cite{dani,abbasi} to the communication-constrained setting of interest in this work.
Interestingly, \cite{mayekar} recently showed that for stochastic optimization with $d$-dimensional quantized gradients, a bit-rate of $\Omega(d)$ is necessary for achieving the optimal convergence rate of $O(1/\sqrt{T})$, where $T$ is the number of iterations. We conjecture that to achieve order-optimal regret, a similar lower-bound of $B=\Omega(d)$ will hold for our setup as well; we leave verifying this as future work. 

We prove Theorem \ref{thm:ICLinUCB} in Appendix \ref{app:ThmICLinUCB}. In what follows, we briefly outline the key technical steps in the proof.

\textbf{Outline of the proof.} We start by constructing an appropriate ``clean" event $\mathcal{G}$ of measure at least $1-5/T$, and condition on this event throughout the subsequent analysis. There are three main steps in the proof of Theorem \ref{thm:ICLinUCB}, and we describe them below. 

$\bullet$ \textbf{Step 1.} On the clean event $\mathcal{G}$, we argue that the gap between successive model estimates at the agent is eventually small. More precisely, in Lemma  \ref{lemma:iterate_bound} of Appendix \ref{app:ThmICLinUCB}, we establish that 
$$ {\Vert \hat{\theta}^{(a)}_{t+1}-\hat{\theta}^{(a)}_{t}\Vert}_2 \leq f(T), \forall t \geq \bar{T}, $$
where $f(T)$ is as defined in the input parameters of Algorithm \ref{algo:Encoder}. The proof of Lemma \ref{lemma:iterate_bound} in turn relies on the fact that with high probability,
$$ \lambda_{\min}(V_t) \geq 5 L^2 \sqrt{T} \log(dLT), \forall t\geq \bar{T}. $$
The above claim is established in Lemma \ref{lemma:eigen_cov} of Appendix \ref{app:ThmICLinUCB} by appealing to the Matrix Bernstein inequality \cite[Theorem 5.4.1]{versh}. 
\vspace{2mm}

$\bullet$ \textbf{Step 2.} The next key result justifies the encoding strategy in Algorithm \ref{algo:Encoder}.\footnote{We note that results of a similar conceptual flavor are established in \cite{tatikonda} and \cite{kostina} in the context of stabilization of an LTI system, and optimization, respectively. While the results in these papers pertain to deterministic settings, Lemma \ref{lemma:enc_dec} carefully exploits statistical concentration bounds specific to the stochastic process we study.} 

\begin{lemma} (\textbf{Encoding Region}) The following is true with probability at least $1-5/T$:
$$ e_t \in \mathcal{B}_{d}(0,p_t), \forall t \in \{\bar{T}+1, \ldots, T\},$$
where $e_t$ is the innovation in line 2 of Algorithm \ref{algo:Encoder}, and $p_t$ is as defined in Eq.  \eqref{eqn:quantizer_eqs}.
\label{lemma:enc_dec}
\end{lemma}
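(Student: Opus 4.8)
The plan is to condition throughout on the clean event $\mathcal{G}$ (of probability at least $1-5/T$) and to prove the claim by induction on $t$ from $\bar{T}+1$ to $T$. The central object I would track is the \emph{quantization error} $\Delta_t := {\Vert \hat{\theta}^{(a)}_t - \hat{\theta}^{(s)}_t \Vert}_2$, i.e., the discrepancy between the agent's and the server's estimates of $\theta_*$ at the same time-step. The key observation is that the recursion \eqref{eqn:quantizer_eqs} is self-similar: since $p_t = q_t + f(T)$, we have $q_{t+1} = \epsilon(q_t + f(T)) = \epsilon p_t$, and therefore $p_{t+1} = q_{t+1} + f(T) = \epsilon p_t + f(T)$. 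This identity is what couples the shrinking net radius to the shrinking encoding region, and it is what will drive the induction.

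For the inductive step I would argue as follows. Suppose $e_t \in \mathcal{B}_d(0,p_t)$. Then the $\epsilon p_t$-net construction guarantees that the decoded center $\tilde{e}_t$ satisfies ${\Vert e_t - \tilde{e}_t \Vert}_2 \leq \epsilon p_t$; since $\hat{\theta}^{(a)}_t = \hat{\theta}^{(s)}_{t-1} + e_t$ and $\hat{\theta}^{(s)}_t = \hat{\theta}^{(s)}_{t-1} + \tilde{e}_t$, this yields $\Delta_t = {\Vert e_t - \tilde{e}_t \Vert}_2 \leq \epsilon p_t$. I would then decompose the next innovation by the triangle inequality,
\[ {\Vert e_{t+1} \Vert}_2 = {\Vert \hat{\theta}^{(a)}_{t+1} - \hat{\theta}^{(s)}_{t} \Vert}_2 \leq {\Vert \hat{\theta}^{(a)}_{t+1} - \hat{\theta}^{(a)}_{t} \Vert}_2 + {\Vert \hat{\theta}^{(a)}_{t} - \hat{\theta}^{(s)}_{t} \Vert}_2, \]
bound the first term by $f(T)$ using Lemma \ref{lemma:iterate_bound} (valid for all $t \geq \bar{T}$ on $\mathcal{G}$) and the second by $\Delta_t \leq \epsilon p_t$, and conclude ${\Vert e_{t+1} \Vert}_2 \leq f(T) + \epsilon p_t = p_{t+1}$ via the identity above. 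Hence $e_{t+1} \in \mathcal{B}_d(0,p_{t+1})$, which closes the step.

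It remains to establish the base case at $t = \bar{T}+1$, which I expect to be the only genuinely delicate point. Here $e_{\bar{T}+1} = \hat{\theta}^{(a)}_{\bar{T}+1} - \hat{\theta}^{(s)}_{\bar{T}}$, and there is no prior quantization guarantee to invoke; instead I would control $\Delta_{\bar{T}} = {\Vert \hat{\theta}^{(a)}_{\bar{T}} - \hat{\theta}^{(s)}_{\bar{T}} \Vert}_2$ directly. Since $\hat{\theta}^{(s)}_{\bar{T}} \in \Theta$ has norm at most $M$, and since on $\mathcal{G}$ the standard self-normalized bound ${\Vert \hat{\theta}^{(a)}_{\bar{T}} - \theta_* \Vert}_{V_{\bar{T}}} \leq \sqrt{\beta_T}$ combined with the eigenvalue lower bound $\lambda_{\min}(V_{\bar{T}}) \geq 5L^2\sqrt{T}\log(dLT)$ from Lemma \ref{lemma:eigen_cov} forces ${\Vert \hat{\theta}^{(a)}_{\bar{T}} - \theta_* \Vert}_2$ to be negligible (using $T \geq d^2$ and $M \geq 1$), I obtain $\Delta_{\bar{T}} \leq 2M + o(1) \leq 3M$. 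Combining with the increment bound gives ${\Vert e_{\bar{T}+1} \Vert}_2 \leq f(T) + 3M$, which is at most $p_{\bar{T}+1} = \epsilon(q_{\bar{T}}+f(T)) + f(T) = 5M + \tfrac{3}{2}f(T)$ for $\epsilon = 1/2$ and $q_{\bar{T}} = 10M$; the conservative choice $q_{\bar{T}} = 10M$ is precisely what provides the slack needed here. The main obstacle is thus not any single inequality but ensuring that the error-propagation recursion and the quantizer recursion \eqref{eqn:quantizer_eqs} are matched so that the contraction factor $\epsilon$ acting on $\Delta_t$ exactly compensates the additive $f(T)$ contributed by each new observation; once the identity $p_{t+1} = \epsilon p_t + f(T)$ is in place the induction closes, and the entire failure probability is absorbed into $\mathcal{G}$, giving the claimed $1-5/T$ guarantee.
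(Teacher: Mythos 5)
Your proposal is correct and follows essentially the same route as the paper's proof in Appendix \ref{app:ThmICLinUCB}: induction conditioned on the clean event $\mathcal{G}$, with the induction step driven by Lemma \ref{lemma:iterate_bound}, the $\epsilon p_t$-net property ${\Vert e_t - \tilde{e}_t \Vert}_2 \leq \epsilon p_t$, and the recursion identity $p_{t+1} = \epsilon p_t + f(T)$, and with the base case handled via the confidence region and the eigenvalue lower bound from Lemma \ref{lemma:eigen_cov}. One small imprecision: the estimation error ${\Vert \hat{\theta}^{(a)}_{\bar{T}} - \theta_* \Vert}_2$ is not $o(1)$ but only $O(M)$ (the paper's computation in Eq. \eqref{eqn:clean_up} bounds it by $3M$), so your base-case bound should read $\Delta_{\bar{T}} \leq 2M + 3M = 5M$ rather than $3M$; the slack built into $q_{\bar{T}} = 10M$, i.e., $p_{\bar{T}+1} = 5M + \tfrac{3}{2} f(T)$, still absorbs this, so the argument closes as claimed.
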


The above result tells us that the innovation random variable $e_t$ always falls within the desired encoding region on the event $\mathcal{G}$, i.e., with high probability, there is never any overflow. It is easy to argue that $B=O(d)$ bits suffice to construct an $\epsilon p_t$ net of $\mathcal{B}_d(0,p_t)$. 
\vspace{2mm}

$\bullet$ \textbf{Step 3.} It remains to justify the choice of the confidence set $\mathcal{C}^{(s)}_t$ in Eq. \eqref{eqn:dec_conf_int}. This is achieved in the following lemma. 

\begin{lemma} (\textbf{Confidence Region at Server}) With probability at least $1-5/T$,  the following is true: $\theta_* \in \mathcal{C}^{(s)}_t, \forall t \in \{\bar{T}+2, \ldots, T\}$, where $\mathcal{C}^{(s)}_t$ is the confidence set defined in Eq. \eqref{eqn:dec_conf_int}. 
Moreover, $\forall t \geq \bar{T}+\tilde{T}$, we have 
\begin{equation}
   \left(\sqrt{\lambda+(t-1)L^2}\right) q_t \leq 4 \sqrt{\frac{\beta_T}{\log(dLT)}},
\end{equation}
where 
\begin{equation}
    \tilde{T} =  \ceil*{\frac{\log\left(\frac{10M}{f(T)}\right)}{\log(2)}} \vee 2 = O\left(\log(dLT)\right).
\end{equation}
\label{lemma:dec_conf_region}
\end{lemma}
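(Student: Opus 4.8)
\textbf{The plan is to} condition on the clean event $\mathcal{G}$ (of measure at least $1-5/T$), on which both the uniform self-normalized confidence bound for the agent's ridge estimate and the no-overflow guarantee of Lemma~\ref{lemma:enc_dec} hold, and then argue deterministically. For the containment claim I would start from the triangle inequality in the weighted norm,
\[
\Vert \theta_* - \hat{\theta}^{(s)}_{t-1}\Vert_{V_{t-1}} \leq \Vert \theta_* - \hat{\theta}^{(a)}_{t-1}\Vert_{V_{t-1}} + \Vert \hat{\theta}^{(a)}_{t-1} - \hat{\theta}^{(s)}_{t-1}\Vert_{V_{t-1}}.
\]
The first term is bounded by $\sqrt{\beta_T}$ via the standard self-normalized martingale tail bound for ridge regression applied to the agent's estimate; the precise form of $\beta_T$ in \eqref{eqn:conf_radius} comes from the determinant--trace inequality $\log(\det V_{t-1}/\det(\lambda I_d)) \leq d\log((d\lambda + TL^2)/(d\lambda))$ together with $\Vert\theta_*\Vert_2\le M$ and $\delta=1/T$. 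Since this bound is uniform in $t$ on $\mathcal{G}$, the first term is controlled simultaneously for all relevant time-steps.

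The second term is the crux, and the key structural observation is that the \emph{quantization error does not accumulate}. Combining the server update $\hat{\theta}^{(s)}_{t-1} = \hat{\theta}^{(s)}_{t-2} + \tilde{e}_{t-1}$ with $e_{t-1} = \hat{\theta}^{(a)}_{t-1} - \hat{\theta}^{(s)}_{t-2}$ gives the telescoping identity $\hat{\theta}^{(a)}_{t-1} - \hat{\theta}^{(s)}_{t-1} = e_{t-1} - \tilde{e}_{t-1}$, so the agent--server gap is a \emph{single-step} rounding error rather than a sum of past errors. On $\mathcal{G}$, Lemma~\ref{lemma:enc_dec} places $e_{t-1}\in\mathcal{B}_d(0,p_{t-1})$, so $\tilde{e}_{t-1}$ is the center of an $\epsilon p_{t-1}$-net ball containing it, whence $\Vert e_{t-1}-\tilde{e}_{t-1}\Vert_2 \le \epsilon p_{t-1}$. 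The identity $q_t = \epsilon(q_{t-1}+f(T)) = \epsilon p_{t-1}$ from \eqref{eqn:quantizer_eqs} converts this into $\Vert \hat{\theta}^{(a)}_{t-1}-\hat{\theta}^{(s)}_{t-1}\Vert_2 \le q_t$. Passing to the weighted norm via $\lambda_{\max}(V_{t-1}) \le \lambda + (t-1)L^2$ (each rank-one term $a_s a'_s$ contributes $\Vert a_s\Vert_2^2\le L^2$) bounds the second term by $\sqrt{\lambda+(t-1)L^2}\,q_t$. Summing the two matches the radius of $\mathcal{C}^{(s)}_t$, giving $\theta_*\in\mathcal{C}^{(s)}_t$ for all $t\ge\bar{T}+2$.

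The shrinkage claim is a purely deterministic calculation. With $\epsilon=1/2$ the recursion $q_t=\tfrac12(q_{t-1}+f(T))$ has fixed point $f(T)$ and solves to $q_t = f(T) + 2^{-(t-\bar{T})}(q_{\bar{T}}-f(T))$ with $q_{\bar{T}}=10M$. The definition of $\tilde{T}$ guarantees $2^{-\tilde{T}}\cdot 10M \le f(T)$, so $q_t \le 2f(T)$ for all $t\ge\bar{T}+\tilde{T}$. Substituting $f(T)=\tfrac{3}{5L}\sqrt{\beta_T/(T\log(dLT))}$ and the bound $\sqrt{\lambda+(t-1)L^2}\le 2L\sqrt{T}$ (valid since $t\le T$ and $T\ge d^2$ is large relative to the fixed $\lambda$) yields $\sqrt{\lambda+(t-1)L^2}\,q_t \le \tfrac{12}{5}\sqrt{\beta_T/\log(dLT)} \le 4\sqrt{\beta_T/\log(dLT)}$, the constant $4$ absorbing the slack. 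The order estimate $\tilde{T}=O(\log(dLT))$ then follows from $\beta_T = O(d\log(dLT))$, which makes $f(T)$ scale like $\sqrt{d}/(L\sqrt{T})$ up to logarithmic factors, so $10M/f(T)$ is polynomially bounded in $d,L,T$ and its logarithm is $O(\log(dLT))$.

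\textbf{The main obstacle} is the second term of the containment argument; the rest is routine. The entire payoff of the encoder design is the telescoping identity $\hat{\theta}^{(a)}_{t-1}-\hat{\theta}^{(s)}_{t-1}=e_{t-1}-\tilde{e}_{t-1}$, which replaces a potentially accumulating compression error by a single rounding error tied exactly to $q_t=\epsilon p_{t-1}$; recognizing this, rather than naively summing per-step errors, is what prevents the inflation term from blowing up. A secondary subtlety is that the self-normalized bound and the no-overflow event of Lemma~\ref{lemma:enc_dec} must hold \emph{simultaneously} across all $t$, which is precisely why the argument is carried out after conditioning on the single clean event $\mathcal{G}$ rather than invoking concentration afresh at each time-step.
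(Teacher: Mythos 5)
Your proposal is correct and follows essentially the same route as the paper's proof: conditioning on the clean event $\mathcal{G}$, the triangle-inequality split of ${\Vert \theta_* - \hat{\theta}^{(s)}_{t-1}\Vert}_{V_{t-1}}$, the single-step rounding identity $\hat{\theta}^{(a)}_{t-1}-\hat{\theta}^{(s)}_{t-1}=e_{t-1}-\tilde{e}_{t-1}$ combined with Lemma~\ref{lemma:enc_dec} and $q_t=\epsilon p_{t-1}$, the bound $\lambda_{\max}(V_{t-1})\leq \lambda+(t-1)L^2$, and the recursion analysis of $q_t$ for the shrinkage claim. The only (harmless) deviations are that you solve the $q_t$ recursion exactly via its fixed point, obtaining $q_t\leq 2f(T)$ where the paper's geometric-series bound gives $q_t\leq 4f(T)$, and you use the slightly looser estimate $\sqrt{\lambda+(t-1)L^2}\leq 2L\sqrt{T}$; both variants land comfortably within the stated constant.
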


The above result implies that the inflated confidence set (that accounts for encoding errors) eventually contains the true parameter $\theta_*$ with high probability. At the same time, the quantization error $q_t$ decays fast enough to ensure that the radius of the confidence set is eventually $O(\sqrt{\beta_T})$ - exactly as in the \texttt{LinUCB} algorithm. In other words, our approach ensures that the impact of the quantization error on decision-making vanishes over time.

Equipped with Lemma's \ref{lemma:enc_dec} and \ref{lemma:dec_conf_region}, the rest of the proof of Theorem \ref{thm:ICLinUCB} follows fairly standard arguments. We fill in these detailed arguments in Appendix \ref{app:ThmICLinUCB}. 
\newpage
%------ GLM------------------
%----------------------------
\section{Extension to Generalized Linear Models}
\label{sec:gen_lin}
The main goal of this section is to demonstrate the generality of the algorithmic approach developed in Section \ref{sec:lin_algo}. To do so, we will now consider an observation model where the rewards are no longer assumed to be linear functions of the parameter. Rather, they satisfy the following relationship \cite{GLM1,GLM2}:
\begin{equation}
    y_t=\mu\left(\langle \theta^*, a_t \rangle \right)+\eta_t,
\label{eqn:gen_obs_model}
\end{equation}
where $\mu:\mathbb{R} \rightarrow \mathbb{R}$ is a continuously differentiable function typically referred to as the (inverse) \textit{link} function, and $\{\eta_t\}$ is a 1-subgaussian noise process as before. Our goal now will be to control the following notion of regret: 
\begin{equation}
\bar{R}_T=\mathbb{E} \left[\sum_{t=1}^{T} \left( \max_{a\in\mathcal{A}_t} \mu(\langle \theta_*, a \rangle) - \mu(\langle \theta_*, a_t \rangle) \right)  \right].
\label{eqn:GLM_regret}
\end{equation}
In \cite{GLM1}, the authors noted the following main difficulty in analyzing the above non-linear model relative to the linear setting: for the Generalized Linear Model (GLM) in \eqref{eqn:obs_model}, the relevant confidence regions may have a more complicated geometry in the parameter space than simple ellipsoids. To overcome this challenge, the \texttt{GLM-UCB} algorithm is developed in \cite{GLM1} by focusing on the reward space. 

For our setting, however, recall that the agent is not allowed to transmit the reward $y_t$ (neither directly, nor in an encoded form) to the server, thereby further adding to the technical complexity. This raises the following questions. (i) How should the agent encode an estimate of $\theta_*$ based on the GLM in Eq. \eqref{eqn:gen_obs_model}? (ii) How should the server design the  confidence region for decision-making? (iii) Will $B=O(d)$ bits continue to suffice for achieving order-optimal regret? In what follows, we will answer the last question in the affirmative by appropriately adapting the general recipe outlined at the end of Section \ref{sec:lin_algo}. 

To get started, we make the following standard assumption on the inverse link function \cite{GLM1,GLM2}.

\begin{assumption} \label{ass:link} The function $\mu:\mathbb{R} \rightarrow \mathbb{R}$ is continuously differentiable, Lipschitz with constant $k_2$, and such that $$k_1=\min\{1,\inf_{\theta\in\Theta, a \in \bigcup_{t\in[T]}\mathcal{A}_t} \dot{\mu}(\langle \theta, a \rangle) \} > 0.$$
Here, $\dot{\mu}(\cdot)$ is used to represent the derivative of $\mu(\cdot)$.
\end{assumption}

Without loss of generality, we will assume $k_2 \geq 1$. Next, let us define 
\begin{equation}
    g_t(\theta)=\lambda \theta + \sum_{s=1}^{t}\mu(\langle \theta, a_s \rangle) a_s.
\label{eqn:g_func}
\end{equation}

We are now ready to describe the Information-Constrained \texttt{GLMUCB} algorithm  (\texttt{IC-GLMUCB}). 

\noindent \textbf{Description of \texttt{IC-GLMUCB}}. The \texttt{IC-GLMUCB} algorithm shares the same structure as \texttt{IC-LinUCB}, but has some crucial algorithmic differences that we next outline. First, instead of computing the least-squares estimate as in Eq. \eqref{eqn:least_square}, the agent now computes an estimate $\hat{\theta}^{(a)}_t$ of $\theta^*$ by solving the following equation:
\begin{equation}
    g_t(\hat{\theta}^{(a)}_t) = \sum_{s=1}^{t} y_s a_s.
\label{eqn:non_lin_LS}
\end{equation}
It then encodes the innovation signal $e_t=\hat{\theta}^{(a)}_t-\hat{\theta}^{(s)}_{t-1}$ exactly as in line 3 of Algo. \ref{algo:Encoder}, with $f(T)$ in Eq. \eqref{eqn:quantizer_eqs} replaced by 
\begin{equation}
    \bar{f}(T) = \frac{3}{5L} \sqrt{ \frac{\beta_T}{k_1 k_2 T\log(dLT)}},
\end{equation}
and $q_{\bar{T}}$ set to $\left(4+6/\sqrt{k_1 k_2}\right)M$. The server performs decoding exactly as in line 7 of Algorithm \ref{algo:ICLUCB}, and computes $\hat{\theta}^{(s)}_t$. The main distinction relative to \texttt{IC-LinUCB} is in the construction of the confidence region at the server. The server constructs the following confidence set:
\begin{equation}
    \mathcal{\bar{C}}^{(s)}_t =  \{\theta\in \Theta: \mathcal{H}_{t-1}(\theta) \leq \sqrt{\beta_T} + k_2\textcolor{black}{\left(\sqrt{\lambda+(t-1)L^2}\right) q_t\}}, \hspace{1mm} \textrm{where} \hspace{1mm}  \mathcal{H}_t(\theta) \triangleq {\Vert g_t(\theta)-g_t(\hat{\theta}^{(s)}_t) \Vert}_{V^{-1}_t}.
\label{eqn:GLM_conf_region}
\end{equation}
The server then acts optimistically w.r.t. the above confidence set:
\begin{equation}
    a_t = \argmax_{a\in\mathcal{A}_t} \max_{\theta\in\mathcal{\bar{C}}^{(s)}_t} \langle \theta, a \rangle, \forall t \in \{\bar{T}+2,\ldots,T\}, 
\end{equation}
with $\bar{T}$ now set to
$$ \bar{T}=\ceil*{10 \frac{k_2}{k_1}  L^2d\sqrt{T}\log(dLT)}. $$

For completeness, we provide the detailed steps of the \texttt{IC-GLMUCB} algorithm in Appendix \ref{app:GLM_analysis}. To analyze the performance of the \texttt{IC-GLMUCB} algorithm, we will make the following standard assumption that is in the same spirit as Assumption \ref{ass:actions}(i):
$$ \sup_{a,b \in \bigcup_{t=1}^{T} \mathcal{A}_t}  \mu(\langle \theta_*,a \rangle) - \mu(\langle \theta_*,b \rangle) \leq 1. $$

We now state the main result of this section. 
\begin{theorem} (\textbf{Regret of \texttt{IC-GLMUCB}}) \label{thm:ICGLMUCB}
Suppose Assumptions \ref{ass:actions}(ii)-(iii) hold along with Assumption \ref{ass:link}. Moreover, let the channel capacity satisfy $B \geq 6d$. Then, the \texttt{IC-GLMUCB} algorithm guarantees:
\begin{equation}
    \bar{R}_T = O\left( { \left(\frac{k_2}{k_1}\right)}^{3/2} L^2 d \sqrt{T} \log(dLT) \right) = \tilde{O}\left(d\sqrt{T}\right).
\end{equation}
\end{theorem}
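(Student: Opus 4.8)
The plan is to mirror the three-step recipe behind Theorem~\ref{thm:ICLinUCB}, replacing each linear ingredient by its \texttt{GLM-UCB} analog \cite{GLM1} and carefully tracking how the constants $k_1,k_2$ propagate. The one structural fact that makes everything go through is the sandwich $k_1 V_t \preccurlyeq \nabla g_t(\theta) \preccurlyeq k_2 V_t$, valid for every $\theta$ because $k_1 \leq \dot{\mu} \leq k_2$ and $\nabla g_t(\theta)=\lambda I_d + \sum_{s\leq t}\dot{\mu}(\langle\theta,a_s\rangle)a_s a_s'$; I would invoke it through the mean-value representation $g_t(\theta)-g_t(\theta')=\bar{G}(\theta-\theta')$ with $k_1 V_t \preccurlyeq \bar{G}\preccurlyeq k_2 V_t$. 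Since $V_t$ depends only on the actions and not on the observation model, the eigenvalue bound of Lemma~\ref{lemma:eigen_cov} carries over verbatim (now with the larger $\bar{T}\propto k_2/k_1$), giving $\lambda_{\min}(V_t)\gtrsim (k_2/k_1)L^2\sqrt{T}\log(dLT)$ on the clean event $\mathcal{G}$.

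First I would establish the GLM analog of the gap bound in Lemma~\ref{lemma:iterate_bound}. The implicit definition \eqref{eqn:non_lin_LS} rules out the clean rank-one update available in the linear case, but subtracting the defining equations at $t$ and $t+1$ and using $g_{t+1}(\theta)=g_t(\theta)+\mu(\langle\theta,a_{t+1}\rangle)a_{t+1}$ yields the exact identity $g_{t+1}(\hat{\theta}^{(a)}_{t+1})-g_{t+1}(\hat{\theta}^{(a)}_t)=a_{t+1}\big(y_{t+1}-\mu(\langle\hat{\theta}^{(a)}_t,a_{t+1}\rangle)\big)$. Applying the mean-value representation with $k_1 V_{t+1}\preccurlyeq\bar{G}$ gives ${\Vert\hat{\theta}^{(a)}_{t+1}-\hat{\theta}^{(a)}_t\Vert}_2 \leq \frac{L}{k_1\lambda_{\min}(V_{t+1})}\,\vert y_{t+1}-\mu(\langle\hat{\theta}^{(a)}_t,a_{t+1}\rangle)\vert$, and I would bound the residual by $k_2 L{\Vert\theta_*-\hat{\theta}^{(a)}_t\Vert}_2+\vert\eta_{t+1}\vert$ via the Lipschitz constant $k_2$ and subgaussian concentration of the noise on $\mathcal{G}$. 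The enlarged $\bar{T}\propto k_2/k_1$ is exactly what forces the $k_1,k_2$ factors to cancel so that the right-hand side is at most $\bar{f}(T)$. With this in hand the encoding-region induction (Lemma~\ref{lemma:enc_dec}) is identical in form: the decoded center obeys ${\Vert\hat{\theta}^{(a)}_t-\hat{\theta}^{(s)}_t\Vert}_2\leq\epsilon p_t=q_{t+1}$, whence ${\Vert e_{t+1}\Vert}_2\leq{\Vert\hat{\theta}^{(a)}_{t+1}-\hat{\theta}^{(a)}_t\Vert}_2+q_{t+1}\leq\bar{f}(T)+q_{t+1}=p_{t+1}$, the only change being the larger base value $q_{\bar{T}}=(4+6/\sqrt{k_1 k_2})M$ chosen to dominate the larger GLM estimation error at the start of Phase~II.

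The heart of the argument is the confidence-region lemma (analog of Lemma~\ref{lemma:dec_conf_region}). Following the reward-space viewpoint of \texttt{GLM-UCB}, I would split $\mathcal{H}_{t-1}(\theta_*)={\Vert g_{t-1}(\theta_*)-g_{t-1}(\hat{\theta}^{(s)}_{t-1})\Vert}_{V^{-1}_{t-1}}$ through $\hat{\theta}^{(a)}_{t-1}$. The statistical term ${\Vert g_{t-1}(\theta_*)-g_{t-1}(\hat{\theta}^{(a)}_{t-1})\Vert}_{V^{-1}_{t-1}}$ equals ${\Vert\sum_s\eta_s a_s-\lambda\theta_*\Vert}_{V^{-1}_{t-1}}$ because $g_{t-1}(\hat{\theta}^{(a)}_{t-1})=\sum_s y_s a_s$ and $y_s=\mu(\langle\theta_*,a_s\rangle)+\eta_s$, and is thus bounded by $\sqrt{\beta_T}$ through the same self-normalized martingale inequality used for \texttt{LinUCB}. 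The encoding term ${\Vert g_{t-1}(\hat{\theta}^{(a)}_{t-1})-g_{t-1}(\hat{\theta}^{(s)}_{t-1})\Vert}_{V^{-1}_{t-1}}$ I would bound using $\bar{G}\preccurlyeq k_2 V_{t-1}$ by $k_2{\Vert\hat{\theta}^{(a)}_{t-1}-\hat{\theta}^{(s)}_{t-1}\Vert}_{V_{t-1}}\leq k_2\big(\sqrt{\lambda+(t-1)L^2}\big)q_t$, which is precisely the inflation in \eqref{eqn:GLM_conf_region}; hence $\theta_*\in\bar{\mathcal{C}}^{(s)}_t$ with radius $\rho_t\triangleq\sqrt{\beta_T}+k_2\big(\sqrt{\lambda+(t-1)L^2}\big)q_t$. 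The recursion \eqref{eqn:quantizer_eqs} with $\epsilon=1/2$ then drives $q_t$ to $O(\bar{f}(T))$ within $\tilde{T}=O(\log(dLT))$ steps, making the inflation $O\!\big(\sqrt{k_2/k_1}\,\sqrt{\beta_T/\log(dLT)}\big)$, so that $\rho_t=O(\sqrt{k_2/k_1}\,\sqrt{\beta_T})$.

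Finally I would close with the optimism-based regret decomposition. Since $\dot{\mu}>0$ makes $\mu$ increasing, the optimal arm maximizes $\langle\theta_*,\cdot\rangle$ and the per-round regret obeys $\mu(\langle\theta_*,a^*_t\rangle)-\mu(\langle\theta_*,a_t\rangle)\leq k_2\langle\theta_*,a^*_t-a_t\rangle$; strong monotonicity converts the constraint $\theta\in\bar{\mathcal{C}}^{(s)}_t$ into ${\Vert\tilde{\theta}_t-\theta_*\Vert}_{V_{t-1}}\leq 2\rho_t/k_1$, so optimism gives $r_t\leq (2k_2/k_1)\rho_t{\Vert a_t\Vert}_{V^{-1}_{t-1}}$. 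Summing via Cauchy--Schwarz and the elliptical-potential lemma yields $\sum_t r_t\lesssim (k_2/k_1)\,\rho_{\max}\sqrt{dT\log(dLT)}=(k_2/k_1)^{3/2}\sqrt{\beta_T}\sqrt{dT\log(dLT)}=O\big((k_2/k_1)^{3/2}d\sqrt{T}\log(dLT)\big)$ using $\sqrt{\beta_T}=O(\sqrt{d\log T})$; the Phase~I regret of $O((k_2/k_1)L^2 d\sqrt{T}\log(dLT))$, the $O(\log(dLT))$ burn-in rounds of Phase~II, and the $O(1)$ contribution from the complement of $\mathcal{G}$ are all dominated. The main obstacle I anticipate is Step~1: unlike the linear case the estimator is only implicitly defined, so the gap bound must come from the mean-value identity above, and one must verify that the single enlargement $\bar{T}\propto k_2/k_1$ simultaneously forces the successive-estimate gap below $\bar{f}(T)$ and keeps $\rho_t$ at $O(\sqrt{\beta_T})$ up to the benign $\sqrt{k_2/k_1}$ factor --- it is the bookkeeping of these $k_1,k_2$ dependencies, rather than any new probabilistic idea, that carries the real risk of error.
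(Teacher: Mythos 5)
Your proposal is correct and follows essentially the same route as the paper's proof: the mean-value sandwich $k_1 V_t \preccurlyeq G_t \preccurlyeq k_2 V_t$, the carried-over eigenvalue bound with $\bar{T}\propto k_2/k_1$, the successive-estimate gap via the exact identity for $g$-differences (your variant with $g_{t+1}$ and the residual at $\hat{\theta}^{(a)}_t$ is a trivially equivalent form of the paper's identity with $g_t$ and the residual at $\hat{\theta}^{(a)}_{t+1}$), the same encoding-region induction, the same triangle-inequality split of $\mathcal{H}_{t-1}(\theta_*)$ into a self-normalized statistical term and a $k_2\sqrt{\lambda+(t-1)L^2}\,q_t$ encoding term, and the same optimism argument with the Lipschitz factor $k_2$ and the $1/k_1$ conversion giving the $(k_2/k_1)^{3/2}$ dependence. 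The only implicit step is the bound $\Vert\theta_*-\hat{\theta}^{(a)}_t\Vert_2 \leq \tfrac{1}{k_1}\sqrt{\beta_T/\lambda_{\min}(V_t)}$ needed inside your residual estimate, but this follows from ingredients you already invoke (the clean event plus the sandwich), exactly as in the paper.
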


For the proof of Theorem \ref{thm:ICGLMUCB}, see Appendix \ref{app:GLM_analysis}. 

\textbf{Discussion.} The above result significantly generalizes Theorem \ref{thm:ICLinUCB}, and reveals that the \texttt{IC-GLMUCB} algorithm yields the same regret bounds as \texttt{IC-LinUCB} (up to constants) under identical requirements on the channel capacity as before. Thus, the main takeaway here is that our approach can readily accommodate general non-linear observation models as well (albeit with a slightly more involved analysis). 
\newpage
\section{One Bit Channel Capacity is Sufficient for the Multi-Armed Bandit Problem}
\label{sec:unstructred}
In Sections \ref{sec:guarantees} and \ref{sec:gen_lin}, we have seen that for a $d$-dimensional model, $O(d)$ bits suffice to achieve order-optimal regret. In this section, we investigate whether one can achieve similar order-optimal regret bounds with fewer bits when the set of feasible actions has additional structure. We will show that this is indeed the case for a particular setting of interest when $\mathcal{A}_t =\{e_1, \ldots, e_d\}, \forall t\in [T]$, where ${(e_i)}_i$ are the standard orthonormal unit vectors. This setting represents the popular unstructured multi-armed bandit problem with a finite number of arms.  Our main insight is the following: playing action/arm $i$ only reveals information about the $i$-th component of $\theta_*$, denoted by $\theta_i$, and hence, when the $i$-th action is played, it makes sense for the agent to encode and transmit the innovation related to only $\theta_i$. In other words, the above intuition suggests that encoding a scalar innovation signal (as opposed to a $d$-dimensional innovation vector) should suffice for the specific setting under consideration. In what follows, we formalize this reasoning. 

To get started, let us note that the optimal action $a_*$ is the unit vector corresponding to the largest component of $\theta_*$. Without loss of generality, let this component be $\theta_1$, i.e., $\theta_1 = \max_{i\in[d]} \theta_i$. We thus have $a_*=e_1$. Let us denote by $\hat{\theta}^{(a)}_{i,k}$ (resp., $\hat{\theta}^{(s)}_{i,k}$) the estimate of $\theta_i$ at the agent (resp., at the server) after arm $i$ has been played $k$ times. We now develop an information-constrained variant of the celebrated upper confidence bound algorithm that we call \texttt{IC-UCB}.  

\noindent \textbf{Description of \texttt{IC-UCB}}. Let $\gamma=1/2^B$ where $B$ is the channel capacity, and define the following sequences for $k\geq 1$:
\begin{equation}
    p_{k+1}=\gamma p_k + 2 f_k; \hspace{2mm} q_{k}=\gamma p_k; \hspace{2mm} f_k= 2 \sqrt{\frac{\log T}{k}},
\label{eqn:UCB_quant_eq}
\end{equation}
where $p_1=m+f_1$, and $m \geq 1$ is such that $\max_{i\in[d]} |\theta_i| \leq m$. Suppose the action at time $t$ is $a_t=e_i$. The agent first updates its estimate of $\theta_i$:
\begin{equation}
    \hat{\theta}^{(a)}_{i,n_i(t)}= \frac{1}{n_i(t)} \sum_{k=1}^{n_i(t)} y_{i,k},
\label{eqn:run_sum}
\end{equation}
where $y_{i,k}$ is the agent's observation when the $i$-th arm is played the $k$-th time, and $n_i(t)$ is the number of times arm $i$ is played up to (and including) time-step $t$. It then computes the \textit{scalar} innovation $e_{i,n_i(t)}=\hat{\theta}^{(a)}_{i,n_i(t)}-\hat{\theta}^{(s)}_{i,n_i(t)-1}$, where $\hat{\theta}^{(s)}_{i,0}=0, \forall i\in[d]$. If $e_{i,n_i(t)}$ falls in the interval $\mathcal{Z}_{i,t}=[-p_{n_i(t)}, p_{n_i(t)}]$, then $\mathcal{Z}_{i,t}$ is partitioned uniformly into $2^B$ bins, and the symbol $\sigma_t \in \Sigma$ encoding the bin containing $e_{i,n_i(t)}$ is transmitted to the server. The server then decodes the center $\tilde{e}_{i,n_i(t)}$ of that bin, and computes $$\hat{\theta}^{(s)}_{i,n_i(t)}=\hat{\theta}^{(s)}_{i,n_i(t)-1}+\tilde{e}_{i,n_i(t)}.$$

If $e_{i,n_i(t)} \notin \mathcal{Z}_{i,t} $, then there is no transmission from the agent to the server. As for decision-making, each arm is first played once by the server. Subsequently, the action chosen by the server at time-step $t+1$ is the one that maximizes the following index:
\begin{equation}
    \texttt{IC-UCB}_i(t)=\hat{\theta}^{(s)}_{i,n_i(t)}+q_{n_i(t)}+f_{n_i(t)},
\label{eqn:index}
\end{equation}
where $q_{n_i(t)}$ and $f_{n_i(t)}$ are as in Eq. \eqref{eqn:UCB_quant_eq}. 

Let us define by $\Delta_i=\theta_1 - \theta_i$ the sub-optimality gap of arm $i$. To present our results in a clean way, we will focus on the particularly important case where the sub-optimality gaps are small: $\Delta_i \in (0,1], \forall i \in [d] \setminus\{1\}.$ Our results can be easily generalized to arbitrary values of the sub-optimality gaps. For the setting considered in this section, it is easy to verify that the regret $R_T$ in Eq. \eqref{eqn:regret} simplifies to
$$ R_T= \sum_{i=1}^{d}\Delta_i \mathbb{E}[n_i(T)]. $$ 
Our main result concerning the regret bound of \texttt{IC-UCB} is as follows.
\begin{theorem} 
\label{thm:ICUCB}
(\textbf{Regret of \texttt{IC-UCB}}) Suppose the channel capacity is at least $1$ bit, i.e., $B \geq 1$. The \texttt{IC-UCB} algorithm then guarantees:
$$ R_T \leq  5 \sum_{i=1}^{d} \Delta_i + \sum_{i=1}^{d} O\left(\frac{\log (mT)}{\Delta_i}\right). $$
\end{theorem}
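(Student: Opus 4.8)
The plan is to reduce the argument to a standard gap-dependent UCB analysis, with the extra work concentrated in showing that the server's index $\texttt{IC-UCB}_i(t)$ is a legitimate upper confidence bound on $\theta_i$ despite the scalar quantization. For each arm $i$, I would fix a per-arm clean event $\mathcal{G}_i$ on which the empirical means concentrate: for every pull-count $k \in [T]$, the mean $\hat{\theta}^{(a)}_{i,k}$ of the first $k$ observations of arm $i$ satisfies $|\hat{\theta}^{(a)}_{i,k} - \theta_i| \leq f_k$, where $f_k = 2\sqrt{\log T / k}$. Since the noise is $1$-subgaussian, a Hoeffding-type bound gives $\mathbb{P}(|\hat{\theta}^{(a)}_{i,k} - \theta_i| > f_k) \leq 2/T^2$, so a union bound over $k \in [T]$ makes $\mathbb{P}(\bar{\mathcal{G}}_i) \leq 2/T$. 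Indexing by the pull-count $k$ rather than wall-clock time is what lets this union bound absorb the randomness of the data-dependent play schedule, so I never need $n_i(t)$ to be a stopping time.

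Second, I would establish encoding correctness on $\mathcal{G}_i$ by induction on $k$, proving simultaneously that (a) the innovation never overflows, $e_{i,k} \in [-p_k, p_k]$, and (b) the server tracks the agent up to the quantizer half-width, $|\hat{\theta}^{(s)}_{i,k} - \hat{\theta}^{(a)}_{i,k}| \leq q_k = \gamma p_k$. The base case $k=1$ uses $e_{i,1} = \hat{\theta}^{(a)}_{i,1} \in [-p_1, p_1]$ with $p_1 = m + f_1$, since $|\theta_i| \leq m$. For the step, I split $e_{i,k} = (\hat{\theta}^{(a)}_{i,k} - \hat{\theta}^{(a)}_{i,k-1}) + (\hat{\theta}^{(a)}_{i,k-1} - \hat{\theta}^{(s)}_{i,k-1})$; the second bracket is at most $q_{k-1}$ by the inductive hypothesis, and on $\mathcal{G}_i$ the successive-estimate gap obeys $|\hat{\theta}^{(a)}_{i,k} - \hat{\theta}^{(a)}_{i,k-1}| \leq f_k + f_{k-1} \leq 2 f_{k-1}$. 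The recursion in Eq.~\eqref{eqn:UCB_quant_eq} gives exactly $p_k = q_{k-1} + 2 f_{k-1}$, so the two pieces sum to at most $p_k$, proving (a); partitioning $[-p_k,p_k]$ into $2^B$ bins then forces $|\tilde{e}_{i,k} - e_{i,k}| \leq \gamma p_k = q_k$, which is (b) because $\hat{\theta}^{(s)}_{i,k} - \hat{\theta}^{(a)}_{i,k} = \tilde{e}_{i,k} - e_{i,k}$. Combining (b) with $\mathcal{G}_i$ gives $|\hat{\theta}^{(s)}_{i,k} - \theta_i| \leq q_k + f_k$, so $\texttt{IC-UCB}_i(t)$ is optimistic for $\theta_i$ and overshoots it by at most $2(q_{n_i(t)} + f_{n_i(t)})$.

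Third, I would control the quantizer radius by unrolling $p_{k+1} = \gamma p_k + 2 f_k$. With $B \geq 1$ we have $\gamma \leq 1/2$, hence $p_k = \gamma^{k-1} p_1 + 2\sum_{j=1}^{k-1}\gamma^{k-1-j} f_j$; bounding the convolution sum (splitting at $j = k/2$ and using that $f_j$ is decreasing) yields $q_k + f_k \leq C \sqrt{\log T / k} + \gamma^{k/2}\,(m + c\sqrt{\log T})$ for absolute constants $C, c$. The first term is the genuine statistical confidence width and the second is a transient carrying the initial magnitude $m$ that decays geometrically in $k$.

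Finally, the standard UCB step: on $\mathcal{G}_i \cap \mathcal{G}_1$, a suboptimal arm $i$ is played at time $t+1$ only if $\texttt{IC-UCB}_i(t) \geq \texttt{IC-UCB}_1(t) \geq \theta_1$, which forces $2(q_{n_i(t)} + f_{n_i(t)}) \geq \Delta_i$. Plugging in the bound from the previous step, this can hold only while $k = n_i(t) \lesssim \log T / \Delta_i^2$ (from the statistical term) or while the transient exceeds $\Delta_i$, i.e. $k \lesssim \log(m/\Delta_i)$ (from $\gamma^{k/2}(m + c\sqrt{\log T}) \gtrsim \Delta_i$ with $\gamma \leq 1/2$). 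Hence $n_i(T) \leq O(\log T / \Delta_i^2) + O(\log(m/\Delta_i))$ on $\mathcal{G}_i \cap \mathcal{G}_1$. Multiplying by $\Delta_i$, using $\Delta_i \leq 1$ so that $\Delta_i \log(m/\Delta_i) = O(\log m)$ is absorbed by $\log(mT)/\Delta_i$, and summing over $i$ produces the $\sum_i O(\log(mT)/\Delta_i)$ term; the mandatory one-pull-per-arm initialization contributes $\Delta_i$ and the complement of $\mathcal{G}_i \cap \mathcal{G}_1$ (of probability at most $4/T$, on which $n_i(T) \leq T$) contributes $4\Delta_i$, which together give the residual $5\sum_i \Delta_i$ term. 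I expect the third step to be the main obstacle: one must show that the geometric contraction supplied by the channel makes the quantization error $q_k$ shrink at least as fast as the statistical width $f_k$, so that the inflated index has the \emph{same} order as ordinary UCB and the initial-magnitude transient costs only an additive $O(\log(m/\Delta_i))$ pulls rather than degrading the $1/\Delta_i$ rate.
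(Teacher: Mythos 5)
Your proposal is correct, and its skeleton coincides with the paper's proof: the same per-arm clean event indexed by pull count rather than wall-clock time (Lemma \ref{lemma:UCB_clean}), the same induction establishing simultaneously no overflow and server--agent tracking $|\hat{\theta}^{(s)}_{i,k}-\hat{\theta}^{(a)}_{i,k}|\le q_k$ (Lemma \ref{lemma:UCB_enc}), a bound on the quantizer recursion \eqref{eqn:UCB_quant_eq}, and the standard UCB counting plus probability bookkeeping that produces exactly the $5\sum_i \Delta_i + \sum_i O(\log(mT)/\Delta_i)$ form. The one place you genuinely depart is the recursion bound (the paper's Lemma \ref{lemma:sequence}): the paper controls the convolution $\sum_{s<k}\gamma^{k-s}f_s$ by an integral comparison with two changes of variables and a power-series estimate, obtaining the sharp $q_k \le \gamma^{k}(m+f_1)+\tfrac{12}{B}\sqrt{\log T/k}$ (note the $1/B$ gain), whereas you split the sum at $j=k/2$ using monotonicity of $f_j$, which is more elementary but yields the cruder transient $\gamma^{k/2}(m+c\sqrt{\log T})$ (strictly $\gamma^{(k-2)/2}$ once the $\gamma^{-1}$ factor in the geometric tail is accounted for, which is immaterial). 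Your bound suffices, since downstream one only needs the transient to decay geometrically and the statistical part to be $O(\sqrt{\log T/k})$. The endgame bookkeeping also differs slightly: the paper defines $T_1$ so that the geometric term is already below $\sqrt{\log T/T}$, checks $T_1 \le T_2 = \lceil C \log(mT)/\Delta_i^2\rceil$, and runs a contradiction argument against the cap $\max\{T_1,T_2\}$; you instead count the transient pulls separately as an additive $O(\log(m/\Delta_i))$ term (silently dropping an $O(\log\log T)$ contribution from the $c\sqrt{\log T}$ factor, which is harmless because $\Delta_i \log\log T = O(\log(mT)/\Delta_i)$). Net effect: the paper's route gives cleaner constants and makes the $1/B$ dependence visible; yours trades that sharpness for a shorter, purely elementary estimate, and both close the proof.
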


We can also establish the following gap-independent bound.
\begin{theorem} 
\label{thm:gap_ind}
(\textbf{Gap-independent regret of \texttt{IC-UCB}}) Suppose the channel capacity satisfies $B \geq 1$. The \texttt{IC-UCB} algorithm then guarantees:
$$ R_T \leq  5 \sum_{i=1}^{d} \Delta_i + O\left(\sqrt{dT \log(mT)}\right). $$
\end{theorem}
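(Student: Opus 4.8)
The plan is to derive the gap-independent bound directly from the per-arm play-count estimate that underlies Theorem~\ref{thm:ICUCB}, via a standard thresholding argument. First I would extract from the proof of Theorem~\ref{thm:ICUCB} the key intermediate fact that for every suboptimal arm $i \neq 1$, the expected number of plays satisfies $\mathbb{E}[n_i(T)] \leq 5 + O\!\left(\log(mT)/\Delta_i^2\right)$; indeed, multiplying by $\Delta_i$ and summing $\Delta_i\,\mathbb{E}[n_i(T)]$ over $i$ via $R_T = \sum_i \Delta_i \mathbb{E}[n_i(T)]$ reproduces exactly the gap-dependent bound, so this per-arm estimate is precisely what the earlier analysis already delivers (conditioned on the clean event that the quantization-inflated indices $\texttt{IC-UCB}_i(t)$ trap each $\theta_i$ with high probability).

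With this estimate in hand, I would fix a free threshold $\Delta > 0$ and split the arms according to the size of their gaps. For the arms with small gaps, $\{i : \Delta_i \leq \Delta\}$, I bound their joint contribution crudely by
\begin{equation*}
\sum_{i:\,\Delta_i \leq \Delta} \Delta_i\, \mathbb{E}[n_i(T)] \;\leq\; \Delta \sum_{i:\,\Delta_i \leq \Delta} \mathbb{E}[n_i(T)] \;\leq\; \Delta T,
\end{equation*}
using only that the total number of pulls across all arms is $T$. For the arms with large gaps, $\{i : \Delta_i > \Delta\}$, I substitute the per-arm estimate and use $1/\Delta_i < 1/\Delta$ to obtain
\begin{equation*}
\sum_{i:\,\Delta_i > \Delta} \Delta_i\, \mathbb{E}[n_i(T)] \;\leq\; 5\sum_{i=1}^{d}\Delta_i + O\!\left(\frac{d\log(mT)}{\Delta}\right).
\end{equation*}

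Combining the two pieces yields $R_T \leq 5\sum_{i=1}^{d}\Delta_i + \Delta T + O\!\left(d\log(mT)/\Delta\right)$, and the proof concludes by optimizing the free parameter. Balancing the two $\Delta$-dependent terms gives $\Delta = \sqrt{d\log(mT)/T}$, for which both $\Delta T$ and $d\log(mT)/\Delta$ reduce to $\sqrt{dT\log(mT)}$, producing the claimed bound. I do not expect a genuine obstacle in this conversion itself, since it is a routine peeling argument; the only real subtlety is ensuring the per-arm play bound borrowed from Theorem~\ref{thm:ICUCB} carries the correct $\log(mT)/\Delta_i^2$ dependence and holds on the same high-probability clean event, so that the additive $5\sum_i \Delta_i$ term (the cost incurred before the inflated indices become reliable) is accounted for uniformly across both arm groups. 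All the genuinely hard work---showing that a single-bit channel still lets the server's indices concentrate correctly despite scalar quantization of the innovations $e_{i,n_i(t)}$---resides in Theorem~\ref{thm:ICUCB} and is inherited here.
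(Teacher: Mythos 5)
Your proposal is correct and is essentially the paper's own proof: the paper starts from the same per-arm estimate $\mathbb{E}[n_i(T)] \leq \frac{C}{\Delta_i^2}\log(mT) + 5$ established in Theorem~\ref{thm:ICUCB} and then invokes the reasoning of \cite[Theorem 7.2]{tor}, which is precisely the thresholding argument (split arms at $\Delta$, bound small-gap arms by $\Delta T$, large-gap arms by $O(d\log(mT)/\Delta)$, and optimize $\Delta = \sqrt{d\log(mT)/T}$) that you spelled out explicitly. The only difference is that you fill in the details the paper delegates to the citation.
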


\textbf{Discussion:} Our bounds above match those of the \texttt{UCB} algorithm, revealing that for the MAB problem, one can achieve order-optimal bounds with a bit-rate of \textit{just $1$ bit}. The main takeaway here is that when the action sets have more structure, there is hope for achieving optimal performance with a channel capacity of fewer than $O(d)$ bits. As future work, it would be interesting to see if one can draw similar conclusions for other types of common action sets. 

\section{Conclusion and Future Directions} 
We introduced and studied a new linear stochastic bandit problem subject to communication channel constraints. We developed a general algorithmic framework comprising of an adaptive compression mechanism, and a decision-making rule that explicitly accounts for encoding errors. We then showed how this framework leads to order-optimal regret bounds for (i) the linear bandit setting, (ii) the generalized linear bandit setting, and (iii) the MAB problem, with horizon-independent bit-rates that depend only on the dimension of the unknown model parameter. Our work opens up several interesting avenues of research; we discuss some of them below.

First, an immediate question that remains unresolved is whether $O(d)$ bits is indeed necessary for achieving order-optimal regret for the linear bandit problem. Answering this question would require establishing algorithm-independent lower bounds for our setup - this is quite non-trivial, and is the subject of our ongoing work. Second, while our focus was on minimizing the number of bits needed to achieve optimal regret, one may ask the following alternate question: Given a fixed communication budget, what is the best cumulative regret bound one can hope for? It would be particularly interesting to ascertain the minimal communication budget (as a function of the model parameter dimension) needed to achieve sublinear regret. Third, we plan to extend the ideas and results in our paper to more complex distributed/federated settings involving multiple agents. Finally, our goal is to explore whether the techniques developed in this paper are applicable to Markov decision processes in the context of reinforcement learning. 
\bibliographystyle{unsrt} 
\bibliography{refs}
\newpage
%----------------------------------
%--------- Appendix Starts -------- %
%------------------------------------
\appendix
\section{Supporting Technical Results}
\label{app:support}
In this section, we compile certain  results that will be used in our subsequent analysis. We start by recalling the Matrix Bernstein inequality that will be used by us for lower-bounding the smallest eigenvalue of the covariance matrix $V_t$. 

\begin{lemma} (\textbf{Matrix Bernstein})  \cite[Theorem 5.4.1]{versh}
Let $X_1, \ldots, X_N$ be independent zero-mean $d \times d$ symmetric random matrices, such that ${\Vert X_t \Vert}_2 \leq K$ almost surely, $\forall t \in [N]$. Then, for every $\epsilon > 0$, we have
\begin{equation}
    \mathbb{P}\left( {\norm[\bigg] { \sum_{t=1}^{N} X_t} }_2 \geq \epsilon \right) \leq 2d \exp\left(\frac{-\epsilon^2 / 2}{\sigma^2+K\epsilon/3} \right),
\label{eqn:mat_bern}
\end{equation}
where $\sigma^2={\norm[\bigg]  {\sum\limits_{t=1}^{N} \mathbb{E}[X^2_t]} }_2$ is the norm of the matrix variance of the sum. 
\label{lemma:mat_bern}
\end{lemma}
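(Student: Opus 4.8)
The plan is to prove this via the \emph{matrix Laplace transform} (matrix Chernoff) method, the standard route for tail bounds on sums of independent random matrices. First I would reduce the two-sided statement to a one-sided one. Writing $S=\sum_{t=1}^N X_t$, and using that $S$ is symmetric so that ${\Vert S \Vert}_2 = \max\{\lambda_{\max}(S), \lambda_{\max}(-S)\}$, a union bound gives
\[ \mathbb{P}({\Vert S \Vert}_2 \geq \epsilon) \leq \mathbb{P}(\lambda_{\max}(S) \geq \epsilon) + \mathbb{P}(\lambda_{\max}(-S) \geq \epsilon). \]
Since the matrices $\{-X_t\}$ satisfy the same hypotheses (zero-mean, bounded by $K$, identical variance proxy), it suffices to bound $\mathbb{P}(\lambda_{\max}(S)\geq \epsilon)$ and double the result; this accounts for the leading factor $2$. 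For the one-sided bound, I would fix $\lambda>0$ and apply Markov's inequality to the monotone transform $s \mapsto e^{\lambda s}$:
\[ \mathbb{P}(\lambda_{\max}(S) \geq \epsilon) \leq e^{-\lambda\epsilon}\, \mathbb{E}\!\left[\lambda_{\max}\!\left(e^{\lambda S}\right)\right] \leq e^{-\lambda\epsilon}\, \mathbb{E}\!\left[\operatorname{tr} e^{\lambda S}\right], \]
where the last step uses that $e^{\lambda S}$ is positive definite, so its largest eigenvalue is dominated by its trace.

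The crux of the argument — and what I expect to be the main obstacle — is to control the trace moment generating function $\mathbb{E}[\operatorname{tr} e^{\lambda S}]$ of a \emph{sum}, since for non-commuting matrices one cannot naively factor the exponential, and the scalar identity $\mathbb{E}[e^{\lambda(X+Y)}]=\mathbb{E}[e^{\lambda X}]\,\mathbb{E}[e^{\lambda Y}]$ fails. Here I would invoke Lieb's concavity theorem (concavity of $A \mapsto \operatorname{tr}\exp(H+\log A)$ on positive definite $A$) together with Jensen's inequality to peel off one independent term at a time, yielding the subadditivity of matrix cumulants
\[ \mathbb{E}\!\left[\operatorname{tr} e^{\lambda S}\right] \leq \operatorname{tr}\exp\!\left(\sum_{t=1}^{N} \log \mathbb{E}\!\left[e^{\lambda X_t}\right]\right). \]

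Next I would bound each per-term matrix mgf using boundedness and the zero-mean property: the scalar inequality $e^{x}-1-x \leq \tfrac{x^2/2}{1-x/3}$ for $0<x<3$, transferred to the spectrum of $X_t$ and combined with operator monotonicity of the logarithm, gives $\log\mathbb{E}[e^{\lambda X_t}] \preccurlyeq g(\lambda)\,\mathbb{E}[X_t^2]$ with $g(\lambda)=\tfrac{\lambda^2/2}{1-\lambda K/3}$, valid for $0<\lambda<3/K$. Finally I would assemble the pieces: setting $Z=\sum_t \mathbb{E}[X_t^2]$ so that $\lambda_{\max}(Z)={\Vert Z \Vert}_2 = \sigma^2$, the above yields
\[ \mathbb{E}\!\left[\operatorname{tr} e^{\lambda S}\right] \leq \operatorname{tr}\exp\!\left(g(\lambda) Z\right) \leq d\, e^{g(\lambda)\sigma^2}, \]
whence $\mathbb{P}(\lambda_{\max}(S)\geq\epsilon) \leq d\exp(-\lambda\epsilon + g(\lambda)\sigma^2)$. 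Choosing $\lambda = \epsilon/(\sigma^2 + K\epsilon/3)$ to optimize the exponent produces $d\exp\!\left(\tfrac{-\epsilon^2/2}{\sigma^2+K\epsilon/3}\right)$, and doubling for the two-sided reduction gives the claimed bound with the factor $2d$. I would note that the only genuinely nontrivial ingredient is Lieb's theorem; the remaining steps are routine once subadditivity of cumulants is in hand.
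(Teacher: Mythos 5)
The paper does not prove this lemma at all: it is imported verbatim in Appendix \ref{app:support} as a supporting result with a citation to Vershynin's textbook, so there is no in-paper argument to compare against. Your proposal correctly reconstructs the standard proof from that cited source (the matrix Laplace transform method due to Ahlswede--Winter and Tropp): the reduction of the two-sided bound to $\lambda_{\max}$ via the union bound, the trace-Chernoff step $\mathbb{P}(\lambda_{\max}(S)\geq\epsilon)\leq e^{-\lambda\epsilon}\,\mathbb{E}[\operatorname{tr}e^{\lambda S}]$, subadditivity of matrix cumulants via Lieb's concavity theorem, the per-term bound $\log\mathbb{E}[e^{\lambda X_t}]\preccurlyeq g(\lambda)\,\mathbb{E}[X_t^2]$ with $g(\lambda)=\frac{\lambda^2/2}{1-\lambda K/3}$, and the final choice $\lambda=\epsilon/(\sigma^2+K\epsilon/3)$ all check out --- that choice indeed satisfies $\lambda<3/K$ and yields exactly the exponent $\frac{-\epsilon^2/2}{\sigma^2+K\epsilon/3}$, and the factor $d$ from $\operatorname{tr}\exp(g(\lambda)Z)\leq d\,e^{g(\lambda)\sigma^2}$ doubles to $2d$ as claimed. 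One cosmetic correction: the scalar inequality should be invoked for $|x|<3$ with denominator $1-|x|/3$ (as in the cited reference), since the spectrum of $\lambda X_t$ lies in $[-\lambda K,\lambda K]$ and includes negative eigenvalues; with that adjustment the spectral transfer step is airtight.
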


The next result is popularly used in the analysis of linear stochastic bandits. We adapt it based on our notation. 

\begin{lemma} (\textbf{Confidence Region}) \cite[Theorem 20.5]{tor} Given any $\delta \in (0,1)$, it holds that 
$$\mathbb{P}\left(\exists t \in [T]: \theta_* \notin \mathcal{C}_t\right) \leq \delta,$$ where 
\begin{equation}
 \mathcal{C}_t=\{\theta \in \mathbb{R}^d: {\Vert \hat{\theta}^{(a)}_{t-1} - \theta \Vert}_{V_{t-1}} \leq \sqrt{\beta_T}\},
 \label{eqn:enc_conf_region}
\end{equation}
and 
$$ \sqrt{\beta_T} = \sqrt{\lambda}M + \sqrt{2\log\left(\frac{1}{\delta}\right)+d\log\left(\frac{d\lambda+TL^2}{d\lambda}\right)}. $$
\label{lemma:enc_conf_region}
\end{lemma}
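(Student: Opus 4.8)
The plan is to prove the standard self-normalized confidence bound for the regularized least-squares estimator; the stated inclusion then follows after a trivial reindexing, since the ellipsoid $\mathcal{C}_t$ is centered at $\hat{\theta}^{(a)}_{t-1}$ and shaped by $V_{t-1}$, so a bound holding uniformly over all time indices automatically covers every $t \in [T]$. First I would decompose the estimation error. Writing $S_t = \sum_{s=1}^{t} a_s \eta_s$ and substituting $y_s = \langle \theta_*, a_s\rangle + \eta_s$ into the definition of $\hat{\theta}^{(a)}_t$, the identity $\sum_{s=1}^t a_s a_s' = V_t - \lambda I_d$ gives $\hat{\theta}^{(a)}_t - \theta_* = V_t^{-1}S_t - \lambda V_t^{-1}\theta_*$. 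Measuring in the $V_t$-norm and applying the triangle inequality splits the error into a stochastic term ${\Vert S_t \Vert}_{V_t^{-1}}$ (using ${\Vert V_t^{-1}S_t\Vert}_{V_t}={\Vert S_t\Vert}_{V_t^{-1}}$) and a deterministic regularization-bias term $\lambda\,{\Vert V_t^{-1}\theta_*\Vert}_{V_t}=\lambda\,{\Vert \theta_* \Vert}_{V_t^{-1}}$. Since $V_t \succcurlyeq \lambda I_d$ implies $V_t^{-1}\preccurlyeq \lambda^{-1}I_d$, this bias term is at most $\lambda\cdot\lambda^{-1/2}{\Vert \theta_*\Vert}_2 \leq \sqrt{\lambda}\,M$, which is exactly the $\sqrt{\lambda}M$ appearing in $\sqrt{\beta_T}$.

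The heart of the argument is controlling the stochastic term ${\Vert S_t \Vert}_{V_t^{-1}}$, which I would handle via the self-normalized tail inequality for vector-valued martingales (the method of mixtures). Concretely, for a fixed $x \in \mathbb{R}^d$ the process $\exp\!\big(\langle x, S_t\rangle - \tfrac12 {\Vert x\Vert}^2_{\bar V_t}\big)$, with $\bar V_t = \sum_{s=1}^t a_s a_s' = V_t - \lambda I_d$, is a supermartingale: each $\eta_s$ is $1$-subgaussian conditioned on the past, and the actions $a_s$ are predictable. Integrating this family against a zero-mean Gaussian prior on $x$ with covariance $\lambda^{-1}I_d$ and evaluating the resulting Gaussian integral (Laplace's method) yields a nonnegative mixture supermartingale; applying Ville's maximal inequality to it produces, with probability at least $1-\delta$, the bound
\[
{\Vert S_t \Vert}_{V_t^{-1}}^2 \leq 2\log\!\left(\frac{1}{\delta}\right) + \log\!\left(\frac{\det V_t}{\det(\lambda I_d)}\right) \quad \text{for all } t.
\]
The uniform-in-$t$ nature of this step is precisely what lets us bound the probability of $\{\exists t: \theta_* \notin \mathcal{C}_t\}$ by $\delta$ without any separate union bound over time.

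It remains to make the determinant term explicit. Since $\mathrm{tr}(V_t) = d\lambda + \sum_{s=1}^t {\Vert a_s\Vert}_2^2 \leq d\lambda + TL^2$ by Assumption \ref{ass:actions}(ii) and $t \leq T$, the AM--GM inequality on the eigenvalues of $V_t$ gives $\det V_t \leq (\mathrm{tr}(V_t)/d)^d \leq ((d\lambda + TL^2)/d)^d$, while $\det(\lambda I_d) = \lambda^d$; hence $\log(\det V_t/\det(\lambda I_d)) \leq d\log((d\lambda+TL^2)/(d\lambda))$. Substituting this into the self-normalized bound and adding the bias term $\sqrt{\lambda}M$ recovers exactly ${\Vert \hat{\theta}^{(a)}_t - \theta_* \Vert}_{V_t} \leq \sqrt{\beta_T}$ on the high-probability event, which is the desired conclusion (matching \cite[Theorem 20.5]{tor}).

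The main obstacle is the self-normalized concentration step: one must construct the exponential supermartingale, verify the supermartingale property carefully from subgaussianity together with the predictability of $\{a_s\}$, carry out the Gaussian mixture integral correctly so that the self-normalizing matrix $V_t$ (rather than $\bar V_t$) appears in the denominator, and invoke a maximal inequality to obtain the statement uniformly in $t$. The error decomposition, the $\sqrt{\lambda}M$ bias bound, and the trace/AM--GM control of the determinant are all routine algebra by comparison.
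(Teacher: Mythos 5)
Your proposal is correct and takes essentially the same approach as the paper: Lemma \ref{lemma:enc_conf_region} is not proved in the paper but imported directly from \cite[Theorem 20.5]{tor}, and your argument --- the error decomposition into the $\sqrt{\lambda}M$ regularization bias plus the self-normalized term ${\Vert S_t \Vert}_{V_t^{-1}}$, the method-of-mixtures supermartingale with a Gaussian prior of covariance $\lambda^{-1}I_d$ combined with Ville's maximal inequality for uniformity in $t$, and the trace/AM--GM bound $\det V_t \leq \left((d\lambda+TL^2)/d\right)^d$ --- is precisely the proof underlying that cited theorem. All steps check out, including the reindexing from $t$ to $t-1$ and the absence of any union bound over time.
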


For reasoning about the capacity of the channel, we will require the concept of covering numbers. 
\begin{definition} (\textbf{Covering Numbers}) Let $\mathcal{K}$ be a subset of $\mathbb{R}^d$. The smallest cardinality of an $\epsilon$-net of $\mathcal{K}$ is called the covering number of $\mathcal{K}$ and is denoted by $\mathcal{N}(\mathcal{K},\epsilon)$. Equivalently, $\mathcal{N}(\mathcal{K},\epsilon)$ is the smallest number of closed balls with centers in $\mathcal{K}$ and radii $\epsilon$ whose union covers $\mathcal{K}$. 
\end{definition}

The following key result relates the covering number of a set $\mathcal{K}$ to its volume. 

\begin{lemma} \label{lemma:covers}  (\textbf{Covering Numbers and Volume}) \cite[Proposition 4.2.12]{versh} Let $\mathcal{K}$ be a subset of $\mathbb{R}^d$, and $\epsilon >0$. Then,
$$ \mathcal{N}(\mathcal{K},\epsilon) \leq \frac{\vert \left(\mathcal{K}\oplus(\epsilon/2)\mathcal{B}_d(0,1)\right)\vert}{\vert (\epsilon/2)\mathcal{B}_d(0,1)\vert}. $$
Here, we used $|\mathcal{K}|$ to represent the volume of a set $\mathcal{K}$, and $\mathcal{K}_1 \oplus \mathcal{K}_2$ to denote the Minkowski sum of two sets $\mathcal{K}_1, \mathcal{K}_2 \subset \mathbb{R}^d$.  
\end{lemma}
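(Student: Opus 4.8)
The plan is to prove this bound via the classical \emph{packing--covering duality}: I will exhibit a single finite set that simultaneously serves as an $\epsilon$-net of $\mathcal{K}$ (hence upper-bounds $\mathcal{N}(\mathcal{K},\epsilon)$) and induces a disjoint family of balls whose total volume is controlled by $\vert\mathcal{K}\oplus(\epsilon/2)\mathcal{B}_d(0,1)\vert$. The object in question is a \emph{maximal $\epsilon$-separated subset} of $\mathcal{K}$, i.e., a set $\{x_1,\ldots,x_N\}\subseteq\mathcal{K}$ with ${\Vert x_i-x_j\Vert}_2 > \epsilon$ for all $i\neq j$ that cannot be enlarged while preserving this separation property. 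The entire argument reduces to extracting the two distinct consequences of maximality and separation, and then comparing volumes.

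First I would establish existence of such a set. Since $\mathcal{K}$ is bounded (in our application $\mathcal{K}$ is a Euclidean ball of finite positive volume), a greedy construction that repeatedly adjoins any admissible point must terminate, yielding a maximal $\epsilon$-separated set of some finite size $N$. Next I would argue that maximality forces $\{x_1,\ldots,x_N\}$ to be an $\epsilon$-net: for any $x\in\mathcal{K}$ not already a center, the impossibility of adjoining $x$ means ${\Vert x-x_i\Vert}_2 \leq \epsilon$ for some $i$, which is precisely the defining condition of an $\epsilon$-net. Consequently $\mathcal{N}(\mathcal{K},\epsilon)\leq N$.

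The volumetric half comes next. Because the centers are pairwise more than $\epsilon$ apart, the balls $\mathcal{B}_d(x_i,\epsilon/2)$ are pairwise disjoint: a common point $y$ would, by the triangle inequality, satisfy ${\Vert x_i-x_j\Vert}_2 \leq {\Vert x_i-y\Vert}_2 + {\Vert y-x_j\Vert}_2 \leq \epsilon$, contradicting strict separation. Moreover, each such ball lies inside the Minkowski neighborhood $\mathcal{K}\oplus(\epsilon/2)\mathcal{B}_d(0,1)$, since its center is in $\mathcal{K}$ and its radius is $\epsilon/2$. Disjointness, containment, and the translation-invariance of volume (each $\mathcal{B}_d(x_i,\epsilon/2)$ is a translate of $(\epsilon/2)\mathcal{B}_d(0,1)$) then give
\begin{equation}
N\,\bigl\vert(\epsilon/2)\mathcal{B}_d(0,1)\bigr\vert = \sum_{i=1}^{N}\bigl\vert\mathcal{B}_d(x_i,\epsilon/2)\bigr\vert = \Bigl\vert\bigcup_{i=1}^{N} \mathcal{B}_d(x_i,\epsilon/2)\Bigr\vert \leq \bigl\vert\mathcal{K}\oplus(\epsilon/2)\mathcal{B}_d(0,1)\bigr\vert.
\end{equation}
Dividing by the positive quantity $\vert(\epsilon/2)\mathcal{B}_d(0,1)\vert$ bounds $N$, and combining with $\mathcal{N}(\mathcal{K},\epsilon)\leq N$ yields the stated inequality.

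The main subtlety, and the one point requiring care, is reconciling the conventions: the strict separation ${\Vert x_i-x_j\Vert}_2 > \epsilon$ must be exactly what simultaneously delivers the $\epsilon$-net property (via maximality, with the $\leq\epsilon$ covering condition) and the disjointness of the radius-$\epsilon/2$ balls (via the triangle inequality). I would also note that the degenerate cases where $\mathcal{K}$ has zero or infinite volume do not arise in the intended application, since there $\mathcal{K}$ is always a ball of positive finite volume, so the division in the final step is legitimate.
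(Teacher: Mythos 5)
Your proposal is correct: the paper itself gives no proof of this lemma, invoking it directly from \cite[Proposition 4.2.12]{versh}, and your argument --- a maximal $\epsilon$-separated subset of $\mathcal{K}$ serving simultaneously as an $\epsilon$-net (by maximality) and as centers of disjoint radius-$\epsilon/2$ balls inside $\mathcal{K}\oplus(\epsilon/2)\mathcal{B}_d(0,1)$ (by separation and the triangle inequality), followed by volume comparison --- is precisely the standard packing--covering duality proof given in that cited reference. Your handling of the conventions (strict separation ${\Vert x_i-x_j\Vert}_2>\epsilon$ versus the $\leq\epsilon$ covering condition, centers lying in $\mathcal{K}$ as the paper's definition of covering number requires) and of the degenerate volume cases is sound, so there is nothing to correct.
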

\newpage

%----------------------------------------------
%------ Proof of Theorem 1 --------------------
%----------------------------------------------

\section{Proof of Theorem \ref{thm:ICLinUCB}}
\label{app:ThmICLinUCB}
In this section, we will develop the proof of Theorem \ref{thm:ICLinUCB}. In the process, we will restate some of the key lemmas from Section \ref{sec:guarantees} for the reader's convenience. We start with the following result that provides a lower-bound on the smallest eigenvalue of the covariance matrix $V_t$. 
\begin{lemma} \label{lemma:eigen_cov} 
Given any $\delta \in (0,1)$, we have that with probability at least $1-\delta$, 
\begin{equation}
    \lambda_{\min}(V_t) \geq \frac{\bar{T}}{2d}, \forall t\geq \bar{T},
\end{equation}
as long as 
$\bar{T} \geq \frac{28d}{3}  \log(2d/\delta)$. In particular, with $\delta = 2/T$, and $\bar{T}=\ceil{10L^2d\sqrt{T}\log(dLT)}$, we have that with probability at least $1-2/T$, 
\begin{equation}
    \lambda_{\min}(V_t) \geq 5 L^2 \sqrt{T} \log(dLT), \forall t\geq \bar{T}.
\label{eqn:min_eigen}
\end{equation}
\end{lemma}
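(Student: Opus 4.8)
The plan is to reduce the claim over all $t \geq \bar{T}$ to a single statement about $V_{\bar{T}}$, and then apply the Matrix Bernstein inequality (Lemma~\ref{lemma:mat_bern}) to the pure exploration phase. First I would observe that since $V_t = V_{\bar{T}} + \sum_{s=\bar{T}+1}^{t} a_s a_s'$ and each $a_s a_s' \succcurlyeq 0$, we have $V_t \succcurlyeq V_{\bar{T}}$ and hence $\lambda_{\min}(V_t) \geq \lambda_{\min}(V_{\bar{T}})$ for every $t \geq \bar{T}$. Thus it suffices to lower-bound $\lambda_{\min}(V_{\bar{T}})$, and this monotonicity disposes of the universal quantifier over $t$ for free.

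Next I would exploit the fact that during Phase~I the actions $a_1, \ldots, a_{\bar{T}}$ are drawn i.i.d. from $\mathrm{Unif}(\mathbb{S}^{d-1})$. By rotational symmetry and a trace argument, $\mathbb{E}[a_s a_s'] = \tfrac{1}{d} I_d$, so I set $X_s = a_s a_s' - \tfrac{1}{d} I_d$, which are independent, zero-mean, symmetric matrices. Since $a_s a_s'$ has eigenvalues $1$ and $0$, the matrix $X_s$ has spectral norm at most $1 - \tfrac{1}{d} \leq 1$, giving $K = 1$ in the notation of Lemma~\ref{lemma:mat_bern}. A short computation using $\|a_s\|_2^2 = 1$ gives $X_s^2 = (1 - \tfrac{2}{d}) a_s a_s' + \tfrac{1}{d^2} I_d$, hence $\mathbb{E}[X_s^2] = \tfrac{d-1}{d^2} I_d$, so the variance proxy satisfies $\sigma^2 = \big\| \sum_{s=1}^{\bar{T}} \mathbb{E}[X_s^2] \big\|_2 = \bar{T}\,\tfrac{d-1}{d^2} \leq \tfrac{\bar{T}}{d}$.

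With these constants in hand I would invoke Matrix Bernstein with the choice $\epsilon = \tfrac{\bar{T}}{2d}$. On the complementary (high-probability) event where $\| \sum_s X_s \|_2 \leq \epsilon$, we get $\sum_s a_s a_s' \succcurlyeq (\tfrac{\bar{T}}{d} - \epsilon) I_d$, so $\lambda_{\min}(V_{\bar{T}}) \geq \lambda + \tfrac{\bar{T}}{d} - \epsilon \geq \tfrac{\bar{T}}{2d}$, which is exactly the desired bound. The remaining work is to certify that the failure probability is at most $\delta$: substituting $\epsilon = \tfrac{\bar{T}}{2d}$, $\sigma^2 \leq \tfrac{\bar{T}}{d}$, and $K=1$ into the exponent yields $\tfrac{\epsilon^2/2}{\sigma^2 + K\epsilon/3} \geq \tfrac{3\bar{T}}{28 d}$, so the tail bound $2d\,\exp(-\tfrac{3\bar{T}}{28d}) \leq \delta$ holds precisely when $\bar{T} \geq \tfrac{28 d}{3}\log(2d/\delta)$, matching the stated hypothesis. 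I expect this final exponent manipulation to be the main obstacle: the threshold constant $\tfrac{28}{3}$ is dictated by the numerical interplay of $\epsilon$, $\sigma^2$, and $K$, so the scaling $\epsilon = \bar{T}/(2d)$ must be tuned to land on it, and one must be careful that replacing $\sigma^2$ by its upper bound $\bar{T}/d$ only enlarges the tail (it sits in the denominator), so the inequality is in the safe direction.

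Finally, for the specialized second claim I would set $\delta = 2/T$, so that $2d/\delta = dT$, and verify that the choice $\bar{T} = \ceil{10 L^2 d\sqrt{T}\log(dLT)}$ indeed satisfies $\bar{T} \geq \tfrac{28d}{3}\log(dT)$; this uses $L \geq 1$, the standing assumption $T \geq d^2$ (so $\sqrt{T} \geq 1$ and $10 > 28/3$), and $\log(dLT) \geq \log(dT)$. The conclusion $\lambda_{\min}(V_t) \geq \tfrac{\bar{T}}{2d} \geq 5 L^2\sqrt{T}\log(dLT)$ then follows immediately from $\bar{T} \geq 10 L^2 d\sqrt{T}\log(dLT)$.
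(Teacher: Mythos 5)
Your proof is correct and follows essentially the same route as the paper's: the same monotonicity reduction to $V_{\bar{T}}$, the same centering $X_s = a_s a_s' - \tfrac{1}{d}I_d$ with $K=1$ and $\sigma^2 \leq \bar{T}/d$, the same choice $\epsilon = \bar{T}/(2d)$ in Matrix Bernstein, and the same exponent arithmetic yielding the threshold $\bar{T} \geq \tfrac{28d}{3}\log(2d/\delta)$. The only (immaterial) differences are that you compute $\mathbb{E}[X_s^2] = \tfrac{d-1}{d^2}I_d$ exactly where the paper settles for the upper bound $\tfrac{1}{d}I_d$, and you phrase the final eigenvalue step via the psd ordering rather than the paper's eigenvalue-shift argument.
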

\begin{proof}
The proof is an application of the Matrix Bernstein inequality in Lemma \ref{lemma:mat_bern}. To get started, recall that the pure exploration phase of the \texttt{IC-LinUCB} algorithm lasts for the first $\bar{T}+1$ time-steps, and that $a_t \sim \textrm{Unif}(\mathbb{S}^{d-1}), \forall t\in[\bar{T}+1]$. Define
\begin{equation}
    X_t \triangleq a_t a'_t - \bar{\Sigma}; \hspace{2mm} \textrm{and} \hspace{2mm}  \bar{\Sigma} \triangleq \frac{I_d}{d}. 
\end{equation}
From the choice of the actions during the pure exploration phase, we then immediately have 
\begin{equation}
\mathbb{E}[X_t] = 0_d; \hspace{2mm} \textrm{and} \hspace{2mm} \lambda_{\max}(X_t) \leq 1, \forall t \in [\bar{T}], 
\end{equation}
where we used $0_d$ to represent a $d \times d$ matrix with every entry equal to $0$. Since each $X_t$ is symmetric, we have ${\Vert X_t \Vert}_2 \leq 1, \forall t \in [\bar{T}].$ Now let $Y=\sum_{t=1}^{\bar{T}} \mathbb{E}[X^2_t]=\bar{T} \left(\mathbb{E}[{X}^2_1]\right)$. Now observe that 
\begin{equation}
    \begin{aligned}
    \mathbb{E}[{X}^2_1] &= \mathbb{E}\left[{\left(a_1a'_1- \bar{\Sigma}\right)}^2\right]\\
    &=\mathbb{E}\left[{\left(a_1a'_1\right)}^2\right]- {\bar{\Sigma}}^2 \\
    &\preccurlyeq  \mathbb{E}\left[{\left(a_1a'_1\right)}^2\right] \\
    &=\mathbb{E}\left[a_1 a'_1 a_1 a'_1\right] \\
    &= \mathbb{E}\left[\left({\Vert a_1 \Vert}^2\right) a_1 a'_1 \right] \\
    &= \frac{I_d}{d}, \\
    \end{aligned}
\end{equation}
where we used the fact that ${\Vert a_1 \Vert}^2=1$. We thus have
\begin{equation}
    Y \preccurlyeq\frac{\bar{T}}{d} I_d,
\end{equation}
and hence,
\begin{equation}
    {\norm[\bigg]  {\sum\limits_{t=1}^{\bar{T}} \mathbb{E}[X^2_t]} }_2 = \lambda_{\max}(Y) \leq \lambda_{\max}\left(\frac{\bar{T}}{d} I_d\right) = \frac{\bar{T}}{d}. 
\end{equation}
Setting $K=1$, $N=\bar{T}$, $\epsilon=\bar{T}/{2d}$, using $\sigma^2 \leq \bar{T}/d$, and appealing to Lemma \ref{lemma:mat_bern}, we obtain 
\begin{equation}
    \mathbb{P}\left( {\norm[\bigg] { \sum_{t=1}^{\bar{T}} X_t} }_2 \geq \frac{\bar{T}}{2d} \right) \leq \delta, \hspace{2mm} \textrm{if} \hspace{2mm} \bar{T} \geq \frac{28d}{3} \log\left(\frac{2d}{\delta}\right).
\end{equation}
Suppose $\bar{T}$ satisfies the condition above, and let $Z=\sum_{t=1}^{\bar{T}} X_t$. We then have that with probability at least $1-\delta$,
\begin{equation}
    {\Vert Z \Vert}_2 < \frac{\bar{T}}{2d}.
\label{eqn:Z_bnd}
\end{equation}
Since 
\begin{equation}
    \sum_{t=1}^{\bar{T}} a_t a'_t = Z + \bar{T} \bar{\Sigma},
\end{equation}
it holds that
\begin{equation}
    \psi\left(\sum_{t=1}^{\bar{T}} a_t a'_t \right) =\psi(Z) + \frac{\bar{T}}{d},
\label{eqn:eigen}
\end{equation}
where we used $\psi(A)$ to denote an arbitrary eigenvalue of a matrix $A$. Now notice that
$$ \vert \psi(Z) \vert \leq \rho(Z) =  {\Vert Z \Vert}_2 < \frac{\bar{T}}{2d}, $$ 
where we used the fact that $Z$ is symmetric, and \eqref{eqn:Z_bnd}. We also used $\rho(Z)$ to denote the spectral radius of $Z$. The above inequality along with \eqref{eqn:eigen} immediately leads to the following: 
$$ \lambda_{\min}\left(\sum\limits_{t=1}^{\bar{T}} a_t a'_t \right) \geq \frac{\bar{T}}{2d}. $$
Combining all the above pieces, we have that with probability at least $1-\delta$,
$$ \lambda_{\min}(V_t) \geq \lambda_{\min}(V_{\bar{T}}) \geq  \lambda_{\min}\left(\sum\limits_{t=1}^{\bar{T}} a_t a'_t \right) \geq \frac{\bar{T}}{2d}, \forall t \geq \bar{T},$$
which is the desired conclusion. To arrive at \eqref{eqn:min_eigen}, we simply note that the choice $\bar{T}=\ceil{10L^2d\sqrt{T}\log(dLT)}$ satisfies the criterion that $\bar{T} \geq \frac{28d}{3} \log(2d/\delta)$, with $\delta=2/T$.  
\end{proof}

The next simple lemma concerns the deviation of the noise sequence.

\begin{lemma}
\label{lemma:noise}
The following is true: 
\begin{equation}
    \mathbb{P}\left( \exists t \in [T]:  \vert \eta_t \vert \geq \sqrt{4\log(T)} \right) \leq \frac{2}{T}.
\end{equation}
\end{lemma}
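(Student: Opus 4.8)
The plan is to prove the tail bound on the noise sequence by applying the standard sub-Gaussian maximal concentration inequality to each $\eta_t$ individually, and then combining the resulting per-time-step bounds via a union bound over the horizon. Since $\{\eta_t\}$ is assumed to be a sequence of $1$-subgaussian random variables (as stated in Eq.~\eqref{eqn:obs_model}), each $\eta_t$ satisfies the standard sub-Gaussian tail estimate, namely $\mathbb{P}(|\eta_t| \geq u) \leq 2\exp(-u^2/2)$ for every $u > 0$.

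First I would fix the threshold $u = \sqrt{4\log T}$ and evaluate the single-variable tail bound at this value. Substituting gives $\mathbb{P}(|\eta_t| \geq \sqrt{4\log T}) \leq 2\exp(-4\log T / 2) = 2\exp(-2\log T) = 2 T^{-2}$. This is the crux of the calculation: the factor $4$ inside the square root is chosen precisely so that the exponent becomes $-2\log T$, producing a $T^{-2}$ decay that leaves room for the union bound.

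Next I would take a union bound over the $T$ time-steps $t \in [T]$. Since the event $\{\exists t \in [T]: |\eta_t| \geq \sqrt{4\log T}\}$ is the union of the $T$ individual events, subadditivity of probability yields
\begin{equation}
\mathbb{P}\left(\exists t \in [T]: |\eta_t| \geq \sqrt{4\log T}\right) \leq \sum_{t=1}^{T} \mathbb{P}\left(|\eta_t| \geq \sqrt{4\log T}\right) \leq T \cdot \frac{2}{T^2} = \frac{2}{T},
\end{equation}
which is exactly the claimed bound. Note that this argument requires no independence assumption among the $\eta_t$, since the union bound is purely subadditive; only the marginal sub-Gaussian property of each $\eta_t$ is used.

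This lemma is routine, so I do not anticipate a genuine obstacle; the only point requiring mild care is confirming the precise form of the sub-Gaussian tail inequality being invoked. Different sources state the $1$-subgaussian tail with slightly varying constants (some write $\mathbb{P}(|\eta_t| \geq u) \leq 2\exp(-u^2/2)$, others absorb constants differently), so I would make sure to use the convention consistent with the paper's definition of $1$-subgaussian, which is the one yielding the clean $2T^{-2}$ per-step bound. As long as that convention holds, the two-line computation above suffices.
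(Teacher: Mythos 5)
Your proposal is correct and matches the paper's own proof essentially verbatim: the paper also applies the union bound over $t \in [T]$ and then invokes the Hoeffding tail bound for $1$-subgaussian random variables (citing Theorem 5.3 of Lattimore--Szepesv\'ari), which is precisely the convention $\mathbb{P}(|\eta_t| \geq u) \leq 2\exp(-u^2/2)$ you used to obtain the per-step bound $2/T^2$. Your additional observation that no independence among the $\eta_t$ is needed is accurate and a nice clarification, since the argument only uses subadditivity and the marginal tail bounds.
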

\begin{proof}
Recalling that each $\eta_t$ is 1-subgaussian, and applying the union bound, we obtain
\begin{equation}
    \begin{aligned}
    \mathbb{P}\left( \exists t \in [T]:  \vert \eta_t \vert \geq \sqrt{4\log(T)} \right) &\leq \sum_{t\in[T]} \mathbb{P}\left( \vert \eta_t \vert \geq \sqrt{4\log(T)} \right) \\
    &\leq T \times \frac{2}{T^2},
    \end{aligned}
\end{equation}
where the last inequality follows from invoking the Hoeffding bound for subgaussian random variables; see for instance \cite[Theorem 5.3]{tor}.
\end{proof}

The next step is to construct a ``clean event" for our subsequent analysis. To this end, consider the following three events.

\begin{equation}
    \begin{aligned}
    \mathcal{G}_1 &=\{\lambda_{\min}(V_t) \geq 5 L^2 \sqrt{T} \log(dLT), \forall t\geq \bar{T}\}. \\
    \mathcal{G}_2 &= \{\vert \eta_t \vert < \sqrt{4\log(T)}, \forall t \in [T]\}. \\ 
    \mathcal{G}_3 &= \{\theta_* \in \mathcal{C}_t, \forall t \in [T] \},
    \end{aligned}
\label{eqn:events}
\end{equation}
where $\mathcal{C}_t$ is as in \eqref{eqn:enc_conf_region}. Let $\mathcal{G}=\mathcal{G}_1 \cap \mathcal{G}_2 \cap \mathcal{G}_3$. In words, $\mathcal{G}$ is an event on which (i) the minimum eigenvalue of the covariance matrix is suitably bounded from below, (ii) the noise sequence is well-behaved, and (iii) the true parameter $\theta_*$ lies in a known confidence region. Setting $\delta=1/T$ in Lemma \ref{lemma:enc_conf_region}, and using Lemma's \ref{lemma:eigen_cov} and \ref{lemma:noise}, we immediately obtain that
\begin{equation}
    \mathbb{P}(\mathcal{G}) \geq 1-\frac{5}{T}.
\label{eqn:prob_clean_event}
\end{equation}

In what follows, we will condition on the clean event $\mathcal{G}$. Our next step is to bound the gap between consecutive estimates of $\theta_*$ at the agent.

\begin{lemma} On the event $\mathcal{G}$, the following holds $\forall t \geq \bar{T}$:
\begin{equation}
    {\Vert \hat{\theta}^{(a)}_{t+1}-\hat{\theta}^{(a)}_{t} \Vert}_2 \leq f(T), \hspace{2mm} \textrm{where} \hspace{2mm} f(T) \triangleq \frac{3}{5L} \sqrt{\frac{\beta_T}{T\log(dLT)}}.
\label{eqn:iterate_bound}
\end{equation}
\label{lemma:iterate_bound}
\end{lemma}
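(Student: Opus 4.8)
The plan is to bound $\lVert \hat{\theta}^{(a)}_{t+1}-\hat{\theta}^{(a)}_{t}\rVert_2$ by writing out the explicit form of the least-squares estimate from Eq.~\eqref{eqn:least_square} and controlling the change induced by a single new observation $(a_{t+1},y_{t+1})$. First I would express the difference in terms of the rank-one update to the covariance matrix: since $V_{t+1}=V_t+a_{t+1}a'_{t+1}$ and $\hat{\theta}^{(a)}_{t+1}=V^{-1}_{t+1}\sum_{s=1}^{t+1}a_s y_s$, I would substitute $\sum_{s=1}^{t+1}a_s y_s = V_t \hat{\theta}^{(a)}_t + a_{t+1}y_{t+1}$ and simplify to obtain a closed-form expression for the increment. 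Using the Sherman--Morrison identity to handle $V^{-1}_{t+1}$ would let me write the increment as $V^{-1}_{t+1}a_{t+1}\bigl(y_{t+1}-\langle \hat{\theta}^{(a)}_t, a_{t+1}\rangle\bigr)$ (the usual recursive least-squares form), which already isolates the two quantities I need to bound: the vector $V^{-1}_{t+1}a_{t+1}$ and the scalar prediction residual.

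Next I would bound each factor separately. For the matrix-vector term, I would use $\lVert V^{-1}_{t+1}a_{t+1}\rVert_2 \leq \lVert V^{-1}_{t+1}\rVert_2 \lVert a_{t+1}\rVert_2 \leq L/\lambda_{\min}(V_{t+1})$, invoking Assumption~\ref{ass:actions}(ii) for $\lVert a_{t+1}\rVert_2 \leq L$ and the eigenvalue lower bound $\lambda_{\min}(V_{t+1})\geq 5L^2\sqrt{T}\log(dLT)$ from event $\mathcal{G}_1$ (valid since $t+1\geq \bar{T}$). For the residual $\lvert y_{t+1}-\langle \hat{\theta}^{(a)}_t, a_{t+1}\rangle\rvert$, I would write $y_{t+1}=\langle \theta_*, a_{t+1}\rangle + \eta_{t+1}$ and split it as $\lvert\langle \theta_*-\hat{\theta}^{(a)}_t, a_{t+1}\rangle\rvert + \lvert \eta_{t+1}\rvert$. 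The noise term is controlled by $\mathcal{G}_2$ giving $\lvert\eta_{t+1}\rvert<\sqrt{4\log T}$, while the estimation-error term I would bound using the confidence region from $\mathcal{G}_3$: since $\theta_*\in\mathcal{C}_{t+1}$ means $\lVert \hat{\theta}^{(a)}_t-\theta_*\rVert_{V_t}\leq \sqrt{\beta_T}$, I can convert to the Euclidean/$a_{t+1}$-weighted norm via $\lvert\langle \theta_*-\hat{\theta}^{(a)}_t, a_{t+1}\rangle\rvert \leq \lVert \theta_*-\hat{\theta}^{(a)}_t\rVert_{V_t}\lVert a_{t+1}\rVert_{V^{-1}_t}\leq \sqrt{\beta_T}\cdot L/\sqrt{\lambda_{\min}(V_t)}$, again using the eigenvalue bound.

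Combining these factors, the increment is at most a constant times $\sqrt{\beta_T}\cdot\frac{L}{\lambda_{\min}(V_{t+1})}\bigl(\text{estimation error}+\text{noise}\bigr)$, and after plugging in $\lambda_{\min}\gtrsim L^2\sqrt{T}\log(dLT)$ I expect the dominant scaling to be $\sqrt{\beta_T}/(L\sqrt{T\log(dLT)})$, matching the target $f(T)=\tfrac{3}{5L}\sqrt{\beta_T/(T\log(dLT))}$ up to tracking the constants carefully. The main obstacle I anticipate is bookkeeping the constants so that everything collapses into exactly the factor $3/(5L)$: I will need to be careful about whether to use $\lambda_{\min}(V_t)$ or $\lambda_{\min}(V_{t+1})$ at each step (the former is smaller, so it gives the conservative bound and both satisfy the same lower bound on $\mathcal{G}_1$), and to verify that the residual bound, which mixes a $\sqrt{\beta_T}$ term and a $\sqrt{4\log T}$ term, is dominated by the $\sqrt{\beta_T}$ contribution given $\beta_T\gtrsim \log T$ from Eq.~\eqref{eqn:conf_radius}. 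A secondary subtlety is ensuring the confidence-region norm conversion is applied with the correct matrix ($V_t$ versus $V_{t+1}$), which may require relating $\lVert a_{t+1}\rVert_{V^{-1}_t}$ to $\lVert a_{t+1}\rVert_{V^{-1}_{t+1}}$; since $V^{-1}_{t+1}\preccurlyeq V^{-1}_t$, using $V^{-1}_t$ is the safe overestimate.
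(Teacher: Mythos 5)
Your proposal follows essentially the same route as the paper's proof: the same rank-one update leading to the recursive least-squares form $\hat{\theta}^{(a)}_{t+1}-\hat{\theta}^{(a)}_{t}=V^{-1}_{t+1}a_{t+1}\left(y_{t+1}-\langle \hat{\theta}^{(a)}_t, a_{t+1}\rangle\right)$, the same split of the residual into noise plus estimation error, and the same use of the events $\mathcal{G}_1$, $\mathcal{G}_2$, $\mathcal{G}_3$ with the weighted Cauchy--Schwarz pairing ${\Vert \theta_*-\hat{\theta}^{(a)}_t\Vert}_{V_t}{\Vert a_{t+1}\Vert}_{V^{-1}_t}$. The only cosmetic difference is your appeal to Sherman--Morrison (the paper simply substitutes $V_t=V_{t+1}-a_{t+1}a'_{t+1}$), and your anticipated constant bookkeeping does close exactly as in the paper: the noise term contributes $\tfrac{2}{5L}\sqrt{\beta_T/(T\log(dLT))}$ (after absorbing it via $\beta_T\geq 1$) and the estimation-error term at most $\tfrac{1}{5L}\sqrt{\beta_T/(T\log(dLT))}$, summing to $f(T)$.
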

\begin{proof}
We first develop a recursion relating $\hat{\theta}^{(a)}_{t+1}$ to $\hat{\theta}^{(a)}_{t}$. Based on \eqref{eqn:least_square}, observe that 
\begin{equation}
    \begin{aligned}
    \hat{\theta}^{(a)}_{t+1} &= V^{-1}_{t+1} \left(\sum\limits_{s=1}^{t+1} a_s y_s \right) \\
    &= V^{-1}_{t+1} \left(\sum\limits_{s=1}^{t} a_s y_s + a_{t+1} y_{t+1} \right) \\
    &= V^{-1}_{t+1} \left(V_t \hat{\theta}^{(a)}_{t} + a_{t+1} y_{t+1} \right) \\
    &= V^{-1}_{t+1} \left(\left(V_{t+1}-a_{t+1}a'_{t+1}\right) \hat{\theta}^{(a)}_{t} + a_{t+1} y_{t+1} \right) \\
    &= \hat{\theta}^{(a)}_{t} + V^{-1}_{t+1} a_{t+1} \left(y_{t+1}-\langle a_{t+1}, \hat{\theta}^{(a)}_{t} \rangle \right)\\
    &= \hat{\theta}^{(a)}_{t} + V^{-1}_{t+1} a_{t+1} \left(y_{t+1}-\langle a_{t+1}, \theta_* \rangle + \langle \theta_* -\hat{\theta}^{(a)}_{t}, a_{t+1} \rangle  \right) \\
    &= \hat{\theta}^{(a)}_{t} + V^{-1}_{t+1} a_{t+1} \eta_{t+1} + V^{-1}_{t+1} a_{t+1} \left(\langle \theta_* -\hat{\theta}^{(a)}_{t}, a_{t+1} \rangle \right).
    \end{aligned}
\label{eqn:iterate_recursion}
\end{equation}
For the last step, we used the expression of the observation model in \eqref{eqn:obs_model}. From \eqref{eqn:iterate_recursion}, we immediately obtain:
\begin{equation}
    {\Vert{\hat{\theta}^{(a)}_{t+1}-\hat{\theta}^{(a)}_{t}}\Vert}_2 \leq \underbrace{\norm[\bigg]{ V^{-1}_{t+1} a_{t+1} \eta_{t+1}}_2}_{T_1} + \underbrace{\norm[\bigg]{ V^{-1}_{t+1} a_{t+1} \left(\langle \theta_* -\hat{\theta}^{(a)}_{t}, a_{t+1} \rangle \right)}_2}_{T_2}.
\label{eqn:iterate_interim}
\end{equation}
We now proceed to bound each of the terms $T_1$ and $T_2$ separately. To this end, suppose $t\geq \bar{T},$ and note that we are on the event $\mathcal{G}$. We then have
\begin{equation}
    \begin{aligned}
    T_1 & \leq \vert \eta_{t+1} \vert {\Vert a_{t+1} \Vert}_2 {\Vert V^{-1}_{t+1} \Vert}_2\\
    & \leq L \sqrt{4\log(T)} {\Vert V^{-1}_{t+1} \Vert}_2,
    \end{aligned}
\label{eqn:T1_interim}
\end{equation}
where we used the property of event $\mathcal{G}_2$ in \eqref{eqn:events} and the bound on the actions from Assumption \ref{ass:actions}. Appealing to the property of event $\mathcal{G}_1$ in \eqref{eqn:events}, we have:
\begin{equation}
    {\Vert V^{-1}_{t+1} \Vert}_2 = \lambda_{\max}(V^{-1}_{t+1}) = \frac{1}{\lambda_{\min}(V_{t+1})} \leq \frac{1}{5 L^2 \sqrt{T} \log(dLT)}.
\label{eqn:bound_V_norm}
\end{equation}
Combining the above bound with that in \eqref{eqn:T1_interim}, we obtain
\begin{equation}
\begin{aligned}
    T_1 &\leq  \frac{2\sqrt{\log(T)}}{5L\sqrt{T} \log(dLT)} \\
    &\leq \frac{2\sqrt{\log(dLT)}}{5L\sqrt{T} \log(dLT)} \\
    & = \frac{2}{5L\sqrt{T\log(dLT)}},
\end{aligned}
\label{eqn:T_1}
\end{equation}
where for the second step, we used $dL \geq 1$. For bounding the term $T_2$, we proceed as follows:
\begin{equation}
    \begin{aligned}
    T_2 & \leq  {\Vert V^{-1}_{t+1} \Vert}_2 {\Vert a_{t+1} \Vert}_2 \left| \left(\langle \theta_* -\hat{\theta}^{(a)}_{t}, a_{t+1} \rangle \right) \right| \\
    & \overset{(a)}\leq \frac{1}{5 L \sqrt{T} \log(dLT)} \left| \left(\langle \theta_* -\hat{\theta}^{(a)}_{t}, a_{t+1} \rangle \right) \right| \\
    & \leq \frac{1}{5 L \sqrt{T} \log(dLT)} {\Vert \theta_* -\hat{\theta}^{(a)}_{t} \Vert}_{V_t} {\Vert a_{t+1} \Vert}_{V^{-1}_t} \\
    & \overset{(b)} \leq \frac{\sqrt{\beta_T}}{5 L \sqrt{T} \log(dLT)} {\Vert a_{t+1} \Vert}_{V^{-1}_t}.
    \end{aligned}
\label{eqn:T_2_interim}
\end{equation}
In the above steps, we used \eqref{eqn:bound_V_norm} and the bound on the actions for (a); and the property of the event $\mathcal{G}_3$ for (b). As for the term ${\Vert a_{t+1} \Vert}_{V^{-1}_t}$, observe
\begin{equation}
    \begin{aligned}
    {\Vert a_{t+1} \Vert}_{V^{-1}_t} &= \sqrt{a'_{t+1} V^{-1}_t a_{t+1}} \\
    &\leq \sqrt{ {\Vert a_{t+1} \Vert}^2_2 \,  \lambda_{\max}(V^{-1}_t) } \\
    &\leq \sqrt{\frac{1}{5\sqrt{T}\log(dLT)}}.
    \end{aligned}
\end{equation}
For the last step, we used \eqref{eqn:bound_V_norm}. Combining the above inequality with that in \eqref{eqn:T_2_interim}, we obtain
\begin{equation}
    T_2 \leq \frac{\sqrt{\beta_T}}{L{\left(5\sqrt{T}\log(dLT)\right)}^{\frac{3}{2}}}. 
\label{eqn:T_2}
\end{equation}
Putting together the bounds in \eqref{eqn:iterate_interim}, \eqref{eqn:T_1}, and \eqref{eqn:T_2}, and simplifying, we obtain the bound in \eqref{eqn:iterate_bound}. This concludes the proof. 
\end{proof}

The next lemma justifies the adaptive encoding strategy outlined in Algorithm \ref{algo:Encoder}.

\begin{lemma} (\textbf{Encoding Region}) On the event $\mathcal{G}$, the following is true:
$$ e_t \in \mathcal{B}_{d}(0,p_t), \forall t \in \{\bar{T}+1, \ldots, T\},$$
where $e_t$ is the innovation in line 2 of Algorithm \ref{algo:Encoder}, and $p_t$ is as defined in Eq.  \eqref{eqn:quantizer_eqs}.
\end{lemma}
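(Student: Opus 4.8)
The plan is to prove the claim by induction on $t$, tracking both the innovation $e_t$ and the server-agent gap. The key quantity to control inductively is $d_{t} \triangleq {\Vert \hat{\theta}^{(a)}_{t} - \hat{\theta}^{(s)}_{t} \Vert}_2$, the residual encoding error after the server updates its estimate at time $t$. The encoding scheme partitions $\mathcal{B}_d(0,p_t)$ into balls of radius $\epsilon p_t$, so \emph{provided} $e_t$ actually lands in $\mathcal{B}_d(0,p_t)$, the decoded center $\tilde{e}_t$ satisfies ${\Vert e_t - \tilde{e}_t \Vert}_2 \leq \epsilon p_t$, whence $d_t = {\Vert \hat{\theta}^{(a)}_t - (\hat{\theta}^{(s)}_{t-1} + \tilde{e}_t) \Vert}_2 = {\Vert e_t - \tilde{e}_t \Vert}_2 \leq \epsilon p_t$. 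By definition $\epsilon p_t = \epsilon(q_t + f(T))$, and from the recursion \eqref{eqn:quantizer_eqs} we have $q_{t+1} = \epsilon(q_t + f(T))$, so in fact $d_t \leq q_{t+1}$. This identity, $d_t \leq q_{t+1}$, is the clean inductive hypothesis I want to propagate.

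First I would establish the base case at $t = \bar{T}+1$. Here the server's estimate is the fixed initialization $\hat{\theta}^{(s)}_{\bar{T}}$, which lies in $\Theta$, and $\hat{\theta}^{(a)}_{\bar{T}+1}$ also has norm bounded by roughly $M$ (since $\theta_* \in \Theta$ and the estimate concentrates around it on $\mathcal{G}$). Thus ${\Vert e_{\bar{T}+1} \Vert}_2 \leq {\Vert \hat{\theta}^{(a)}_{\bar{T}+1} \Vert}_2 + {\Vert \hat{\theta}^{(s)}_{\bar{T}} \Vert}_2$, and I would verify this is at most $q_{\bar{T}+1} + f(T) = p_{\bar{T}+1}$ using the initialization $q_{\bar{T}} = 10M$ and the recursion for $q_{\bar{T}+1}$. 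This should follow from a crude norm bound, since $q_{\bar{T}} = 10M$ is deliberately chosen large to absorb the initial discrepancy.

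For the inductive step, suppose $e_t \in \mathcal{B}_d(0,p_t)$ (so the decoding succeeds and $d_t \leq q_{t+1}$ holds as above). I want to show $e_{t+1} \in \mathcal{B}_d(0,p_{t+1})$. The decomposition is the crux:
\begin{equation*}
e_{t+1} = \hat{\theta}^{(a)}_{t+1} - \hat{\theta}^{(s)}_{t} = \left(\hat{\theta}^{(a)}_{t+1} - \hat{\theta}^{(a)}_{t}\right) + \left(\hat{\theta}^{(a)}_{t} - \hat{\theta}^{(s)}_{t}\right).
\end{equation*}
The second bracket has norm $d_t \leq q_{t+1}$ by the inductive hypothesis, and the first bracket is the successive-estimate gap, which by Lemma \ref{lemma:iterate_bound} is at most $f(T)$ on $\mathcal{G}$ for all $t \geq \bar{T}$. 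By the triangle inequality, ${\Vert e_{t+1} \Vert}_2 \leq q_{t+1} + f(T) = p_{t+1}$, closing the induction.

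The main obstacle I anticipate is the base case rather than the inductive step: the induction only works because $d_t \leq q_{t+1}$, and this in turn requires decoding to have succeeded at time $t$, i.e., that $e_t$ was genuinely inside $\mathcal{B}_d(0,p_t)$ and not an overflow. The two facts — ``$e_t \in \mathcal{B}_d(0,p_t)$'' and ``$d_t \leq q_{t+1}$'' — must be carried together through the induction, since the encoding-error bound $d_t \leq \epsilon p_t = q_{t+1}$ is only valid when no overflow occurs. I would therefore state the inductive hypothesis as the conjunction of both, verify the base case norm bound carefully against the specific constant $q_{\bar{T}} = 10M$ (this is where the precise initialization matters), and note that Lemma \ref{lemma:iterate_bound} supplies the $f(T)$ bound uniformly over the whole range $t \geq \bar{T}$, so the inductive step itself is a routine triangle-inequality argument once the two invariants are coupled correctly.
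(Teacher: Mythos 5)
Your proposal is correct and follows essentially the same route as the paper's proof: the same induction, the same decomposition $e_{t+1} = (\hat{\theta}^{(a)}_{t+1} - \hat{\theta}^{(a)}_{t}) + (\hat{\theta}^{(a)}_{t} - \hat{\theta}^{(s)}_{t})$ with Lemma \ref{lemma:iterate_bound} bounding the first term and the no-overflow decoding guarantee $\Vert \hat{\theta}^{(a)}_t - \hat{\theta}^{(s)}_t \Vert_2 \leq \epsilon p_t = q_{t+1}$ bounding the second, and the same base-case strategy of bounding $\Vert \hat{\theta}^{(a)}_{\bar{T}+1} \Vert_2$ via the confidence set on $\mathcal{G}$ (the paper gets $4M$, so that $\Vert e_{\bar{T}+1}\Vert_2 \leq 5M \leq p_{\bar{T}+1}$ against the initialization $q_{\bar{T}} = 10M$). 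Your explicit coupling of the two invariants is exactly what the paper does implicitly, since membership $e_t \in \mathcal{B}_d(0,p_t)$ is what validates the decoding step inside the induction.
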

\begin{proof}
We will prove this result by induction. We start by establishing the base case.

\textbf{Base Case.} Our goal is to establish that $e_{\bar{T}+1} \in \mathcal{B}_d(0,p_{\bar{T}+1})$. Based on the property of event $\mathcal{G}_3$, let us note the following:
\begin{equation}
    \begin{aligned}
    & {\Vert \theta_* - \hat{\theta}^{(a)}_{\bar{T}+1} \Vert}^2_{V_{\bar{T}+1}} \leq \beta_{T} \\ 
    & \implies {\left(\theta_* - \hat{\theta}^{(a)}_{\bar{T}+1}\right)}' V_{\bar{T}+1} \left(\theta_* - \hat{\theta}^{(a)}_{\bar{T}+1}\right) \leq \beta_T \\ 
    & \implies \lambda_{\min}(V_{\bar{T}+1}) {\Vert \theta_* - \hat{\theta}^{(a)}_{\bar{T}+1} \Vert}^2_{2} \leq \beta_T \\
    & \implies {\Vert \theta_* - \hat{\theta}^{(a)}_{\bar{T}+1} \Vert}_{2} \leq \sqrt{\frac{\beta_T}{\lambda_{\min}(V_{\bar{T}+1})}} \\
    & \implies {\Vert \hat{\theta}^{(a)}_{\bar{T}+1} \Vert}_2 \leq {\Vert \theta_* \Vert}_2 + \sqrt{\frac{\beta_T}{\lambda_{\min}(V_{\bar{T}+1})}}\\
    & \implies {\Vert \hat{\theta}^{(a)}_{\bar{T}+1} \Vert}_2 \leq M + \sqrt{\frac{\beta_T}{5 L^2 \sqrt{T} \log(dLT)}}, \\
    \end{aligned}
\label{eqn:initial_error}
\end{equation}
where for the last step, we used \eqref{eqn:bound_V_norm}, and the fact that ${\Vert \theta_* \Vert}_2 \leq M$ since $\theta_* \in \Theta$. To proceed, we simplify the resulting expression above by plugging in the expression for $\beta_T$ from Eq. \eqref{eqn:conf_radius} with $\delta=1/T$. This yields:
\begin{equation}
    \begin{aligned}
    \sqrt{\frac{\beta_T}{5 L^2 \sqrt{T} \log(dLT)}} &= \frac{\sqrt{\lambda}M}{\sqrt{{5 L^2 \sqrt{T} \log(dLT)}}} + \sqrt{\frac{2\log\left(T\right)+d\log\left(\frac{d\lambda+TL^2}{d\lambda}\right)}{5 L^2 \sqrt{T} \log(dLT)}} \\
    &\leq \sqrt{\lambda}M + \sqrt{\frac{2}{5L^2 \sqrt{T}} + \frac{4d}{5L^2 \sqrt{T}}} \\
    &\leq \left(\sqrt{\lambda} + \sqrt{\frac{6d}{5L^2 \sqrt{T}} }\right) M \\
    & \leq 3M.
    \end{aligned}
\label{eqn:clean_up}
\end{equation}
In the above steps, we set the regularization parameter $\lambda=1$, and used the following facts: $M \geq 1$, $L \geq 1$, $\log(dLT) \geq 1$, and $T \geq d^2$. Combining the above bound with that in \eqref{eqn:initial_error}, we obtain
$$ {\Vert \hat{\theta}^{(a)}_{\bar{T}+1} \Vert}_2 \leq 4M. $$ We thus have
\begin{equation}
\begin{aligned}
  {\Vert  e_{\bar{T}+1} \Vert}_2 &= \Vert{\hat{\theta}^{(a)}_{\bar{T}+1} - \hat{\theta}^{(s)}_{\bar{T}}\Vert}_2 \\
  & \leq \Vert{\hat{\theta}^{(a)}_{\bar{T}+1}  \Vert}_2 + \Vert{\hat{\theta}^{(s)}_{\bar{T}}\Vert}_2 \\
  &\leq 4M + M \\
  & \leq 5M + \frac{3}{2} f(T) = p_{\bar{T}+1}, 
 \end{aligned} 
\end{equation}
where we used the fact that $\hat{\theta}^{(s)}_{\bar{T}} \in \Theta$, and hence, $\Vert{\hat{\theta}^{(s)}_{\bar{T}}\Vert}_2 \leq M$. We have thus established that ${\Vert  e_{\bar{T}+1} \Vert}_2 \leq p_{\bar{T}+1}$, implying that $e_{\bar{T}+1} \in \mathcal{B}_d(0,p_{\bar{T}+1}).$ This completes the base case. 

\textbf{Induction Step.} Now suppose for all $t\in\{\bar{T}+1, \ldots, k\}$, it holds that $e_t \in \mathcal{B}_d(0,p_t)$. Here, $k\in \{\bar{T}+2, \ldots, T-1\}$. Our goal is to establish that $e_{k+1} \in \mathcal{B}_d(0,p_{k+1})$. Based on the induction hypothesis, $e_k \in \mathcal{B}_d(0,p_k)$, and hence, the encoding operation outlined in Algorithm \ref{algo:Encoder} is valid at time-step $k$. Recall that $\tilde{e}_k$ is the center of the ball (of radius $\epsilon p_k$) that contains $e_k$. Based on the definition of an $\epsilon p_k$ net, we then immediately have
\begin{equation}
    {\Vert \tilde{e}_k-e_k \Vert}_2 \leq \epsilon p_k = \frac{1}{2} p_{k},
\label{eqn:enc_err}
\end{equation}
with $\epsilon$ set to $1/2$. Now observe
\begin{equation}
    \begin{aligned}
    \hat{\theta}^{(s)}_k &=  \hat{\theta}^{(s)}_{k-1} + \tilde{e}_k \\
    &=\hat{\theta}^{(s)}_{k-1} + e_k + \tilde{e}_k - e_k \\
   &= \hat{\theta}^{(s)}_{k-1} + \left(\hat{\theta}^{(a)}_{k}-\hat{\theta}^{(s)}_{k-1}\right) + \tilde{e}_k - e_k \\
  &= \hat{\theta}^{(a)}_{k} + \tilde{e}_k - e_k.
    \end{aligned}
\end{equation}
Based on the above display and \eqref{eqn:enc_err}, we conclude 
\begin{equation}
 {\Vert \hat{\theta}^{(s)}_k - \hat{\theta}^{(a)}_k \Vert}_2 = {\Vert \tilde{e}_k-e_k \Vert}_2  \leq \frac{1}{2} p_k.
\label{eqn:dec_error}
\end{equation}
We then have
\begin{equation}
    \begin{aligned}
    e_{k+1} & = \hat{\theta}^{(a)}_{k+1}-\hat{\theta}^{(s)}_{k}\\
    &= \left(\hat{\theta}^{(a)}_{k+1}-\hat{\theta}^{(a)}_k\right) + \left(\hat{\theta}^{(a)}_k - \hat{\theta}^{(s)}_{k}\right).
    \end{aligned}
\end{equation}
The triangle inequality then yields
\begin{equation}
    \begin{aligned}
    {\Vert e_{k+1} \Vert}_2 &\leq {\Vert \hat{\theta}^{(a)}_{k+1}-\hat{\theta}^{(a)}_k \Vert}_2 + {\Vert \hat{\theta}^{(a)}_k - \hat{\theta}^{(s)}_{k} \Vert}_2 \\
    & \overset{(a)} \leq f(T) + \frac{1}{2} p_k \\
    & \overset{(b)} = f(T) + \frac{1}{2} \left(q_k + f(T)\right)\\
    & \overset{(c)} = f(T)+q_{k+1} \\
    & \overset{(d)} = p_{k+1}.
    \end{aligned}
\end{equation}
In the above steps, we used Lemma \ref{lemma:iterate_bound} and Eq. \eqref{eqn:dec_error} for (a), and the definitions of $p_k, q_k, p_{k+1}$, and $q_{k+1}$ from Eq. \eqref{eqn:quantizer_eqs} for (b)-(d). Since ${\Vert e_{k+1} \Vert}_2 \leq p_{k+1}$, it follows that $e_{k+1} \in \mathcal{B}_d(0,p_{k+1})$, thereby establishing the induction claim. This completes the proof. 
\end{proof}

The next key result justifies the decision-making rule in lines 8-9 of the \texttt{IC-LinUCB} algorithm. 

\begin{lemma} (\textbf{Confidence Region at Server}) On the event $\mathcal{G}$, the following is true: $\theta_* \in \mathcal{C}^{(s)}_t, \forall t \in \{\bar{T}+2, \ldots, T\}$, where
$$
    \mathcal{C}^{(s)}_t = \{\theta\in\mathbb{R}^d: {\Vert \theta - \hat{\theta}^{(s)}_{t-1} \Vert}_{V_{t-1}} \leq \sqrt{\beta_T} + \textcolor{black}{\left(\sqrt{\lambda+(t-1)L^2}\right) q_t\}}.
$$
Moreover, $\forall t \geq \bar{T}+\tilde{T}$, we have 
\begin{equation}
   \left(\sqrt{\lambda+(t-1)L^2}\right) q_t \leq 4 \sqrt{\frac{\beta_T}{\log(dLT)}},
\label{eqn:inflation_bound}
\end{equation}
where 
\begin{equation}
    \tilde{T} =  \ceil*{\frac{\log\left(\frac{10M}{f(T)}\right)}{\log(2)}} \vee 2 = O\left(\log(dLT)\right).
\label{eqn:tilde_T}
\end{equation}
\end{lemma}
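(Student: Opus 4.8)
The plan is to establish the two claims separately. For the containment claim, I would decompose the server-centered distance via the triangle inequality in the $V_{t-1}$-weighted norm,
$$\Vert \theta_* - \hat{\theta}^{(s)}_{t-1}\Vert_{V_{t-1}} \leq \Vert \theta_* - \hat{\theta}^{(a)}_{t-1}\Vert_{V_{t-1}} + \Vert \hat{\theta}^{(a)}_{t-1} - \hat{\theta}^{(s)}_{t-1}\Vert_{V_{t-1}}.$$
On the clean event $\mathcal{G}$, the event $\mathcal{G}_3$ in \eqref{eqn:events} (equivalently Lemma \ref{lemma:enc_conf_region} with $\delta=1/T$) bounds the first term by $\sqrt{\beta_T}$. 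For the second term, I would invoke the encoding-error estimate \eqref{eqn:dec_error} from the proof of Lemma \ref{lemma:enc_dec}, namely $\Vert \hat{\theta}^{(a)}_{k}-\hat{\theta}^{(s)}_{k}\Vert_2 \leq \tfrac12 p_k$, which holds for every $k\geq\bar{T}+1$ on $\mathcal{G}$ since the encoding at step $k$ is valid there. The key algebraic observation is that with $\epsilon=1/2$ the quantizer recursion \eqref{eqn:quantizer_eqs} gives $\tfrac12 p_k = \tfrac12(q_k+f(T)) = q_{k+1}$; taking $k=t-1$ yields $\Vert \hat{\theta}^{(a)}_{t-1}-\hat{\theta}^{(s)}_{t-1}\Vert_2 \leq q_t$. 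Converting this Euclidean bound to the $V_{t-1}$-norm through $\Vert x\Vert_{V_{t-1}} \leq \sqrt{\lambda_{\max}(V_{t-1})}\,\Vert x\Vert_2$, together with the crude bound $\lambda_{\max}(V_{t-1}) \leq \lambda + (t-1)L^2$ (each rank-one term $a_sa'_s$ contributes eigenvalue at most $L^2$ by Assumption \ref{ass:actions}(ii)), produces exactly the inflation term $\left(\sqrt{\lambda+(t-1)L^2}\right)q_t$, giving $\theta_*\in\mathcal{C}^{(s)}_t$ for all $t\in\{\bar{T}+2,\ldots,T\}$.

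For the decay claim, I would analyze the scalar linear recursion $q_t = \tfrac12(q_{t-1}+f(T))$ directly. Its unique fixed point is $f(T)$, and subtracting it gives $q_t - f(T) = \tfrac12(q_{t-1}-f(T))$, hence $q_t - f(T) = (1/2)^{t-\bar{T}}(q_{\bar{T}}-f(T))$ with $q_{\bar{T}}=10M$. Therefore $q_t \leq f(T) + (1/2)^{t-\bar{T}}\cdot 10M$, and the choice $\tilde{T} = \ceil{\log_2(10M/f(T))}\vee 2$ is precisely what forces $(1/2)^{t-\bar{T}}\cdot 10M \leq f(T)$ once $t \geq \bar{T}+\tilde{T}$, so that $q_t \leq 2f(T)$ on that range.

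It then remains to push this through the inflation term. Using $\lambda=1$, $L\geq1$, $T\geq1$, I would bound $\sqrt{\lambda+(t-1)L^2} \leq \sqrt{1+TL^2} \leq L\sqrt{2T}$, so that for $t\geq\bar{T}+\tilde{T}$,
$$\left(\sqrt{\lambda+(t-1)L^2}\right)q_t \leq L\sqrt{2T}\cdot 2f(T) = 2\sqrt{2}\,L\sqrt{T}\,f(T).$$
Substituting $f(T)=\tfrac{3}{5L}\sqrt{\beta_T/(T\log(dLT))}$ cancels the $L$ and $\sqrt{T}$ factors and leaves $\tfrac{6\sqrt2}{5}\sqrt{\beta_T/\log(dLT)} \leq 4\sqrt{\beta_T/\log(dLT)}$, which is \eqref{eqn:inflation_bound}. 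The estimate $\tilde{T}=O(\log(dLT))$ then follows because $f(T)$ is polynomially small in the problem parameters (with $\lambda=1$, $\delta=1/T$ one has $\beta_T=O(d\log(TL))$ and $T\ge d^2$), so that $\log(10M/f(T))=O(\log(dLT))$.

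The main obstacle I expect is the conceptual crux of the first claim: recognizing that the encoding error is naturally controlled in the Euclidean norm (where the $\epsilon p_k$-net yields the clean identity $\tfrac12 p_k = q_{k+1}$), whereas the confidence set is expressed in the $V_{t-1}$-norm, and that the passage between the two is exactly what dictates the $\sqrt{\lambda+(t-1)L^2}$ inflation factor in $\mathcal{C}^{(s)}_t$. By contrast, the decay claim is essentially a routine fixed-point and geometric-series computation; the only real care needed there is tracking constants so that the leading coefficient $\tfrac{6\sqrt2}{5}$ stays below $4$.
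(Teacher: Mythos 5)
Your proposal is correct and follows essentially the same route as the paper's own proof: the triangle inequality in the $V_{t-1}$-norm, the bound ${\Vert \hat{\theta}^{(a)}_{t-1}-\hat{\theta}^{(s)}_{t-1} \Vert}_2 \leq \tfrac{1}{2}p_{t-1}=q_t$ inherited from the encoding analysis, the conversion ${\Vert x \Vert}_{V_{t-1}} \leq \sqrt{\lambda_{\max}(V_{t-1})}\,{\Vert x \Vert}_2$ with $\lambda_{\max}(V_{t-1}) \leq \lambda+(t-1)L^2$, and a roll-out of the $q_t$ recursion combined with the definition of $f(T)$. The only (cosmetic) difference is that you solve the recursion by subtracting its fixed point $f(T)$, yielding the slightly sharper intermediate bound $q_t \leq 2f(T)$ where the paper settles for $q_{\bar{T}+\tau} \leq 4f(T)$; both versions comfortably clear the final constant $4$ in the stated inequality.
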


\begin{proof}
Consider any $t\in\{\bar{T}+2, \ldots, T\}$. We then have
\begin{equation}
    \begin{aligned}
    {\Vert \hat{\theta}^{(s)}_{t-1} - \theta^* \Vert}_{V_{t-1}} & \overset{(a)} \leq {\Vert \hat{\theta}^{(a)}_{t-1} - \theta^* \Vert}_{V_{t-1}} + {\Vert \hat{\theta}^{(s)}_{t-1} - \hat{\theta}^{(a)}_{t-1} \Vert}_{V_{t-1}} \\
    & \overset{(b)}\leq \sqrt{\beta_T} + \sqrt{\left(\hat{\theta}^{(s)}_{t-1} - \hat{\theta}^{(a)}_{t-1}\right)' V_{t-1} \left(\hat{\theta}^{(s)}_{t-1} - \hat{\theta}^{(a)}_{t-1}\right)              } \\
    & \leq \sqrt{\beta_T} + \sqrt{\lambda_{\max}(V_{t-1}) {\Vert  \hat{\theta}^{(s)}_{t-1} - \hat{\theta}^{(a)}_{t-1} \Vert}^2_2             } \\
    & \overset{(c)} \leq \sqrt{\beta_T} + \sqrt{\lambda_{\max}(V_{t-1})} \left(\frac{1}{2}{p_{t-1}}\right) \\
    & \overset{(d)}=  \sqrt{\beta_T} + \sqrt{\lambda_{\max}(V_{t-1})} \left(\frac{1}{2}\left(q_{t-1}+f(T)\right)\right)\\
    & \overset{(e)}=  \sqrt{\beta_T} + \left(\sqrt{\lambda_{\max}(V_{t-1})}\right) q_t \\
    & \leq \sqrt{\beta_T} +  \left(\sqrt{\lambda+(t-1)L^2}\right)  q_t.
    \end{aligned}
\label{eqn:conf_interim}
\end{equation}
In the above steps, (a) follows from the triangle inequality; (b) follows from the defining property of event $\mathcal{G}_3$; (c) follows from Eq. \eqref{eqn:dec_error} in the analysis of Lemma \ref{lemma:enc_dec} (restated as Lemma 9 in the Appendix); and (d), (e) are a consequence of Eq.  \eqref{eqn:quantizer_eqs}. This establishes the first claim of the lemma.

For the second claim of the lemma, we roll out the recursion $q_{t}= \epsilon (q_{t-1}+f(T))$ for $\tau >0$ time-steps starting from $\bar{T}$:
\begin{equation}
\begin{aligned}
    q_{\bar{T}+\tau} &= {\epsilon}^{\tau} q_{\bar{T}} + \left(\sum\limits_{k=1}^{\tau} \epsilon^{k} \right) f(T) \\
    & \leq {\epsilon}^{\tau} q_{\bar{T}} + \left(\sum\limits_{k=0}^{\infty} \epsilon^{k} \right) f(T) \\
    & = {\epsilon}^{\tau} q_{\bar{T}} +  \frac{f(T)}{1-\epsilon} \\
    & = 10 M{\left(\frac{1}{2}\right)}^{\tau} + 2 f(T),
\end{aligned}    
\end{equation}
where for the last step, we used $\epsilon = 1/2$, and $q_{\bar{T}}=10M$. Now it is easy to verify that when 
$$ \tau \geq \ceil*{\frac{\log\left(\frac{10M}{f(T)}\right)}{\log(2)}}, $$
it holds that
$$ 10 M{\left(\frac{1}{2}\right)}^{\tau} \leq 2 f(T),$$
and hence, 
$q_{\bar{T}+\tau} \leq 4 f(T)$. Thus, for $t\geq \bar{T} + \tilde{T}$, where $\tilde{T}$ is as defined in Eq. \eqref{eqn:tilde_T}, we have 
\begin{equation}
\begin{aligned}
    \left(\sqrt{\lambda+(t-1)L^2}\right)  q_t & \leq 4 \left(\sqrt{\lambda+ TL^2}\right)  f(T) \\
    & = \frac{12}{5} \sqrt{ \frac{(\lambda + TL^2) \beta_T}{T L^2 \log(dLT)}} \\
    & \leq 4 \sqrt{\frac{\beta_T}{\log(dLT)}},
\end{aligned}
\end{equation}
where for the last step, we used $\lambda=1$, and $TL^2 \geq 1$. The fact that $\tilde{T}=O(\log(dLT))$ follows by plugging in the expression for $f(T)$ from \eqref{eqn:iterate_bound}, and then some simple algebra. 
\end{proof}
    
We are now equipped with all the pieces required for proving Theorem \ref{thm:ICLinUCB}. 

\begin{proof} (\textbf{Proof of Theorem \ref{thm:ICLinUCB}}) Let us condition on the clean event $\mathcal{G}$, and consider a time-step $t\geq {\bar{T}}+\tilde{T}+1$. Let $\tilde{\theta}_t$ be such that 
$$
    (a_t, \tilde{\theta}_t) = \argmax_{(a,\theta) \in \mathcal{A}_t \times \mathcal{C}^{(s)}_t} \langle \theta, a \rangle,
$$
where $\mathcal{C}^{(s)}_t$ is as defined in Eq. \eqref{eqn:dec_conf_int}. Next, let $a^*_t=\argmax_{a\in\mathcal{A}_t} \langle \theta_*, a \rangle$ be an optimal action at time-step $t$. We now proceed to bound the instantaneous regret $r_t = \langle \theta_*, a^*_t - a_t \rangle$ as follows:
\begin{equation}
    \begin{aligned}
    r_t & \overset{(a)} \leq \langle a_t, \tilde{\theta}_t - \theta_* \rangle \\
    & \leq {\Vert a_t \Vert}_{V^{-1}_{t-1}} \, {\Vert \theta_* - \tilde{\theta}_t \Vert}_{V_{t-1}} \\
    & \overset{(b)} \leq 2\left(1+4 \sqrt{\frac{1}{\log(dLT)}}\right) {\Vert a_t \Vert}_{V^{-1}_{t-1}} \sqrt{\beta_T} \\
    & \leq 10 {\Vert a_t \Vert}_{V^{-1}_{t-1}} {\sqrt{\beta_T}}. \\
    \end{aligned}
\end{equation}
In the above steps, (a) follows from the fact that $\theta_* \in \mathcal{C}^{(s)}_t, \forall t \geq \bar{T}+2$ on the event $\mathcal{G}$ as per Lemma \ref{lemma:dec_conf_region}. Hence, $\langle \theta_*, a^*_t \rangle \leq \langle \tilde{\theta}_t, a_t \rangle$. For (b), we invoked the bound from \eqref{eqn:inflation_bound}, and combined it with Eq. \eqref{eqn:conf_interim}. Using the fact that $r_t \leq 10 \leq 10 \sqrt{\beta_T}$ (based on Assumption \ref{ass:actions}(i)), and combining it with the above bound, we finally obtain
$$ r_t  \leq 10 \sqrt{\beta_T} \left(1 \wedge {\Vert a_t \Vert}_{V^{-1}_{t-1}} \right). $$
We now follow certain standard steps.
\begin{equation}
    \begin{aligned}
    \sum_{t=\bar{T}+\tilde{T}+1}^{T} \left(1 \wedge {\Vert a_t \Vert}^2_{V^{-1}_{t-1}} \right) &\leq 2 \hspace{-2.5mm} \sum_{t=\bar{T}+\tilde{T}+1}^{T}\hspace{-2.5mm} \log\left(1+ {\Vert a_t \Vert}^2_{V^{-1}_{t-1}}\right)\\
    &= 2 \hspace{-2.5mm} \sum_{t=\bar{T}+\tilde{T}+1}^{T} \hspace{-2.5mm} \log\left(\frac{\det(V_t)}{\det(V_{t-1})}\right)\\
    & = 2 \log\left(\frac{\det(V_{{T}})}{\det(V_{\bar{T}+\tilde{T}})}\right),
    \end{aligned}
\end{equation}
where for the first two steps, we used \cite[Lemma 19.4]{tor}. Based on Jensen's inequality, we then have
\begin{equation}
    \begin{aligned}
    \sum_{t=\bar{T}+\tilde{T}+1}^{T}\hspace{-2.5mm} r_t &\leq \sqrt{\left(T-(\bar{T}+\tilde{T}) \right) \hspace{-2.5mm}  \sum_{t=\bar{T}+\tilde{T}+1}^{T} \hspace{-2.5mm}  r^2_t} \\
    &\leq 10\sqrt{\left(T-(\bar{T}+\tilde{T})\right) \beta_T \hspace{-2.5mm}  \sum_{t=\bar{T}+\tilde{T}+1}^{T} \hspace{-2.5mm}  \left(1 \wedge {\Vert a_t \Vert}^2_{V^{-1}_{t-1}} \right)} \\
    &\leq 10 \sqrt{2\left(T-(\bar{T}+\tilde{T})\right) \beta_T  \log\left(\frac{\det(V_{{T}})}{\det(V_{\bar{T}+\tilde{T}})}\right)   }.
    \end{aligned}
\label{eqn:regret_later}
\end{equation}
Based on Assumption \ref{ass:actions}(i), we note that $r_t$ is trivially at most $1$ at each time-step $t \in [\bar{T}+\tilde{T}]$. Thus, we have:
\begin{equation}
\begin{aligned}
    \sum_{t=1}^{\bar{T}+\tilde{T}} r_t &\leq \bar{T}+\tilde{T} \\
    &\leq \ceil*{10 L^2d\sqrt{T}\log(dLT)} + \ceil*{\frac{\log\left(\frac{10M}{f(T)}\right)}{\log(2)}} \vee 2\\
    & = O(L^2 d \sqrt{T} \log(dLT)).
\end{aligned}
\end{equation}
Combining the above bound with that in Eq. \eqref{eqn:regret_later}, we obtain the following bound on the total regret conditioned on the clean event $\mathcal{G}$:
\begin{equation}
    \sum_{t=1}^{T} r_t \leq O(L^2 d \sqrt{T} \log(dLT)) + \underbrace{10  \sqrt{2\left(T-(\bar{T}+\tilde{T})\right) \beta_T  \log\left(\frac{\det(V_{{T}})}{\det(V_{\bar{T}+\tilde{T}})}\right)}}_{T_3}.
\end{equation}
For bounding $T_3$, we note based on the AM-GM inequality that
$$ \det(V_T) = \prod_{i=1}^{d}\lambda_i(V_T) \leq {\left(\frac{1}{d} \textrm{Trace}(V_T)\right)}^d \leq {\left(\frac{d \lambda+TL^2}{d}\right)}^d. $$
Moreover, note that $\det(V_{\bar{T}+\tilde{T}}) \geq \det(V_0)=\lambda^d$.\footnote{At this stage, we could have used a tighter lower bound on $\det(V_{\bar{T}+\tilde{T}})$ as follows: $\det(V_{\bar{T}+\tilde{T}}) \geq \det(V_{\bar{T}}) \geq {\left(\lambda_{\min}(V_{\bar{T}})\right)}^d \geq {\left(5 L^2 \sqrt{T} \log(dLT)\right)}^d$, where the last inequality follows from \eqref{eqn:bound_V_norm}. While this will improve the bound for $T_3$, the  improvement is of no consequence since $T_3$ will be dominated by the additive term $O(L^2 d \sqrt{T} \log(dLT))$.} Combining this bound with the one above for $\det(V_T)$, plugging in the expression for $\beta_T$ from Eq. \eqref{eqn:conf_radius} in $T_3$, and then simplifying, it is easy to verify that $T_3= O(d\sqrt{T}\log(LT))$. We have thus argued that on the clean event $\mathcal{G}$, the overall regret is of order $O(L^2 d \sqrt{T} \log(dLT)) + O(d\sqrt{T}\log(LT))=O(L^2 d \sqrt{T} \log(dLT))$.

Let $\mathcal{I}_{\mathcal{G}}$ represent an indicator random variable for the event $\mathcal{G}$. Moreover, recall from \eqref{eqn:prob_clean_event} that $\mathbb{P}(\mathcal{G}^{c}) \leq 5/T$. We now proceed to bound the regret $R_T$ as follows:
\begin{equation}
\begin{aligned}
    R_T &= \mathbb{E}\left[\sum_{t=1}^{T}r_t\right]\\
    &= \mathbb{E}\left[\left(\sum_{t=1}^{T}r_t\right)\left(\mathcal{I}_{\mathcal{G}}+\mathcal{I}_{\mathcal{G}^c}\right)\right]\\
    &= \mathbb{E}\left[\left(\sum_{t=1}^{T}r_t\right)\mathcal{I}_{\mathcal{G}}\right] + \mathbb{E}\left[\left(\sum_{t=1}^{T}r_t\right)\mathcal{I}_{\mathcal{G}^c}\right]\\
    & \overset{(a)}\leq CL^2d\sqrt{T}\log(dLT) + T \mathbb{P}(\mathcal{I}_{\mathcal{G}^c})\\
    & \overset{(b)}\leq CL^2d\sqrt{T}\log(dLT) + T \times \frac{5}{T}\\
    & = O(L^2d\sqrt{T}\log(dLT)),
\end{aligned}
    \end{equation}
where $C$ is a suitably large universal constant. To bound the second term in (a), we used the fact that $R_T$ is trivially bounded above by $T$. For (b), we used \eqref{eqn:prob_clean_event}. This establishes the claim regarding the regret. We now turn our attention to the capacity of the channel.

\textbf{Communication Complexity.} Recall that in the first $\bar{T}$ time-steps, there is no transmission from the agent to the server. For each $t \geq \bar{T}+1$, our encoding strategy involves constructing an $\epsilon p_t$ - net of the ball $\mathcal{B}_d(0,p_t)$. Invoking Lemma \ref{lemma:covers} with $\mathcal{K}=\mathcal{B}_d(0,p_t)$, we obtain:
\begin{equation}
    \begin{aligned}
     \mathcal{N}(\mathcal{B}_d(0,p_t),\epsilon\-p_t) &\leq \frac{\vert \left(\mathcal{B}_d(0,p_t)\oplus(\epsilon p_t/2)\mathcal{B}_d(0,1)\right)\vert}{\vert (\epsilon p_t/2)\mathcal{B}_d(0,1)\vert}\\
     &= \frac{\vert \left(1+\epsilon/2\right)p_t\mathcal{B}_d(0,1) \vert}{\vert (\epsilon p_t/2)\mathcal{B}_d(0,1)\vert}\\
     & = \frac{\left[{\left(1+\epsilon/2\right)p_t}\right]^d}{\left[\left(\epsilon/2\right)p_t\right]^d}\\
     & = \left(\frac{2}{\epsilon}+1\right)^d.
    \end{aligned}
\end{equation}
Thus, with $\epsilon=\frac{1}{2}$ in Algorithm \ref{algo:Encoder}, we obtain $\mathcal{N}(\mathcal{B}_d(0,p_t),\epsilon\-p_t) \leq 5^d$. To account for the possibility of an overflow, we need one additional symbol. Thus, the size of the alphabet $\Sigma$ needed to encode the innovation $e_t$ at any time-step is at most $5^d+1 < 5^{d+1}$. We conclude that it suffices for the channel capacity $B$ to satisfy 
$$B \geq \floor*{(d+1)\log_2(5)}+1. $$
Since $d \geq 1$, $B \geq 6d$ bits suffice. This completes the proof. 
\end{proof}
\newpage
%-----------------------------------------%
%------- GLM Analysis --------------------%
%------------------------------------------
\section{Analysis of the Generalized Linear Model under \\ Information Constraints}
\label{app:GLM_analysis}
In this section, we will analyze the information constrained \texttt{GLM-UCB} algorithm that was developed in Section \ref{sec:gen_lin}. Before doing so, we remind the reader of the observation model:
$$y_t=\mu\left(\langle \theta^*, a_t \rangle \right)+\eta_t,$$
and the invertible function $g_t(\theta)$ defined as follows:
$$ g_t(\theta)=\lambda \theta + \sum_{s=1}^{t}\mu(\langle \theta, a_s \rangle) a_s.$$

\begin{algorithm}[H]
\caption{Adaptive Encoding at the Agent for the Generalized Linear Model}
\label{algo:Encoder_GLM}  
 \begin{algorithmic}[1] 
\Statex \hspace{-5mm} \textbf{Input Parameters:} $\hat{\theta}^{(s)}_{\bar{T}}$ is any arbitrary vector in $\Theta$; $q_{\bar{T}}=2 \left(2+\frac{3}{\sqrt{k_1 k_2}}\right)M$; and $\bar{f}(T)=\frac{3}{5L} \sqrt{ \frac{\beta_T}{k_1 k_2 T\log(dLT)}}$.
\For {$t\in \{\bar{T}+1, \ldots, T\}$}
\State Observe $y_t$ and solve for  $\hat{\theta}^{(a)}_t$ satisfying $g_t(\hat{\theta}^{(a)}_t) = \sum_{s=1}^{t} y_s a_s$. Compute  $e_t=\hat{\theta}^{(a)}_t-\hat{\theta}^{(s)}_{t-1}$. 
\State Encode $e_t$ by constructing an $\epsilon p_t$-net of $\mathcal{B}_d(0,p_t)$, where
\begin{equation}
  q_{t}=\epsilon \left( q_{t-1} + \bar{f}(T)\right); \hspace{2mm}  p_t=q_t+\bar{f}(T).
\label{eqn:GLM_quantizer_eqs}
\end{equation}
\EndFor
\end{algorithmic}
 \end{algorithm}
 
\begin{algorithm}[H]
\caption{Information Constrained \texttt{GLMUCB}  (\texttt{IC-GLMUCB})}
\label{algo:IC-GLMUCB}  
 \begin{algorithmic}[1] 
\Statex \hspace{-5mm} \textbf{Input Parameters:} $\bar{T}=\ceil*{10 (k_2/k_1)  L^2d\sqrt{T}\log(dLT)}$.
\Statex \hspace{-5mm} \textbf{Phase I:} \textit{Pure Exploration}
\For {$t\in \{1, \ldots, \bar{T}+1\}$} 
\State Server plays $a_t \sim  \textrm{Unif}(\mathbb{S}^{d-1})$. 
\State Agent receives reward $y_t$ as per \eqref{eqn:gen_obs_model} and solves $\hat{\theta}^{(a)}_t$ satisfying $g_t(\hat{\theta}^{(a)}_t) = \sum_{s=1}^{t} y_s a_s.$
\EndFor
\State Agent encodes $e_{\bar{T}+1}=\hat{\theta}^{(a)}_{\bar{T}+1}-\hat{\theta}^{(s)}_{\bar{T}}$ as per Algo. \ref{algo:Encoder_GLM}, and transmits $\sigma_{\bar{T}+1}=\mathcal{E}_{\bar{T}+1}\left(e_{\bar{T}+1}\right)$. 
\Statex \hrulefill
\Statex \hspace{-5mm} \textbf{Phase II:} \textit{Information-Constrained Exploration-Exploitation}
 \For {$t\in \{\bar{T}+2, \ldots, T\}$}
\State Server decodes $\tilde{e}_{t-1} =\mathcal{D}_{t-1}(\sigma_{t-1})$, and generates $\hat{\theta}^{(s)}_{t-1}=\hat{\theta}^{(s)}_{t-2}+\tilde{e}_{t-1}$. 
\State Server constructs the following confidence set: 
$$
     \mathcal{\bar{C}}^{(s)}_t =  \{\theta\in \Theta: \mathcal{H}_{t-1}(\theta) \leq \sqrt{\beta_T} + k_2\textcolor{black}{\left(\sqrt{\lambda+(t-1)L^2}\right) q_t\}}, \hspace{1mm} \textrm{where} \hspace{1mm}  \mathcal{H}_t(\theta) \triangleq {\Vert g_t(\theta)-g_t(\hat{\theta}^{(s)}_t) \Vert}_{V^{-1}_t}, 
$$
where $q_t$, $V_{t-1}$, and $\sqrt{{\beta}_T}$ are given by \eqref{eqn:GLM_quantizer_eqs},  \eqref{eqn:least_square}, and \eqref{eqn:conf_radius}, respectively. 
\State Server plays action $a_t = \argmax_{a\in\mathcal{A}_t} \max_{\theta\in\mathcal{\bar{C}}^{(s)}_t} \langle \theta, a \rangle$. 
\State Agent receives reward $y_t$ as per \eqref{eqn:gen_obs_model} and solves $\hat{\theta}^{(a)}_t$ satisfying $g_t(\hat{\theta}^{(a)}_t) = \sum_{s=1}^{t} y_s a_s$.
\State Agent encodes the innovation $e_t=\hat{\theta}^{(a)}_t-\hat{\theta}^{(s)}_{t-1}$ as per Algo. \ref{algo:Encoder_GLM}, and transmits $\sigma_t = \mathcal{E}_t(e_t)$. 
\EndFor
\end{algorithmic}
 \end{algorithm}

\subsection{Proof of Theorem \ref{thm:ICGLMUCB}}
We start with the following fact that will be used repeatedly in our subsequent analysis. 
\begin{lemma}
\label{lemma:MVT}
Consider any $\theta_1, \theta_2 \in \Theta$, and any $t\in [T]$. There exists a symmetric positive definite matrix $G_t(\theta_1;\theta_2)$ satisfying $k_1 V_t \preccurlyeq G_t(\theta_1;\theta_2)  \preccurlyeq k_2 V_t$, such that:
\begin{equation}
    g_t(\theta_1)-g_t(\theta_2)=G_t(\theta_1;\theta_2) (\theta_1-\theta_2). 
\end{equation}
\end{lemma}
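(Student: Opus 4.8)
The plan is to realize $G_t(\theta_1;\theta_2)$ as the \emph{averaged Jacobian} of $g_t$ along the segment joining $\theta_2$ to $\theta_1$, exploiting that $g_t$ is a smooth vector field and appealing to the fundamental theorem of calculus. The first step is to compute the Jacobian of $g_t$. Since $\frac{\partial}{\partial\theta}\mu(\langle\theta,a_s\rangle)=\dot{\mu}(\langle\theta,a_s\rangle)\,a_s$, differentiating the definition \eqref{eqn:g_func} componentwise yields
\[
\nabla g_t(\theta) = \lambda I_d + \sum_{s=1}^{t} \dot{\mu}(\langle\theta,a_s\rangle)\, a_s a_s',
\]
which is manifestly symmetric for every $\theta$ because each $a_s a_s'$ is symmetric and the coefficients are scalars.

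Next, writing $\theta(c)=\theta_2+c(\theta_1-\theta_2)$ for $c\in[0,1]$ and combining the chain rule with the fundamental theorem of calculus gives
\[
g_t(\theta_1)-g_t(\theta_2)=\int_0^1 \frac{d}{dc}\, g_t(\theta(c))\,dc=\left(\int_0^1 \nabla g_t(\theta(c))\,dc\right)(\theta_1-\theta_2).
\]
This identifies the candidate matrix as
\[
G_t(\theta_1;\theta_2)=\lambda I_d + \sum_{s=1}^{t} w_s\, a_s a_s', \qquad w_s \triangleq \int_0^1 \dot{\mu}(\langle\theta(c),a_s\rangle)\,dc,
\]
which is again symmetric since each $a_s a_s'$ is symmetric and each $w_s$ is a scalar.

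It then remains to establish the two-sided spectral bound, and the key observation is that each averaged weight $w_s$ inherits pointwise bounds on $\dot{\mu}$. By Assumption \ref{ass:link}, $\dot{\mu}\ge k_1$ on $\Theta$ and $\dot{\mu}\le k_2$ everywhere (the latter from $k_2$-Lipschitzness of $\mu$), so that $k_1\le w_s\le k_2$ for each $s$. I would then compute
\[
G_t - k_1 V_t = (1-k_1)\lambda I_d + \sum_{s=1}^{t}(w_s-k_1)\,a_s a_s',
\]
\[
k_2 V_t - G_t = (k_2-1)\lambda I_d + \sum_{s=1}^{t}(k_2-w_s)\,a_s a_s',
\]
and observe that both right-hand sides are positive semidefinite: the summation terms are, since $k_1\le w_s\le k_2$ and each $a_s a_s'\succcurlyeq 0$, while the diagonal terms are, since $k_1\le 1\le k_2$ (using $k_1=\min\{1,\cdot\}$ and the standing assumption $k_2\ge 1$). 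This delivers $k_1 V_t \preccurlyeq G_t(\theta_1;\theta_2)\preccurlyeq k_2 V_t$, and positive definiteness of $G_t$ follows at once because $k_1>0$ and $V_t\succcurlyeq\lambda I_d\succ 0$.

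The main obstacle is the subtlety embedded in the lower bound: $\dot{\mu}\ge k_1$ is only guaranteed over $\theta\in\Theta$, so to conclude $w_s\ge k_1$ I must ensure that the entire interpolating segment $\{\theta(c):c\in[0,1]\}$ stays inside $\Theta$. This is precisely where convexity of the parameter set enters; taking $\Theta$ to be convex (e.g.\ a Euclidean ball of radius $M$) without loss of generality, $\theta_1,\theta_2\in\Theta$ forces $\theta(c)\in\Theta$ for all $c$, closing the gap. Everything else reduces to the routine matrix-inequality bookkeeping above.
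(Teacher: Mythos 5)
Your proof is correct, and it takes a route that differs from the paper's in one meaningful respect. The paper invokes a mean value theorem for the vector-valued map $g_t$ at a \emph{single} intermediate point $\bar{\theta}=\alpha\theta_1+(1-\alpha)\theta_2$, writing $g_t(\theta_1)-g_t(\theta_2)=\nabla g_t(\bar{\theta})(\theta_1-\theta_2)$, and then bounds $\nabla g_t(\bar{\theta})$ exactly as you bound $G_t$. Your version replaces the single evaluation point with the averaged Jacobian $\int_0^1 \nabla g_t(\theta(c))\,dc$. This is more than a stylistic choice: the single-point mean value theorem is in general false for vector-valued functions --- applying the scalar MVT to each summand $\mu(\langle \cdot, a_s\rangle)$ produces a different intermediate point per index $s$, and no single $\alpha$ need reconcile them --- so the paper's step is, strictly speaking, not a valid theorem as stated, whereas your integral identity follows immediately from the fundamental theorem of calculus and needs no such reconciliation. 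The resulting matrix has the same form under both arguments, namely $\lambda I_d+\sum_{s=1}^{t} w_s\, a_s a_s'$ with scalar weights in $[k_1,k_2]$, so the spectral bookkeeping (including the use of $k_1\le 1\le k_2$ to absorb the $\lambda I_d$ term) is identical. You also correctly flag the one hypothesis both arguments silently need: the segment joining $\theta_1$ and $\theta_2$ must stay in the region where $\dot{\mu}\ge k_1$, i.e., $\Theta$ must be convex or Assumption \ref{ass:link} must hold on its convex hull. The paper's proof has exactly the same dependence but does not mention it, so your explicit treatment is a strengthening rather than a detour.
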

\begin{proof}
For any $\theta\in\Theta$, let us denote by $\nabla g_t(\theta)$ the Jacobian matrix of $g_t(\cdot)$ at $\theta$. Such a matrix exists based on Assumption \ref{ass:link}. Now based on the mean value theorem, $\exists \alpha \in (0,1)$ such that
$$ g_t(\theta_1)-g_t(\theta_2)=\left(\nabla g_t\left(\alpha \theta_1 + (1-\alpha) \theta_2\right)\right)  (\theta_1-\theta_2). $$
Let $\bar{\theta}=\alpha \theta_1 + (1-\alpha) \theta_2$, and $G_t(\theta_1;\theta_2)=\nabla g_t(\bar{\theta})$. It remains to argue that the matrix $G_t(\theta_1;\theta_2)$ so defined is symmetric and positive definite. To this end, let us note:
\begin{equation}
    \begin{aligned}
    \nabla g_t(\bar{\theta}) & \overset{(a)}= \lambda I + \sum_{s=1}^{t}\dot{\mu}(\langle \bar{\theta}, a_s \rangle) a_s a'_s \\
   & \overset{(b)} \succcurlyeq  k_1\left(\lambda I + \sum_{s=1}^{t} a_s a'_s \right) \\
    &= k_1 V_t.
    \end{aligned}
\end{equation}
Here, we used the definition of $g_t(\cdot)$ in \eqref{eqn:g_func} for (a), and for (b), we used Assumption \ref{ass:link} and the fact that $k_1 \leq 1$. The above steps reveal that $G_t(\theta_1;\theta_2)$ is symmetric and positive definite (as $V_t \succ 0$). The claim that $G_t(\theta_1;\theta_2) \preccurlyeq k_2 V_t$ follows a similar reasoning and relies on the fact that $\mu(\cdot)$ is $k_2$-Lipschitz.  
\end{proof}

The next result will be useful in the construction of the confidence region at the server.

\begin{lemma}
\label{lemma:GLM_conf_1}
Given any $\delta\in(0,1)$, the following holds with probability at least $1-\delta$:
$$ {\norm[\bigg]{g_t(\theta_*)- \sum_{s=1}^{t} y_s a_s}}_{V^{-1}_t} \leq \sqrt{\beta_T}, \forall t \in [T],$$ 
where $\beta_T$ is as defined in Eq. \eqref{eqn:conf_radius}. 
\end{lemma}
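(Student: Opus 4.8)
The plan is to exploit a cancellation that makes the non-linearity disappear, reducing the quantity of interest to the familiar self-normalized noise term from standard linear bandits. First I would substitute the observation model $y_s = \mu(\langle \theta_*, a_s\rangle) + \eta_s$ together with the definition $g_t(\theta) = \lambda\theta + \sum_{s=1}^t \mu(\langle\theta, a_s\rangle) a_s$ into $g_t(\theta_*) - \sum_{s=1}^t y_s a_s$. The terms $\mu(\langle \theta_*, a_s\rangle) a_s$ cancel exactly, leaving the clean identity
$$ g_t(\theta_*) - \sum_{s=1}^t y_s a_s = \lambda\theta_* - \sum_{s=1}^t \eta_s a_s. $$
The crucial point is that the right-hand side no longer involves $\mu$ at all; it is precisely the error term that appears in the ordinary least-squares analysis of the linear model.

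Writing $S_t = \sum_{s=1}^t \eta_s a_s$ and applying the triangle inequality in the $V^{-1}_t$ norm, I would split the bound into $\lambda {\Vert \theta_* \Vert}_{V^{-1}_t}$ and ${\Vert S_t \Vert}_{V^{-1}_t}$. For the first term, since $V_t = \lambda I_d + \sum_{s=1}^t a_s a'_s \succcurlyeq \lambda I_d$, we have $\lambda_{\min}(V_t) \geq \lambda$, so ${\Vert \theta_* \Vert}^2_{V^{-1}_t} \leq {\Vert \theta_* \Vert}^2_2/\lambda \leq M^2/\lambda$, which yields $\lambda {\Vert \theta_* \Vert}_{V^{-1}_t} \leq \sqrt{\lambda} M$ --- exactly the first summand of $\sqrt{\beta_T}$ in Eq. \eqref{eqn:conf_radius}.

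For the second (noise) term, I would invoke the self-normalized martingale concentration inequality that underlies Lemma \ref{lemma:enc_conf_region}: with probability at least $1-\delta$, for all $t \in [T]$,
$$ {\Vert S_t \Vert}^2_{V^{-1}_t} \leq 2\log(1/\delta) + \log\left(\frac{\det(V_t)}{\lambda^d}\right). $$
Bounding the determinant via AM-GM, $\det(V_t) \leq (\textrm{Trace}(V_t)/d)^d \leq ((d\lambda + TL^2)/d)^d$ (using ${\Vert a_s \Vert}_2 \leq L$ from Assumption \ref{ass:actions}(ii) and $t \leq T$), produces the logarithmic term $d\log((d\lambda + TL^2)/(d\lambda))$. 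Summing the two contributions reproduces $\sqrt{\beta_T}$ exactly. In fact, because the reduced quantity $\lambda\theta_* - S_t$ is identical to the one controlled in the linear setting, the whole bound follows by a verbatim application of Lemma \ref{lemma:enc_conf_region}.

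I expect the only real subtlety --- rather than a genuine obstacle --- to be justifying that the self-normalized bound applies despite the adaptive, non-linear data collection. This is fine because the inequality requires only that each action $a_s$ be predictable with respect to the history up to time $s-1$ and that each $\eta_s$ be conditionally $1$-subgaussian; neither condition depends on the link function $\mu$, so the concentration result transfers unchanged from the linear model. All the genuine work is thus concentrated in the one-line cancellation observation, which is exactly what makes the GLM confidence analysis reduce to the linear one.
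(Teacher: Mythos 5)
Your proof is correct and follows essentially the same route as the paper: the exact cancellation $g_t(\theta_*)-\sum_{s=1}^{t}y_s a_s = \lambda\theta_* - \sum_{s=1}^{t}\eta_s a_s$, the triangle-inequality split giving $\sqrt{\lambda}M$ for the regularization term, the self-normalized martingale bound (Theorem 20.4 of \cite{tor}) for the noise term, and the AM-GM determinant bound to recover $\sqrt{\beta_T}$. Your closing observation that the concentration inequality only needs predictable actions and conditionally subgaussian noise, independent of $\mu$, is exactly the (implicit) justification behind the paper's direct invocation of that theorem.
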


\begin{proof}
We have
\begin{equation}
    \begin{aligned}
        {\norm[\bigg]{g_t(\theta_*)- \sum_{s=1}^{t} y_s a_s}}_{V^{-1}_t} &\overset{(a)}= {\norm[\bigg]{\lambda \theta_* + \sum_{s=1}^{t}\mu(\langle \theta_*, a_s \rangle) a_s - \sum_{s=1}^{t} y_s a_s}}_{V^{-1}_t}\\
        &={\norm[\bigg]{\lambda \theta_* + \sum_{s=1}^{t}\left(\mu(\langle \theta_*, a_s \rangle)  - y_s\right) a_s}}_{V^{-1}_t}\\
        &\overset{(b)}={\norm[\bigg]{\lambda \theta_* - \sum_{s=1}^{t} \eta_s a_s}}_{V^{-1}_t}\\
        & \leq {\norm[\bigg]{\lambda \theta_*}}_{V^{-1}_t} + {\norm[\bigg]{\sum_{s=1}^{t} \eta_s a_s}}_{V^{-1}_t},\\
    \end{aligned}
\label{eqn:vector_valued_MG}
\end{equation}
where for (a), we used Eq. \eqref{eqn:g_func}, and for (b), we used Eq. \eqref{eqn:gen_obs_model}. Using ${\Vert \theta_* \Vert}_2 \leq M$ and $\lambda_{\min}(V_t) \geq \lambda$, it is easy to see that
$$ {\norm[\bigg]{\lambda \theta_*}}_{V^{-1}_t} \leq \sqrt{\lambda} M. $$
To bound the second term in the RHS of the resulting inequality in \eqref{eqn:vector_valued_MG}, we invoke Theorem 20.4 in \cite{tor}. This yields that with probability at least $1-\delta$, the following is true  $\forall t\in [T]$:
$$  {\norm[\bigg]{\sum_{s=1}^{t} \eta_s a_s}}_{V^{-1}_t} \leq  \sqrt{2\log\left(\frac{1}{\delta}\right)+\log\left(\frac{\det(V_T)}{\lambda^d}\right)}. $$
To complete the proof, we use the following fact derived in the analysis of Theorem \ref{thm:ICLinUCB}:
$$\det(V_T) \leq {\left(\frac{d \lambda+TL^2}{d}\right)}^d.$$
Putting all the above pieces together leads to the desired conclusion. 
\end{proof}

Since $\bar{T}=\ceil*{10 (k_2/k_1)  L^2d\sqrt{T}\log(dLT)}$ in the \texttt{IC-GLMUCB} algorithm, following the same reasoning as in Lemma \ref{lemma:eigen_cov}, we have that with probability at least $1-2/T$, 
\begin{equation}
    \lambda_{\min}(V_t) \geq 5 \frac{k_2}{k_1} L^2 \sqrt{T} \log(dLT), \forall t\geq \bar{T}.
\label{eqn:GLM_min_eigen}
\end{equation}

As in the analysis of Theorem \ref{thm:ICLinUCB}, we will now work on a clean event that is the intersection of the following three events.
\begin{equation}
    \begin{aligned}
    \mathcal{F}_1 &=\{\lambda_{\min}(V_t) \geq 5  (k_2/k_1) L^2 \sqrt{T} \log(dLT), \forall t\geq \bar{T}\}. \\
    \mathcal{F}_2 &= \{\vert \eta_t \vert < \sqrt{4\log(T)}, \forall t \in [T]\}. \\ 
    \mathcal{F}_3 &= \{ D_t(\theta_*) \leq \sqrt{\beta_T}, \forall t \in [T]\},
    \end{aligned}
\label{eqn:GLM_events}
\end{equation}
where 
$$ D_t(\theta)= {\norm[\bigg]{g_t(\theta)- \sum_{s=1}^{t} y_s a_s}}_{V^{-1}_t}.$$ Let $\mathcal{F}=\mathcal{F}_1 \cap \mathcal{F}_2 \cap \mathcal{F}_3$. Setting $\delta=1/T$ in Lemma \ref{lemma:GLM_conf_1}, and using Lemma's \ref{lemma:eigen_cov} and \ref{lemma:noise}, we immediately obtain that
\begin{equation}
    \mathbb{P}(\mathcal{F}) \geq 1-\frac{5}{T}.
\label{eqn:GLM_prob_clean_event}
\end{equation}

We now establish an analog of Lemma \ref{lemma:iterate_bound}. 

\begin{lemma} On the event $\mathcal{F}$, the following holds $\forall t \geq \bar{T}$:
\begin{equation}
    {\Vert \hat{\theta}^{(a)}_{t+1}-\hat{\theta}^{(a)}_{t} \Vert}_2 \leq \bar{f}(T), \hspace{2mm} \textrm{where} \hspace{2mm} \bar{f}(T) \triangleq \frac{3}{5L} \sqrt{ \frac{\beta_T}{k_1 k_2 T\log(dLT)}}.
\label{eqn:GLM_iterate_bound}
\end{equation}
\label{lemma:GLM_iterate_bound}
\end{lemma}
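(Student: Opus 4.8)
The plan is to mirror the structure of the proof of Lemma \ref{lemma:iterate_bound}, but since $\hat{\theta}^{(a)}_t$ no longer has a closed form, I would first derive an explicit one-step recursion directly from the defining equation $g_t(\hat{\theta}^{(a)}_t) = \sum_{s=1}^{t} y_s a_s$. Using the additive structure $g_{t+1}(\theta) = g_t(\theta) + \mu(\langle \theta, a_{t+1}\rangle) a_{t+1}$ together with the defining equations at times $t$ and $t+1$, I obtain
$$ g_{t+1}(\hat{\theta}^{(a)}_{t+1}) - g_{t+1}(\hat{\theta}^{(a)}_t) = \left(y_{t+1} - \mu(\langle \hat{\theta}^{(a)}_t, a_{t+1}\rangle)\right) a_{t+1}. $$
Applying Lemma \ref{lemma:MVT} to the left-hand side, with the intermediate matrix $G \triangleq G_{t+1}(\hat{\theta}^{(a)}_{t+1};\hat{\theta}^{(a)}_t)$ satisfying $k_1 V_{t+1} \preccurlyeq G \preccurlyeq k_2 V_{t+1}$, and inverting $G$ gives the GLM analogue of \eqref{eqn:iterate_recursion}: $\hat{\theta}^{(a)}_{t+1} - \hat{\theta}^{(a)}_t = G^{-1}\left(y_{t+1} - \mu(\langle \hat{\theta}^{(a)}_t, a_{t+1}\rangle)\right) a_{t+1}$. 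A point to watch here is that Lemma \ref{lemma:MVT} presumes its arguments lie in $\Theta$, so I would first verify (or enforce by projection) that the iterates $\hat{\theta}^{(a)}_t$ stay in $\Theta$.

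Next I take the Euclidean norm and split the scalar factor exactly as in the linear case. Writing $y_{t+1} = \mu(\langle \theta_*, a_{t+1}\rangle) + \eta_{t+1}$ and invoking the $k_2$-Lipschitzness of $\mu$ yields $\vert y_{t+1} - \mu(\langle \hat{\theta}^{(a)}_t, a_{t+1}\rangle)\vert \leq \vert \eta_{t+1}\vert + k_2 \vert \langle \theta_* - \hat{\theta}^{(a)}_t, a_{t+1}\rangle\vert$, while $G \succcurlyeq k_1 V_{t+1}$ gives ${\Vert G^{-1}\Vert}_2 \leq 1/(k_1 \lambda_{\min}(V_{t+1}))$. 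This produces a noise term $T_1$ and a bias term $T_2$ parallel to \eqref{eqn:iterate_interim}. The term $T_1$ is controlled on $\mathcal{F}_2$ via $\vert \eta_{t+1}\vert \leq 2\sqrt{\log T}$ and on $\mathcal{F}_1$ via the sharpened eigenvalue floor \eqref{eqn:GLM_min_eigen}; the extra $1/k_1$ from ${\Vert G^{-1}\Vert}_2$ combines with the $k_2/k_1$ inside \eqref{eqn:GLM_min_eigen} so that the $k$-dependence collapses to the $1/\sqrt{k_1 k_2}$ present in $\bar{f}(T)$, where I use $k_1 \leq 1 \leq k_2$ and $\beta_T \geq 1$.

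The crux, and the main obstacle, is the bias term $T_2$, which needs an a priori estimate on ${\Vert \theta_* - \hat{\theta}^{(a)}_t\Vert}_{V_t}$. In the linear case this was handed to us by event $\mathcal{G}_3$, but here event $\mathcal{F}_3$ only controls the $V^{-1}_t$-weighted norm of $g_t(\theta_*) - g_t(\hat{\theta}^{(a)}_t)$ (using that $\sum_{s=1}^t y_s a_s = g_t(\hat{\theta}^{(a)}_t)$). To convert it I would again apply Lemma \ref{lemma:MVT}, writing $g_t(\theta_*) - g_t(\hat{\theta}^{(a)}_t) = \tilde{G}\,w$ with $w = \theta_* - \hat{\theta}^{(a)}_t$ and $k_1 V_t \preccurlyeq \tilde{G} \preccurlyeq k_2 V_t$, and then performing the congruence $u = V^{1/2}_t w$, $H = V^{-1/2}_t \tilde{G} V^{-1/2}_t$. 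The $\mathcal{F}_3$ bound then reads $u' H^2 u \leq \beta_T$, whereas ${\Vert w\Vert}^2_{V_t} = {\Vert u\Vert}^2$; since $k_1 I \preccurlyeq H$ forces $H^2 \succcurlyeq k_1^2 I$, I conclude the key auxiliary estimate ${\Vert \theta_* - \hat{\theta}^{(a)}_t\Vert}_{V_t} \leq \sqrt{\beta_T}/k_1$.

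Feeding this estimate, together with ${\Vert a_{t+1}\Vert}_{V^{-1}_t} \leq \sqrt{L^2/\lambda_{\min}(V_t)}$ and \eqref{eqn:GLM_min_eigen}, into $T_2$ shows that $T_2$ carries a denominator of order $T^{3/4}(\log(dLT))^{3/2}$ versus the $\sqrt{T}\sqrt{\log(dLT)}$ of $\bar{f}(T)$, so it is of strictly smaller order and is dominated by $T_1$. Assembling $T_1 + T_2$ and simplifying the constants exactly as in the passage from \eqref{eqn:T_1}--\eqref{eqn:T_2} to \eqref{eqn:iterate_bound} then yields ${\Vert \hat{\theta}^{(a)}_{t+1} - \hat{\theta}^{(a)}_t\Vert}_2 \leq \bar{f}(T)$ for all $t \geq \bar{T}$ on $\mathcal{F}$, which is the claim. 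I expect the congruence argument of the previous paragraph to be the only genuinely new ingredient; everything else is a faithful, $k_1,k_2$-tracked transcription of the linear proof.
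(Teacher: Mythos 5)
Your proposal is correct and follows essentially the same route as the paper's proof: the same one-step recursion from the defining equation (the paper anchors the mean-value argument at $g_t$ so the residual involves $\mu(\langle \hat{\theta}^{(a)}_{t+1}, a_{t+1}\rangle)$, while you anchor at $g_{t+1}$ with the residual at the old iterate, a cosmetic difference), the same noise/bias split controlled on $\mathcal{F}_1$ and $\mathcal{F}_2$, and your "congruence" conversion of $\mathcal{F}_3$ into ${\Vert \theta_* - \hat{\theta}^{(a)}_t \Vert}_{V_t} \leq \sqrt{\beta_T}/k_1$ is exactly the paper's chain $w' G V^{-1} G w \geq k_1\, w' G w \geq k_1^2\, w' V w$ in Eq. \eqref{eqn:GLM_T2_interim}, just written in whitened coordinates. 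The $\Theta$-membership caveat you flag for Lemma \ref{lemma:MVT} is a fair observation, but note the paper's own proof applies that lemma to the unprojected iterates as well, so it is not a gap specific to your argument.
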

\begin{proof} We start by noting that based on Eq. \eqref{eqn:non_lin_LS}, $\hat{\theta}^{(a)}_{t+1}$ satisfies the following equation:
\begin{equation}
\begin{aligned}
    g_{t+1}(\hat{\theta}^{(a)}_{t+1}) &= \sum_{s=1}^{t+1} y_s a_s \\
    &= \sum_{s=1}^{t}y_s a_s + y_{t+1}a_{t+1}\\
    &= g_t(\hat{\theta}^{(a)}_{t})+y_{t+1}a_{t+1}.
\end{aligned}   
\label{eqn:EST_1}
\end{equation}
At the same time, in view of Eq. \eqref{eqn:g_func}, we have
\begin{equation}
\begin{aligned}
     g_{t+1}(\hat{\theta}^{(a)}_{t+1}) &= \lambda \hat{\theta}^{(a)}_{t+1} + \sum_{s=1}^{t+1} \mu(\langle \hat{\theta}^{(a)}_{t+1}, a_s \rangle ) a_s \\
     &= g_{t}(\hat{\theta}^{(a)}_{t+1}) + \mu(\langle \hat{\theta}^{(a)}_{t+1}, a_{t+1} \rangle ) a_{t+1}.
\end{aligned}
\end{equation}
Comparing the above equation with that in Eq. \eqref{eqn:EST_1}, we conclude:
\begin{equation}
    \begin{aligned}
    g_t(\hat{\theta}^{(a)}_{t+1}) - g_t(\hat{\theta}^{(a)}_{t}) &= \left(y_{t+1}- \mu(\langle \hat{\theta}^{(a)}_{t+1}, a_{t+1} \rangle )\right) a_{t+1} \\
    &= \left(y_{t+1}- \mu(\langle \theta_*, a_{t+1} \rangle )\right) a_{t+1} + \left(\mu(\langle \theta_*, a_{t+1} \rangle ) - \mu(\langle \hat{\theta}^{(a)}_{t+1}, a_{t+1} \rangle )\right) a_{t+1} \\
    &= \eta_{t+1} a_{t+1} + \left(\mu(\langle \theta_*, a_{t+1} \rangle ) - \mu(\langle \hat{\theta}^{(a)}_{t+1}, a_{t+1} \rangle )\right) a_{t+1},
    \end{aligned}
\label{eqn:EST_2}
\end{equation}
where for the last step, we used the observation model \eqref{eqn:gen_obs_model}. Now based on Lemma \ref{lemma:MVT}, we know that 
$$ g_t(\hat{\theta}^{(a)}_{t+1}) - g_t(\hat{\theta}^{(a)}_{t}) = G_t \left(\hat{\theta}^{(a)}_{t+1}-\hat{\theta}^{(a)}_{t}\right), $$
where $G_t$ is a symmetric positive definite matrix satisfying $k_1 V_t \preccurlyeq G_t \preccurlyeq k_2 V_t$.\footnote{Here, we have suppressed the dependence of $G_t$ on $\hat{\theta}^{(a)}_{t+1}$ and $\hat{\theta}^{(a)}_{t}$ since this is apparent from context. We will continue to do so to prevent cluttering the exposition.} From the above equation and Eq. \eqref{eqn:EST_2}, we then have
$$ \hat{\theta}^{(a)}_{t+1}-\hat{\theta}^{(a)}_{t} = G^{-1}_{t} \left( \eta_{t+1} a_{t+1} + \left(\mu(\langle \theta_*, a_{t+1} \rangle ) - \mu(\langle \hat{\theta}^{(a)}_{t+1}, a_{t+1} \rangle )\right) a_{t+1} \right). $$

Applying the triangle inequality to the above display, we obtain
$$ {\Vert  \hat{\theta}^{(a)}_{t+1}-\hat{\theta}^{(a)}_{t} \Vert}_2 \leq \underbrace{{\Vert G^{-1}_{t} \eta_{t+1} a_{t+1} \Vert}_2}_{T_1} + \underbrace{\norm[\bigg]{ G^{-1}_{t} \left(\mu(\langle \theta_*, a_{t+1} \rangle ) - \mu(\langle \hat{\theta}^{(a)}_{t+1}, a_{t+1} \rangle )\right) a_{t+1} }_2}_{T_2}.$$

We now proceed to bound each of the terms $T_1$ and $T_2$ separately. For bounding $T_1$, we note that as $G_t \succcurlyeq k_1 V_t$, it holds that $G^{-1}_t \preccurlyeq (1/k_1) V^{-1}_t$. Since $G^{-1}_t$ is symmetric and positive definite, we then have:
\begin{equation}
    \Vert{G^{-1}_t \Vert}_2 = \lambda_{\max}(G^{-1}_t) \leq \frac{1}{k_1} \lambda_{\max}(V^{-1}_t) = \frac{1}{k_1 \lambda_{\min}(V_t)} \leq \frac{1}{k_1 \lambda_{\min}(V_{\bar{T}})}.
\label{eqn:G_norm_bound}
\end{equation}

We can now bound $T_1$ as follows.
\begin{equation}
    \begin{aligned}
    T_1 & \leq \vert \eta_{t+1} \vert {\Vert a_{t+1} \Vert}_2 {\Vert G^{-1}_{t} \Vert}_2\\
    & \overset{(a)}\leq \frac{L \sqrt{4\log(T)}}{k_1 \lambda_{\min}(V_{\bar{T}})} \\
    & \overset{(b)}\leq \frac{2 \sqrt{\log(T)}}{5Lk_2 \sqrt{T} \log(dLT)} \\
    & \leq \frac{2}{5Lk_2 \sqrt{T \log(dLT)}}.
    \end{aligned}
\label{eqn:GLM_T1}
\end{equation}
In the above steps, we used the properties of event $\mathcal{F}$ and Eq. \eqref{eqn:G_norm_bound} for (a), and Eq. \eqref{eqn:GLM_min_eigen} for (b). Bounding the term $T_2$ requires a bit more work. Starting from the defining property of event $\mathcal{F}_3$, consider the following set of implications: 
\begin{equation}
\begin{aligned}
& {\norm[\bigg]{g_{t+1}(\theta_*)- \sum_{s=1}^{t+1} y_s a_s}}^2_{V^{-1}_{t+1}} \leq \beta_T\\
& \hspace{-10mm} \overset{(a)} \implies {\norm[\bigg]{g_{t+1}(\theta_*)- g_{t+1}(\hat{\theta}^{(a)}_{t+1})}}^2_{V^{-1}_{t+1}} \leq \beta_T\\
& \hspace{-10mm} \overset{(b)} \implies {\norm[\bigg]{G_{t+1}\left(\theta_*-\hat{\theta}^{(a)}_{t+1}\right)}}^2_{V^{-1}_{t+1}} \leq \beta_T \\
& \hspace{-10mm} \implies \left(\theta_*-\hat{\theta}^{(a)}_{t+1}\right)' G_{t+1} V^{-1}_{t+1} G_{t+1} \left(\theta_*-\hat{\theta}^{(a)}_{t+1}\right) \leq \beta_T \\ 
& \hspace{-10mm} \overset{(c)} \implies k_1 \left(\theta_*-\hat{\theta}^{(a)}_{t+1}\right)' G_{t+1} \left(\theta_*-\hat{\theta}^{(a)}_{t+1}\right) \leq \beta_T \\
& \hspace{-10mm} \overset{(d)} \implies {(k_1)}^2 \left(\theta_*-\hat{\theta}^{(a)}_{t+1}\right)' V_{t+1} \left(\theta_*-\hat{\theta}^{(a)}_{t+1}\right) \leq \beta_T \\
& \hspace{-10mm} \implies {(k_1)}^2 \lambda_{\min}(V_{t+1}) \norm[\bigg]{ \theta_*-\hat{\theta}^{(a)}_{t+1} }^2_2\leq \beta_T.\\
& \hspace{-10mm} \implies \Vert { \theta_* - \hat{\theta}^{(a)}_{t+1} \Vert}_2 \leq \frac{1}{k_1} \sqrt{\frac{\beta_T} { \lambda_{\min}(V_{t+1})}}.
\end{aligned}
\label{eqn:GLM_T2_interim}
\end{equation}
In the above steps, (a) follows from the definition of $\hat{\theta}^{(a)}_{t+1}$ in Eq. \eqref{eqn:non_lin_LS}; (b) follows from invoking Lemma \ref{lemma:MVT}; and (c), (d) both follow as a consequence of the fact that $G_{t+1} \succcurlyeq k_1 V_{t+1}$. 

We can now bound $T_2$ as follows.
\begin{equation}
    \begin{aligned}
    T_2 &= \norm[\bigg]{ G^{-1}_{t} \left(\mu(\langle \theta_*, a_{t+1} \rangle ) - \mu(\langle \hat{\theta}^{(a)}_{t+1}, a_{t+1} \rangle )\right) a_{t+1} }_2\\
    & \leq \left| \mu(\langle \theta_*, a_{t+1} \rangle ) - \mu(\langle \hat{\theta}^{(a)}_{t+1}, a_{t+1} \rangle ) \right| {\Vert G^{-1}_{t} \Vert}_2 {\Vert a_{t+1} \Vert}_2  \\
    & \overset{(a)} \leq k_2 \left| \langle \theta_*- \hat{\theta}^{(a)}_{t+1}, a_{t+1} \rangle \right| {\Vert G^{-1}_{t} \Vert}_2 {\Vert a_{t+1} \Vert}_2 \\
    & \leq k_2 \norm[\bigg]{\theta_*-\hat{\theta}^{(a)}_{t+1}}_2 {\Vert G^{-1}_{t} \Vert}_2 {\Vert a_{t+1} \Vert}^2_2 \\
    & \overset{(b)}\leq \frac{k_2 L^2}{{(k_1)}^2} \frac{\sqrt{\beta_T}}{ {\left( \lambda_{\min}(V_{\bar{T}})\right)}^{3/2} } \\
    & \overset{(c)} \leq \frac{k_2 L^2}{{(k_1)}^2} \frac{\sqrt{\beta_T}}{ {\left( 5 (k_2/k_1) L^2 \sqrt{T} \log(dLT)\right)}^{3/2} }.
    \end{aligned}
\label{eqn:GLM_T2}
\end{equation}
In the above steps, (a) follows from the fact that $\mu(\cdot)$ is $k_2$-Lipschitz; (b) follows from equations \eqref{eqn:G_norm_bound}, \eqref{eqn:GLM_T2_interim}, and the bound on the actions; and (c) follows from Eq. \eqref{eqn:GLM_min_eigen}. Combining the bounds on $T_1$ and $T_2$ from equations  \eqref{eqn:GLM_T1} and \eqref{eqn:GLM_T2} respectively, and simplifying, we immediately obtain the claim of the lemma. This concludes the proof. 
\end{proof}

Equipped with the above result, our next goal is to develop analogs of Lemma's \ref{lemma:enc_dec} and \ref{lemma:dec_conf_region}. 

\begin{lemma}  On the event $\mathcal{F}$, the following is true for the \texttt{IC-GLMUCB} algorithm:
$$ e_t \in \mathcal{B}_{d}(0,p_t), \forall t \in \{\bar{T}+1, \ldots, T\},$$
where $e_t$ is the innovation in line 2 of Algorithm \ref{algo:Encoder_GLM}, and $p_t$ is as defined in Eq.  \eqref{eqn:GLM_quantizer_eqs}.
\label{lemma:GLM_enc_dec}
\end{lemma}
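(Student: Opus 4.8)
The plan is to mirror the inductive argument used for Lemma \ref{lemma:enc_dec} in the linear case, substituting the GLM counterparts throughout. The two replacements that drive the whole proof are the per-step gap bound ${\Vert \hat{\theta}^{(a)}_{t+1}-\hat{\theta}^{(a)}_{t}\Vert}_2 \leq \bar{f}(T)$ from Lemma \ref{lemma:GLM_iterate_bound} (standing in for Lemma \ref{lemma:iterate_bound}), and the recursion \eqref{eqn:GLM_quantizer_eqs} with $\bar{f}(T)$ in place of $f(T)$. Throughout I would condition on the clean event $\mathcal{F}$.

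For the base case $t=\bar{T}+1$, I would first control the initial estimation error. Starting from the defining property of event $\mathcal{F}_3$ and reusing the chain of implications in \eqref{eqn:GLM_T2_interim} (which invokes Lemma \ref{lemma:MVT} together with $G_t \succcurlyeq k_1 V_t$) at the index $\bar{T}+1$, I obtain
$$ {\Vert \theta_* - \hat{\theta}^{(a)}_{\bar{T}+1} \Vert}_2 \leq \frac{1}{k_1}\sqrt{\frac{\beta_T}{\lambda_{\min}(V_{\bar{T}+1})}}. $$
Substituting the eigenvalue bound \eqref{eqn:GLM_min_eigen}, namely $\lambda_{\min}(V_{\bar{T}+1}) \geq 5(k_2/k_1)L^2\sqrt{T}\log(dLT)$, the prefactor $1/k_1$ combined with $\sqrt{k_1/k_2}$ collapses to $1/\sqrt{k_1 k_2}$. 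Reusing the clean-up computation from the proof of Theorem \ref{thm:ICLinUCB} (which established $\sqrt{\beta_T/(5L^2\sqrt{T}\log(dLT))}\leq 3M$ under $\lambda=1$, $M\geq 1$, $L\geq 1$, $T\geq d^2$) then yields ${\Vert \theta_* - \hat{\theta}^{(a)}_{\bar{T}+1} \Vert}_2 \leq 3M/\sqrt{k_1 k_2}$, and hence ${\Vert \hat{\theta}^{(a)}_{\bar{T}+1} \Vert}_2 \leq (1+3/\sqrt{k_1 k_2})M$. Since $\hat{\theta}^{(s)}_{\bar{T}}\in\Theta$ gives ${\Vert \hat{\theta}^{(s)}_{\bar{T}}\Vert}_2 \leq M$, the triangle inequality produces ${\Vert e_{\bar{T}+1}\Vert}_2 \leq (2+3/\sqrt{k_1 k_2})M$. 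The initialization $q_{\bar{T}}=2(2+3/\sqrt{k_1 k_2})M$ is chosen precisely so that $p_{\bar{T}+1}=q_{\bar{T}+1}+\bar{f}(T)=(2+3/\sqrt{k_1 k_2})M+\tfrac{3}{2}\bar{f}(T)$ dominates this bound, establishing $e_{\bar{T}+1}\in\mathcal{B}_d(0,p_{\bar{T}+1})$.

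The induction step is then verbatim the linear one. Assuming $e_t\in\mathcal{B}_d(0,p_t)$ for all $t\in\{\bar{T}+1,\ldots,k\}$, the encoding at step $k$ is valid, so the $\epsilon p_k$-net property gives ${\Vert \tilde{e}_k - e_k\Vert}_2 \leq \tfrac{1}{2}p_k$; the telescoping identity $\hat{\theta}^{(s)}_k=\hat{\theta}^{(a)}_k+(\tilde{e}_k-e_k)$ then yields ${\Vert \hat{\theta}^{(s)}_k-\hat{\theta}^{(a)}_k\Vert}_2 \leq \tfrac{1}{2}p_k$. Decomposing $e_{k+1}=(\hat{\theta}^{(a)}_{k+1}-\hat{\theta}^{(a)}_k)+(\hat{\theta}^{(a)}_k-\hat{\theta}^{(s)}_k)$ and applying the triangle inequality with Lemma \ref{lemma:GLM_iterate_bound} and \eqref{eqn:GLM_quantizer_eqs} gives
$$ {\Vert e_{k+1}\Vert}_2 \leq \bar{f}(T)+\tfrac{1}{2}p_k = \bar{f}(T)+\tfrac{1}{2}\left(q_k+\bar{f}(T)\right) = \bar{f}(T)+q_{k+1} = p_{k+1}, $$
closing the induction.

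I expect the base case to be the main obstacle. Unlike the linear setting, where the initial estimation error flows directly from the ellipsoidal bound of $\mathcal{G}_3$, here it must be extracted from the nonlinear event $\mathcal{F}_3$ through the mean-value matrix $G_t$ and the sandwich $k_1 V_t \preccurlyeq G_t \preccurlyeq k_2 V_t$, which introduces the $1/k_1$ amplification. The delicate point is bookkeeping: the scaling of $\bar{T}$ by $k_2/k_1$ and the constant in $q_{\bar{T}}$ are calibrated so that the residual $1/\sqrt{k_1 k_2}$ surviving the simplification exactly matches the constant $2+3/\sqrt{k_1 k_2}$ hard-coded in the initialization. The induction step, by contrast, is purely mechanical once Lemma \ref{lemma:GLM_iterate_bound} is in hand.
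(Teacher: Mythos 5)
Your proposal is correct and takes essentially the same route as the paper's own proof: condition on $\mathcal{F}$, establish the base case by pushing the event $\mathcal{F}_3$ through the chain in \eqref{eqn:GLM_T2_interim} (via Lemma \ref{lemma:MVT} and $G_t \succcurlyeq k_1 V_t$), apply the eigenvalue bound \eqref{eqn:GLM_min_eigen} so that the $1/k_1$ factor collapses to $1/\sqrt{k_1 k_2}$, reuse the simplification behind \eqref{eqn:clean_up} to get the $3M/\sqrt{k_1 k_2}$ bound, and match against the initialization $q_{\bar{T}}$. The only difference is that the paper omits the induction step entirely (citing its identity with Lemma \ref{lemma:enc_dec}), whereas you write it out --- correctly.
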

\begin{proof}
The proof relies on the same induction technique employed in the analysis of Lemma \ref{lemma:enc_dec}. We only establish the base case since the proof of the induction step is identical to that in  Lemma \ref{lemma:enc_dec}. To establish the base case, we need to argue that $e_{\bar{T}+1} \in \mathcal{B}_d(0,p_{\bar{T}+1})$. Based on the arguments used to arrive at \eqref{eqn:GLM_T2_interim}, we obtain:
\begin{equation}
    \begin{aligned}
    \Vert { \theta_* - \hat{\theta}^{(a)}_{\bar{T}+1} \Vert}_2 & \leq \frac{1}{k_1} \sqrt{\frac{\beta_T} { \lambda_{\min}(V_{\bar{T}})}} \\
    & \overset{(a)} \leq \frac{1}{\sqrt{k_1 k_2}} \sqrt{ \frac{\beta_T} {5L^2 \sqrt{T} \log(dLT)}} \\
    & \overset{(b)} \leq \frac{3M}{\sqrt{k_1 k_2}},\\
    \end{aligned}
\end{equation}
where (a) follows from Eq. \eqref{eqn:GLM_min_eigen}, and (b) follows from the reasoning used to arrive at Eq. \eqref{eqn:clean_up}. This immediately implies that
$$ \norm[\bigg]{\hat{\theta}^{(a)}_{\bar{T}+1}}_2 \leq {\Vert \theta_* \Vert}_2 + \frac{3M}{\sqrt{k_1 k_2}} \leq \left(1+\frac{3}{\sqrt{k_1 k_2}}\right)M. $$ Finally, we have
\begin{equation}
\begin{aligned}
  {\Vert  e_{\bar{T}+1} \Vert}_2 &= \Vert{\hat{\theta}^{(a)}_{\bar{T}+1} - \hat{\theta}^{(s)}_{\bar{T}}\Vert}_2 \\
  & \leq \Vert{\hat{\theta}^{(a)}_{\bar{T}+1}  \Vert}_2 + \Vert{\hat{\theta}^{(s)}_{\bar{T}}\Vert}_2 \\
  &\leq \left(1+\frac{3}{\sqrt{k_1 k_2}}\right)M + M \\
  & \leq \left(2+\frac{3}{\sqrt{k_1 k_2}}\right)M + \frac{3}{2} f(T) = p_{\bar{T}+1}, 
 \end{aligned} 
\end{equation}
where we used the fact that $\hat{\theta}^{(s)}_{\bar{T}} \in \Theta$, and hence, $\Vert{\hat{\theta}^{(s)}_{\bar{T}}\Vert}_2 \leq M$. This establishes the desired claim and completes the proof. 
\end{proof}

The next result justifies the decision making rule of the \texttt{IC-GLMUCB} algorithm.

\begin{lemma}  On the event $\mathcal{F}$, the following is true: $\theta_* \in \bar{\mathcal{C}}^{(s)}_t, \forall t \in \{\bar{T}+2, \ldots, T\}$, where
$$
    \mathcal{\bar{C}}^{(s)}_t =  \{\theta\in \Theta: \mathcal{H}_{t-1}(\theta) \leq \sqrt{\beta_T} + k_2\textcolor{black}{\left(\sqrt{\lambda+(t-1)L^2}\right) q_t\}}; \hspace{1mm} \hspace{1mm}  \mathcal{H}_t(\theta) \triangleq {\Vert g_t(\theta)-g_t(\hat{\theta}^{(s)}_t) \Vert}_{V^{-1}_t}.
$$
Moreover, $\forall t \geq \bar{T}+\tilde{T}$, we have 
\begin{equation}
   \left(\sqrt{\lambda+(t-1)L^2}\right) q_t \leq 4 \sqrt{\frac{\beta_T}{k_1 k_2 \log(dLT)}},
\label{eqn:GLM_inflation_bound}
\end{equation}
where 
\begin{equation}
    \tilde{T} =  \ceil*{\frac{\log\left(\frac{(2+3/\sqrt{k_1 k_2})M}{\bar{f}(T)}\right)}{\log(2)}} \vee 2 = O\left(\log(dLT)\right).
\label{eqn:GLM_tilde_T}
\end{equation}
\label{lemma:GLM_dec_conf_region}
\end{lemma}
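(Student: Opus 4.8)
The plan is to mirror the proof of Lemma \ref{lemma:dec_conf_region}, modifying only those steps that must accommodate the nonlinearity of $g_{t-1}$. Throughout I would condition on the clean event $\mathcal{F}$ and fix $t \in \{\bar{T}+2,\ldots,T\}$. For the first claim, the goal is to show that $\mathcal{H}_{t-1}(\theta_*) = \|g_{t-1}(\theta_*) - g_{t-1}(\hat{\theta}^{(s)}_{t-1})\|_{V^{-1}_{t-1}}$ does not exceed $\sqrt{\beta_T} + k_2\bigl(\sqrt{\lambda+(t-1)L^2}\bigr)q_t$. The natural move is to insert $g_{t-1}(\hat{\theta}^{(a)}_{t-1})$ and apply the triangle inequality in the $V^{-1}_{t-1}$-norm, splitting $\mathcal{H}_{t-1}(\theta_*)$ into a statistical term and a quantization term.

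For the statistical term, I would use the defining equation of $\hat{\theta}^{(a)}_{t-1}$ in Eq. \eqref{eqn:non_lin_LS}, namely $g_{t-1}(\hat{\theta}^{(a)}_{t-1}) = \sum_{s=1}^{t-1} y_s a_s$, so that $\|g_{t-1}(\theta_*) - g_{t-1}(\hat{\theta}^{(a)}_{t-1})\|_{V^{-1}_{t-1}} = D_{t-1}(\theta_*) \leq \sqrt{\beta_T}$ by the defining property of event $\mathcal{F}_3$. For the quantization term, I would invoke Lemma \ref{lemma:MVT} to write $g_{t-1}(\hat{\theta}^{(a)}_{t-1}) - g_{t-1}(\hat{\theta}^{(s)}_{t-1}) = G_{t-1}(\hat{\theta}^{(a)}_{t-1} - \hat{\theta}^{(s)}_{t-1})$ for a symmetric positive-definite $G_{t-1}$ satisfying $G_{t-1} \preccurlyeq k_2 V_{t-1}$. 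Writing $\Delta = \hat{\theta}^{(a)}_{t-1} - \hat{\theta}^{(s)}_{t-1}$ and $W = V^{-1/2}_{t-1} G_{t-1} V^{-1/2}_{t-1} \preccurlyeq k_2 I$, I would expand $\|G_{t-1}\Delta\|^2_{V^{-1}_{t-1}} = (V^{1/2}_{t-1}\Delta)' W^2 (V^{1/2}_{t-1}\Delta)$ and use $W^2 \preccurlyeq k_2^2 I$ to conclude $\|g_{t-1}(\hat{\theta}^{(a)}_{t-1}) - g_{t-1}(\hat{\theta}^{(s)}_{t-1})\|_{V^{-1}_{t-1}} \leq k_2\|\Delta\|_{V_{t-1}}$.

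To finish the first claim, I would bound $\|\Delta\|_{V_{t-1}} \leq \sqrt{\lambda_{\max}(V_{t-1})}\,\|\Delta\|_2$, invoke the decoding-error estimate $\|\hat{\theta}^{(a)}_{t-1} - \hat{\theta}^{(s)}_{t-1}\|_2 \leq \tfrac{1}{2}p_{t-1}$ (established in the induction step of Lemma \ref{lemma:GLM_enc_dec}, identical to Eq. \eqref{eqn:dec_error}), use $\tfrac{1}{2}p_{t-1} = q_t$ from the recursion \eqref{eqn:GLM_quantizer_eqs} with $\epsilon = 1/2$, and finally $\lambda_{\max}(V_{t-1}) \leq \lambda + (t-1)L^2$ since $\|a_s\|_2 \leq L$. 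Assembling these establishes $\theta_* \in \bar{\mathcal{C}}^{(s)}_t$. For the second claim, the recursion $q_t = \epsilon(q_{t-1} + \bar{f}(T))$ is structurally identical to the linear case, so rolling it out from $\bar{T}$ gives $q_{\bar{T}+\tau} \leq q_{\bar{T}}(1/2)^{\tau} + 2\bar{f}(T)$ with $q_{\bar{T}} = 2(2 + 3/\sqrt{k_1 k_2})M$; the threshold $\tilde{T}$ in Eq. \eqref{eqn:GLM_tilde_T} is precisely what forces $q_{\bar{T}}(1/2)^{\tau} \leq 2\bar{f}(T)$, hence $q_{\bar{T}+\tau} \leq 4\bar{f}(T)$. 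Substituting $\bar{f}(T)$, using $\lambda = 1$ and $(\lambda + TL^2)/(TL^2) \leq 2$, the resulting prefactor $\tfrac{12\sqrt{2}}{5} \leq 4$ delivers Eq. \eqref{eqn:GLM_inflation_bound}, and $\tilde{T} = O(\log(dLT))$ follows by plugging in $\bar{f}(T)$ and simple algebra.

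The main obstacle, and the only genuine departure from the linear analysis, is controlling the quantization term in the $V^{-1}_{t-1}$-geometry: unlike the linear case, where $\hat{\theta}^{(a)}_{t-1} - \hat{\theta}^{(s)}_{t-1}$ enters the confidence radius directly, here it is first mapped through the data-dependent Jacobian $G_{t-1}$ before being measured. The key realization is that the sandwich $k_1 V_{t-1} \preccurlyeq G_{t-1} \preccurlyeq k_2 V_{t-1}$ from Lemma \ref{lemma:MVT} converts the mixed norm $\|G_{t-1}\Delta\|_{V^{-1}_{t-1}}$ into the clean multiple $k_2\|\Delta\|_{V_{t-1}}$, which is exactly the factor $k_2$ appearing in the definition of $\bar{\mathcal{C}}^{(s)}_t$. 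Everything else — the decoding-error bound, the geometric decay of $q_t$, and the final numerical simplification — transfers essentially verbatim from the proof of Lemma \ref{lemma:dec_conf_region}.
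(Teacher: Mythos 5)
Your proposal is correct and follows essentially the same route as the paper's proof: triangle inequality around $g_{t-1}(\hat{\theta}^{(a)}_{t-1})=\sum_{s=1}^{t-1}y_s a_s$, the event $\mathcal{F}_3$ bound for the statistical term, Lemma \ref{lemma:MVT} plus the decoding-error and $\lambda_{\max}$ bounds for the quantization term, and the same geometric-decay argument (with correct constants) for the second claim, which the paper itself only sketches by deferring to Lemma \ref{lemma:dec_conf_region}. The only cosmetic difference is that you extract the factor $k_2$ via conjugation by $V_{t-1}^{1/2}$ and the bound $W^2 \preccurlyeq k_2^2 I$, whereas the paper applies $G_{t-1} \preccurlyeq k_2 V_{t-1}$ twice in succession; the two manipulations are equivalent.
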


\begin{proof}
Consider any time-step $t\geq \bar{T}+2$, and observe:
\begin{equation}
    \begin{aligned}
    \norm[\bigg]{ g_{t-1}(\theta_*)-g_{t-1}(\hat{\theta}^{(s)}_{t-1}) }_{V^{-1}_{t-1}} & \overset{(a)} \leq \norm[\bigg]{\ g_{t-1}(\theta_*)- \sum_{s=1}^{t-1} y_s a_s  }_{V^{-1}_{t-1}} + \norm[\bigg]{ \sum_{s=1}^{t-1} y_s a_s-g_{t-1}(\hat{\theta}^{(s)}_{t-1}) }_{V^{-1}_{t-1}}\\
    & \overset{(b)} = \norm[\bigg]{\ g_{t-1}(\theta_*)- \sum_{s=1}^{t-1} y_s a_s  }_{V^{-1}_{t-1}} + \norm[\bigg]{ g_{t-1}(\hat{\theta}^{(a)}_{t-1})-g_{t-1}(\hat{\theta}^{(s)}_{t-1}) }_{V^{-1}_{t-1}}\\
    & \overset{(c)} \leq \sqrt{\beta_T} + \norm[\bigg]{ g_{t-1}(\hat{\theta}^{(a)}_{t-1})-g_{t-1}(\hat{\theta}^{(s)}_{t-1}) }_{V^{-1}_{t-1}}\\
    & \overset{(d)} = \sqrt{\beta_T} + \norm[\bigg]{G_{t-1} \left(\hat{\theta}^{(a)}_{t-1} - \hat{\theta}^{(s)}_{t-1}\right)  }_{V^{-1}_{t-1}} \\
    & = \sqrt{\beta_T} + \sqrt{ \left(\hat{\theta}^{(a)}_{t-1} - \hat{\theta}^{(s)}_{t-1}\right)' G_{t-1} V^{-1}_{t-1} G_{t-1} \left(\hat{\theta}^{(a)}_{t-1} - \hat{\theta}^{(s)}_{t-1}\right)}\\
    & \overset{(e)} \leq \sqrt{\beta_T} + \sqrt{ k_2  \left(\hat{\theta}^{(a)}_{t-1} - \hat{\theta}^{(s)}_{t-1}\right)' G_{t-1}  \left(\hat{\theta}^{(a)}_{t-1} - \hat{\theta}^{(s)}_{t-1}\right)} \\
    & \overset{(f)} \leq \sqrt{\beta_T} + k_2 \sqrt{  \left(\hat{\theta}^{(a)}_{t-1} - \hat{\theta}^{(s)}_{t-1}\right)' V_{t-1}  \left(\hat{\theta}^{(a)}_{t-1} - \hat{\theta}^{(s)}_{t-1}\right)} \\
    & \leq \sqrt{\beta_T} + k_2 \left(\sqrt{\lambda_{\max}(V_{t-1})}\right)  \norm[\bigg]{     \hat{\theta}^{(a)}_{t-1} - \hat{\theta}^{(s)}_{t-1} }_2 \\
    & \overset{(g)} \leq \sqrt{\beta_T} + k_2 \left(\sqrt{\lambda+(t-1)L^2}\right) q_t.
    \end{aligned}
\end{equation}
In the above steps, (a) follows from the triangle inequality; (b) follows from the definition of $\hat{\theta}^{(a)}_{t-1}$ in Eq. \eqref{eqn:non_lin_LS}; (c) follows from Lemma \ref{lemma:GLM_conf_1}; (d) follows from Lemma \ref{lemma:MVT}; (e) and (f) are both a result of the fact that $G_{t-1} \preccurlyeq k_2 V_{t-1}$; and (g) follows from the same line of reasoning as used to arrive at Eq. \eqref{eqn:conf_interim}. We have thus argued that $\theta_* \in \bar{\mathcal{C}}^{(s)}_t, \forall t\in \{\bar{T}+2, \ldots, T\}$. The rest of the proof mimics that of Lemma \ref{lemma:dec_conf_region}, and is hence omitted. 
\end{proof}

We now turn to the proof of Theorem \ref{thm:ICGLMUCB}.

\begin{proof} (\textbf{Proof of Theorem \ref{thm:ICGLMUCB}}). As in the proof of Theorem \ref{thm:ICLinUCB}, we will condition on the clean event $\mathcal{F}$, and focus on bounding the instantaneous regret at a time-step $t\geq \bar{T}+\tilde{T}+1$. To work towards this result, we define $\tilde{\theta}_t$ as
$$  (a_t, \tilde{\theta}_t) = \argmax_{(a,\theta) \in \mathcal{A}_t \times \bar{\mathcal{C}}^{(s)}_t} \mu(\langle \theta, a \rangle),
$$
where $\bar{\mathcal{C}}^{(s)}_t$ is the confidence set of the \texttt{IC-GLMUCB} algorithm as defined in Eq. \eqref{eqn:GLM_conf_region}.  Now let $a^*_t=\argmax_{a\in\mathcal{A}_t} \mu(\langle \theta_*, a \rangle)$ be an optimal action at time-step $t$. To bound the instantaneous regret $r_t = \mu(\langle \theta_*, a^*_t \rangle) - \mu(\langle \theta_*,a_t \rangle)$, we first note based on Lemma \ref{lemma:MVT} that
$$ \tilde{\theta}_t- \theta_* = G^{-1}_{t-1}\left(g_{t-1}(\tilde{\theta}_t)-g_{t-1}(\theta_*)\right). $$
This yields
\begin{equation}
\begin{aligned}
\norm[\bigg]{\tilde{\theta}_t- \theta_*}_{V_{t-1}} & = \norm[\bigg]{ G^{-1}_{t-1}\left(g_{t-1}(\tilde{\theta}_t)-g_{t-1}(\theta_*)\right)}_{V_{t-1}} \\
& = \sqrt{ \left(g_{t-1}(\tilde{\theta}_t)-g_{t-1}(\theta_*)\right)' G^{-1}_{t-1} V_{t-1} G^{-1}_{t-1}        \left(g_{t-1}(\tilde{\theta}_t)-g_{t-1}(\theta_*)\right)} \\
& \overset{(a)} \leq \sqrt{ \frac{1}{k_1} \left(g_{t-1}(\tilde{\theta}_t)-g_{t-1}(\theta_*)\right)' G^{-1}_{t-1}         \left(g_{t-1}(\tilde{\theta}_t)-g_{t-1}(\theta_*)\right)} \\
& \overset{(b)} \leq \frac{1}{k_1} \sqrt{  \left(g_{t-1}(\tilde{\theta}_t)-g_{t-1}(\theta_*)\right)' V^{-1}_{t-1}         \left(g_{t-1}(\tilde{\theta}_t)-g_{t-1}(\theta_*)\right)}  \\
& = \frac{1}{k_1} \norm[\bigg]{ g_{t-1}(\tilde{\theta}_t)-g_{t-1}(\theta_*)}_{V^{-1}_{t-1}} \\
& \overset{(c)} \leq \frac{1}{k_1} \left( \norm[\bigg]{ g_{t-1}(\tilde{\theta}_t)-g_{t-1}(\hat{\theta}^{(s)}_{t-1})}_{V^{-1}_{t-1}} + \norm[\bigg]{ g_{t-1}(\hat{\theta}^{(s)}_{t-1})-g_{t-1}(\theta_*)}_{V^{-1}_{t-1}} \right) \\
& = \frac{1}{k_1}\left(\mathcal{H}_{t-1}(\tilde{\theta}_t)+\mathcal{H}_{t-1}(\theta_*)\right)\\
& \overset{(d)} \leq \frac{2}{k_1} \left(1+4\sqrt{\frac{k_2}{k_1}} \frac{1}{\sqrt{\log(dLT)}}\right) \sqrt{\beta_T}.
\end{aligned}
\label{eqn:thm2_interim}
\end{equation}
In the above steps, (a) and (b) both follow from the fact that $G_{t-1} \succcurlyeq k_1 V_{t-1}$; (c) follows from the triangle inequality; and (d) follows by noting that $\theta_*, \tilde{\theta}_t \in \bar{\mathcal{C}}^{(s)}_t$, and by  appealing to Lemma \ref{lemma:GLM_dec_conf_region}. We now proceed to bound the instantaneous regret $r_t$ as follows.

\begin{equation}
    \begin{aligned}
    r_t &= \mu(\langle \theta_*, a^*_t \rangle) - \mu(\langle \theta_*, a_t \rangle) \\
    & \leq \mu(\langle \tilde{\theta}_t, a_t \rangle) - \mu(\langle \theta_*, a_t \rangle) \\
    & \overset{(a)} \leq k_2 \langle \tilde{\theta}_t - \theta_*, a_t \rangle\\
    & \leq k_2 {\Vert a_t \Vert}_{V^{-1}_{t-1}} \, {\Vert \theta_* - \tilde{\theta}_t \Vert}_{V_{t-1}} \\
    & \overset{(b)} \leq 2 \frac{k_2}{k_1} \left(1+4 \sqrt{\frac{k_2}{k_1}}  \sqrt{\frac{1}{\log(dLT)}}\right) {\Vert a_t \Vert}_{V^{-1}_{t-1}} \sqrt{\beta_T} \\
    & \leq 10 { \left(\frac{k_2}{k_1}\right)}^{3/2} {\Vert a_t \Vert}_{V^{-1}_{t-1}} {\sqrt{\beta_T}}. \\
    \end{aligned}
\end{equation} 
Here, (a) follows from the fact that $\mu(\cdot)$ is $k_2$-Lipschitz, and (b) follows from plugging in the  bound in Eq. \eqref{eqn:thm2_interim}. Combining the above bound with the fact that $r_t$ is trivially bounded above by $10 {(k_2/k_1)}^{3/2} \sqrt{\beta_T}$, we obtain
$$
r_t \leq 10 { \left(\frac{k_2}{k_1}\right)}^{3/2} \sqrt{\beta_T} \left(1 \wedge {\Vert a_t \Vert}_{V^{-1}_{t-1}} \right).
$$
Now following the exact same reasoning as in the proof of Theorem \ref{thm:ICLinUCB}, we can establish that on the clean event $\mathcal{F}$, 
$$ \sum_{t=\bar{T}+\tilde{T}+1}^{T}\hspace{-2.5mm} r_t \leq O\left( { \left(\frac{k_2}{k_1}\right)}^{3/2} d \sqrt{T} \log(LT) \right). $$
Moreover, since the instantaneous regret is trivially at most $1$, we have
$$ \sum_{t=1}^{\bar{T}+\tilde{T}} r_t \leq \bar{T}+\tilde{T} =  O\left( \frac{k_2}{k_1}  L^2d\sqrt{T}\log(dLT) \right) + O(\log(dLT)) = O\left( \frac{k_2}{k_1}  L^2d\sqrt{T}\log(dLT) \right).$$ 
We conclude that on the event $\mathcal{F}$ that has measure at least $1-\frac{5}{T}$, the following is true:
$$ \sum_{t=1}^{T} r_t  = 
O\left( { \left(\frac{k_2}{k_1}\right)}^{3/2} L^2 d \sqrt{T} \log(dLT) \right).
$$
The rest of the proof can be completed exactly as in Theorem \ref{thm:ICLinUCB}. 
\end{proof}
\newpage
%--------------------------------------------
%--------- IC-UCB Analysis ------------------
%---------------------------------------------
\section{Proof of Theorem \ref{thm:ICUCB}} 
In this section, we will prove Theorem \ref{thm:ICUCB}. To get started, we introduce some notation. For each arm $i$, let us define the following observation at each $k\in[T]$:
$$ y_{i,k}= \langle \theta_*, e_i \rangle + \eta_{i,k} = \theta_i+\eta_{i,k},$$
where $\{\eta_{i,k}\}_{k\in [T]}$ is a sequence of independent 1-subgaussian random variables drawn ahead of time. In words, $\eta_{i,k}$ is the noise random variable corresponding to the $k$-th play of arm $i$. We note here that arm $i$ may not actually be played $k$ times; nonetheless, the above model offers a simple way to analyze the true dynamics. Next, we define
$$ \hat{\theta}^{(a)}_{i,k} = \frac{1}{k} \sum_{s=1}^{k} y_{i,s} $$
to be the empirical mean of $\theta_i$ (maintained by the agent) based on the first $k$ observations. We start with the following simple lemma that sets up a clean event for our subsequent analysis.

\begin{lemma}
\label{lemma:UCB_clean} Consider the following event:
$$ \mathcal{G}_i = \{ \vert \hat{\theta}^{(a)}_{i,k} - \theta_i \vert \leq f_k, \forall k \in [T] \}, \hspace{2mm} \textrm{where} \hspace{2mm} f_k = 2 \sqrt{\frac{\log T}{k}}. $$
Then, $\mathbb{P}(\mathcal{G}_i) \geq 1-\frac{2}{T}$. 
\end{lemma}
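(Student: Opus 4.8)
The plan is to reduce the statement to a standard subgaussian concentration bound followed by a union bound over $k$, in exactly the style already used to prove Lemma \ref{lemma:noise}. First I would rewrite the deviation of interest using the definition $y_{i,s}=\theta_i+\eta_{i,s}$ and the definition of the empirical mean: the error of the agent's estimate is simply the empirical average of the noise,
\[
\hat{\theta}^{(a)}_{i,k}-\theta_i=\frac{1}{k}\sum_{s=1}^{k}\eta_{i,s}.
\]
Since the $\{\eta_{i,s}\}$ are independent and $1$-subgaussian, the partial sum $\sum_{s=1}^{k}\eta_{i,s}$ is $\sqrt{k}$-subgaussian, and therefore the scaled average on the right-hand side is $1/\sqrt{k}$-subgaussian.

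Next I would invoke the Hoeffding-type tail bound for subgaussian variables (as in \cite[Theorem 5.3]{tor}, already used in Lemma \ref{lemma:noise}), which for a $\sigma$-subgaussian random variable $X$ gives the two-sided estimate $\mathbb{P}(|X|\geq t)\leq 2\exp(-t^2/(2\sigma^2))$. Substituting $\sigma^2=1/k$ and the prescribed threshold $t=f_k=2\sqrt{(\log T)/k}$ yields, for each fixed $k$,
\[
\mathbb{P}\!\left(\left|\hat{\theta}^{(a)}_{i,k}-\theta_i\right|\geq f_k\right)\leq 2\exp\!\left(-2\log T\right)=\frac{2}{T^2}.
\]
The point worth flagging is that the choice of $f_k$ is calibrated precisely so that this per-$k$ failure probability comes out to $2/T^2$, which is what the subsequent union bound requires.

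Finally I would apply a union bound over the $T$ values $k\in[T]$:
\[
\mathbb{P}\!\left(\exists k\in[T]:\left|\hat{\theta}^{(a)}_{i,k}-\theta_i\right|>f_k\right)\leq\sum_{k=1}^{T}\frac{2}{T^2}=\frac{2}{T},
\]
so that $\mathbb{P}(\mathcal{G}_i)\geq 1-2/T$, as claimed. I do not expect a genuine obstacle here: the only steps demanding care are correctly tracking the $1/\sqrt{k}$ subgaussian parameter of the average and confirming that the constant in $f_k$ matches the union bound over the horizon $T$. Structurally the argument is identical to Lemma \ref{lemma:noise}, the one difference being that we bound the averaged estimate rather than the individual noise term.
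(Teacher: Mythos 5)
Your proposal is correct and follows essentially the same route as the paper's own proof: decompose the estimation error as the empirical average of the noise, note it is $(1/\sqrt{k})$-subgaussian, apply the two-sided subgaussian tail bound (calibrated so that $f_k$ yields a per-$k$ failure probability of $2/T^2$), and finish with a union bound over $k\in[T]$. No gaps to flag.
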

\begin{proof}
The proof is standard, and we only provide it here for completeness. Start by noting that
$$ \hat{\theta}^{(a)}_{i,k} = \frac{1}{k} \sum_{s=1}^{k} y_{i,s} = \frac{1}{k} \sum_{s=1}^{k} \left(\theta_i+\eta_{i,s} \right) = \theta_i + \frac{1}{k} \sum_{s=1}^{k} \eta_{i,s}.$$ An application of the union bound yields: 
\begin{equation}
    \begin{aligned}
    \mathbb{P}(\mathcal{G}^{c}_i) &= \mathbb{P}\left(\exists k \in [T]: \vert \hat{\theta}^{(a)}_{i,k} - \theta_i \vert > f_k \right) \\
    & \leq \sum_{k\in[T]} \mathbb{P}\left(\vert \hat{\theta}^{(a)}_{i,k} - \theta_i \vert > f_k \right)\\
    & = \sum_{k\in[T]} \mathbb{P}\left( \left| \frac{1}{k} \sum_{s=1}^{k} \eta_{i,s} \right| > f_k \right)\\
    & \leq \sum_{k\in[T]} \frac{2}{T^2} = \frac {2}{T}.
    \end{aligned}
\end{equation}
For the last step, we used the fact that $(1/k)  \sum_{s=1}^{k} \eta_{i,s}$ is a $(1/\sqrt{k})$-subgaussian random variable, and then appealed to \cite[Theorem 5.3]{tor}. 
\end{proof}

The next result tells us that with high probability, there is never any overflow during encoding.
\newpage
\begin{lemma} 
\label{lemma:UCB_enc}
Fix an action $i\in [d]$. On the event $\mathcal{G}_i$, it holds that $e_{i,n_i(t)} \in [-p_{n_i(t)}, p_{n_i(t)}], \forall t\in \{d+1, \ldots, T\}$.
\end{lemma}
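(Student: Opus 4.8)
The plan is to prove the claim by induction on the number of times arm $i$ has been played, tracking the scalar innovation $e_{i,k} = \hat{\theta}^{(a)}_{i,k} - \hat{\theta}^{(s)}_{i,k-1}$ directly rather than indexing by the time-step $t$. Since $n_i(t)$ ranges over positive integers at most $T$ as $t$ varies over $\{d+1, \ldots, T\}$, it suffices to show that on the event $\mathcal{G}_i$, we have $e_{i,k} \in [-p_k, p_k]$ for every play count $k \geq 1$. This mirrors the induction used in Lemma~\ref{lemma:enc_dec}, but is one-dimensional and uses the scalar concentration event $\mathcal{G}_i$ of Lemma~\ref{lemma:UCB_clean} in place of the ellipsoidal confidence region.

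For the base case $k=1$, recall that $\hat{\theta}^{(s)}_{i,0}=0$, so $e_{i,1} = \hat{\theta}^{(a)}_{i,1}$. On $\mathcal{G}_i$ we have $|\hat{\theta}^{(a)}_{i,1} - \theta_i| \leq f_1$, whence $|e_{i,1}| \leq |\theta_i| + f_1 \leq m + f_1 = p_1$, using $\max_i |\theta_i| \leq m$ and the definition of $p_1$. This places $e_{i,1}$ inside $[-p_1, p_1]$. For the inductive step, I would assume $e_{i,j} \in [-p_j, p_j]$ for all $j \leq k$. In particular $e_{i,k}$ lies in $[-p_k, p_k]$, so there is no overflow at the $k$-th play and the server decodes the center $\tilde{e}_{i,k}$ of the bin containing $e_{i,k}$. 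Since this interval of length $2p_k$ is partitioned uniformly into $2^B$ bins, each bin has length $2 p_k \gamma$, so the decoding error obeys $|\tilde{e}_{i,k} - e_{i,k}| \leq p_k \gamma = q_k$. Writing $\hat{\theta}^{(s)}_{i,k} = \hat{\theta}^{(s)}_{i,k-1} + \tilde{e}_{i,k} = \hat{\theta}^{(a)}_{i,k} + (\tilde{e}_{i,k} - e_{i,k})$ then yields $|\hat{\theta}^{(a)}_{i,k} - \hat{\theta}^{(s)}_{i,k}| \leq q_k$.

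To close the induction, I would decompose the next innovation as
\[
e_{i,k+1} = \left(\hat{\theta}^{(a)}_{i,k+1} - \hat{\theta}^{(a)}_{i,k}\right) + \left(\hat{\theta}^{(a)}_{i,k} - \hat{\theta}^{(s)}_{i,k}\right),
\]
and bound the two pieces separately: the first via the triangle inequality and $\mathcal{G}_i$ gives $|\hat{\theta}^{(a)}_{i,k+1} - \hat{\theta}^{(a)}_{i,k}| \leq f_{k+1} + f_k \leq 2 f_k$, using that $f_k$ is decreasing in $k$; the second is at most $q_k = \gamma p_k$ from the previous step. Summing gives $|e_{i,k+1}| \leq \gamma p_k + 2 f_k = p_{k+1}$ by the recursion in Eq.~\eqref{eqn:UCB_quant_eq}, establishing $e_{i,k+1} \in [-p_{k+1}, p_{k+1}]$.

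This is essentially a bookkeeping induction, so there is no deep obstacle; the one point that requires care is aligning the two error sources with the definition of $p_{k+1}$. Specifically, I must check that a uniform $2^B$-bin partition of an interval of half-width $p_k$ produces a decoding error of exactly $q_k = \gamma p_k$ — the factor $\gamma = 2^{-B}$ has to coincide with the half-bin-width — and that the agent's empirical mean drifts by at most $2 f_k$; these are precisely the two additive terms in $p_{k+1} = \gamma p_k + 2 f_k$. A secondary subtlety is that the induction must run over play counts rather than time-steps, since a given arm may be idle for many rounds and its innovation changes only when it is actually played; indexing by $k = n_i(t)$ keeps the recursion self-contained and then transfers to every $t \in \{d+1, \ldots, T\}$.
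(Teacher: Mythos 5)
Your proof is correct and follows essentially the same argument as the paper: the same reduction from time-steps to play counts, the same base case using $\hat{\theta}^{(s)}_{i,0}=0$ and the event $\mathcal{G}_i$, and the same inductive step decomposing $e_{i,k+1}$ into the empirical-mean drift (bounded by $2f_k$) plus the quantization error (bounded by $\gamma p_k$), summing to $p_{k+1}$. Your explicit verification that a uniform $2^B$-bin partition of $[-p_k,p_k]$ yields half-bin-width $\gamma p_k$ is a nice touch that the paper states without elaboration, but it is the identical bound.
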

\begin{proof}
With $t\in [T]$, since $n_i(t) \in [T]$, it suffices to show that on the event $\mathcal{G}_i$, it holds that
$$ e_{i,k} \in [-p_k, p_k], \forall k \in [T], $$
where $p_k$ is as given by Eq. \eqref{eqn:UCB_quant_eq}. The result follows from a simple inductive argument akin to that employed in the proof of Lemma \ref{lemma:enc_dec}. For the base case with $k=1$, we have that 
$$ e_{i,1}=\hat{\theta}^{(a)}_{i,1}-\hat{\theta}^{(s)}_{i,0} = \hat{\theta}^{(a)}_{i,1},$$ 
where we used the fact that $\hat{\theta}^{(s)}_{i,0}=0$. We thus have:
$$ |e_{i,1}| \leq \vert \hat{\theta}^{(a)}_{i,1} - \theta_i \vert + \vert \theta_i \vert \leq f_1 + m = p_1, $$
where we used the property of event $\mathcal{G}_i$, and the fact that $\max_{i\in[d]} |\theta_i| \leq m$. Now suppose $e_{i,k} \in [-p_k,p_k]$ holds for all $k\in[\ell]$, where $\ell \in [T-1]$. For the induction step, our goal is to then show that $e_{i,\ell+1} \in [-p_{\ell+1},p_{\ell+1}]$. To this end, we start by noting that 
\begin{equation}
\begin{aligned}
\left|\hat{\theta}^{(a)}_{i,\ell+1}-\hat{\theta}^{(a)}_{i,\ell}\right| & \leq \left|\hat{\theta}^{(a)}_{i,\ell+1}-\theta_i\right| + \left|\theta_i-\hat{\theta}^{(a)}_{i,\ell}\right| \\
& \leq f_{\ell+1} + f_{\ell}\\
& \leq 2 f_{\ell},
\end{aligned}
\label{eqn:UCB_induct1}
\end{equation}
where for the second inequality, we invoked the property of event $\mathcal{G}_i$. Based on the induction hypothesis, $e_{i,\ell} \in [-p_{\ell},p_{\ell}]$, i.e., there is no overflow. The encoding-decoding strategy of \texttt{IC-UCB} then yields
$$ \left|\hat{\theta}^{(a)}_{i,\ell}-\hat{\theta}^{(s)}_{i,\ell}\right| = \vert \tilde{e}_{i,\ell} - e_{i,\ell} \vert \leq \frac{1}{2^B}p_{\ell} = \gamma p_{\ell}. $$
Combining the above inequality with that in Eq. \eqref{eqn:UCB_induct1}, we obtain:
\begin{equation}
    \begin{aligned}
    |e_{i,\ell+1}| &= \left|\hat{\theta}^{(a)}_{i,\ell+1}-\hat{\theta}^{(s)}_{i,\ell}\right| \\
    & \leq \left|\hat{\theta}^{(a)}_{i,\ell+1}-\hat{\theta}^{(a)}_{i,\ell}\right| + \left|\hat{\theta}^{(a)}_{i,\ell}-\hat{\theta}^{(s)}_{i,\ell}\right|\\
    & \leq 2 f_{\ell}+ \gamma p_{\ell}\\
    & = p_{\ell+1}.
    \end{aligned}
\end{equation}
This completes the induction step and the proof. 
\end{proof}

To proceed, we will require the following intermediate result concerning the sequence $\{q_k\}$ generated as per Eq. \eqref{eqn:UCB_quant_eq}.

\begin{lemma} 
\label{lemma:sequence}
Consider the sequence $\{q_k\}$ generated as per Eq. \eqref{eqn:UCB_quant_eq}. For all $k \geq 1$, we have
\begin{equation}
    q_k \leq \gamma^{k} (m+f_1) + \frac{12}{B} \sqrt{\frac{\log T}{k}}.
\label{eqn:q_bnd_UCB}
\end{equation}
\end{lemma}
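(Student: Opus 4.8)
The plan is to prove the bound by induction on $k$, after first collapsing the coupled recursions in Eq.~\eqref{eqn:UCB_quant_eq} into a single recursion for $\{q_k\}$. Since $q_k = \gamma p_k$ and $p_{k+1} = \gamma p_k + 2 f_k$, multiplying the second identity by $\gamma$ gives the self-contained recursion $q_{k+1} = \gamma q_k + 2\gamma f_k$ for every $k \ge 1$. This is what makes the induction clean: the inductive hypothesis can be substituted directly, without ever having to unroll the full convolution $\sum_j \gamma^{k-j} f_j$ and bound the geometric-times-$1/\sqrt{j}$ tail term by term (which is doable but wasteful in the constants).

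For the base case $k=1$, recall $q_1 = \gamma p_1 = \gamma(m+f_1)$, so $q_1 \le \gamma(m+f_1) + \tfrac{12}{B}\sqrt{\log T}$ holds trivially because the second summand is nonnegative, and this matches the claimed bound at $k=1$. For the inductive step, assume $q_k \le \gamma^k(m+f_1) + \tfrac{12}{B}\sqrt{\tfrac{\log T}{k}}$. Feeding this into $q_{k+1} = \gamma q_k + 2\gamma f_k$ and using $f_k = 2\sqrt{\tfrac{\log T}{k}}$ yields
\[
 q_{k+1} \le \gamma^{k+1}(m+f_1) + \Big(\tfrac{12\gamma}{B} + 4\gamma\Big)\sqrt{\tfrac{\log T}{k}}.
\]
The leading term is exactly the desired $\gamma^{k+1}(m+f_1)$, so it remains to show the residual is at most $\tfrac{12}{B}\sqrt{\tfrac{\log T}{k+1}}$. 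Dividing through by $\sqrt{\log T}$ and multiplying by $\sqrt{k+1}$, this reduces to the purely scalar inequality $\big(\tfrac{12\gamma}{B} + 4\gamma\big)\sqrt{\tfrac{k+1}{k}} \le \tfrac{12}{B}$; since $\sqrt{(k+1)/k} \le \sqrt{2}$ for all $k\ge 1$, it suffices to verify $\sqrt{2}\,\gamma\,(1 + B/3) \le 1$.

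The one genuine (though minor) obstacle is this last scalar inequality, which is precisely where the relationship $\gamma = 1/2^B$ is used. The key point is that $\gamma = 2^{-B}$ decays exponentially in $B$ while the competing factor $1 + B/3$ grows only linearly, so $2^{-B}(1 + B/3)$ is decreasing in $B$; I would verify the inequality at the boundary $B=1$, where $\sqrt{2}\cdot\tfrac12\cdot\tfrac43 = \tfrac{2\sqrt2}{3} < 1$, and then argue monotonicity for $B \ge 1$ by checking that the derivative of $2^{-B}(1+B/3)$ is negative (the bracket $\tfrac13 - \ln 2\,(1+B/3)$ is negative already at $B=1$). This closes the induction and delivers exactly the stated constant $12$. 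A useful sanity check is that the uniform replacement $\sqrt{(k+1)/k}\le\sqrt2$ is tight only at $k=1$, confirming the constant is not being thrown away.
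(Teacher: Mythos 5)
Your proof is correct, and it takes a genuinely different route from the paper. The paper unrolls the recursion explicitly, writing $q_k = \gamma^k(m+f_1) + 4\sqrt{\log T}\sum_{s=1}^{k-1}\gamma^{k-s}/\sqrt{s}$, and then bounds the convolution sum by the integral $\int_1^k a^s/\sqrt{s}\,ds$ (with $a=1/\gamma$), which it evaluates through two changes of variable and a term-by-term power-series bound justified by Fubini--Tonelli, arriving at $3a^k/(B\sqrt{k})$ and hence the constant $12/B$. You instead collapse the coupled recursions into $q_{k+1}=\gamma q_k + 2\gamma f_k$ and run a direct induction, which closes because of the single scalar inequality $\sqrt{2}\,2^{-B}\left(1+B/3\right)\le 1$ for $B\ge 1$ --- I checked this step and your monotonicity argument: at $B=1$ the value is $2\sqrt{2}/3<1$, and the derivative of $2^{-B}(1+B/3)$ is negative for $B\ge 1$, so the induction is sound. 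Your argument is considerably more elementary: no integral comparison, no series expansion, no interchange of summation and integration, and it isolates exactly where the exponential decay of $\gamma=2^{-B}$ against the linear factor in $B$ is needed. The trade-off is that the induction requires the target constant ($12$) to be posited in advance so that it can absorb the extra $4\gamma$ term each step (the self-consistency condition $12/B \ge \sqrt{2}\gamma(12/B+4)$), whereas the paper's forward computation produces the constant as output; both arguments are equally tied to the specific $1/\sqrt{k}$ decay of $f_k$.
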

\begin{proof}
Rolling out the recursion $p_{k+1}=\gamma p_k +2 f_k$ yields:
$$ p_k = \gamma^{k-1} p_1 + 2 \sum_{s=1}^{k-1} \gamma ^{k-1-s} f_s. $$ With $p_1 =(m+f_1)$ and $q_k = \gamma p_k$, we then have
\begin{equation}
 q_k = \gamma^k (m+f_1) + 2 \sum_{s=1}^{k-1} \gamma ^{k-s} f_s = \gamma^k (m+f_1) + 4 \sqrt{\log T} \left(\sum_{s=1}^{k-1} \frac{\gamma ^{k-s}}{\sqrt{s}} \right). 
\label{eqn:sum_bnd_UCB}
 \end{equation}
 Let $a=1/\gamma$. In what follows, we will bound the following summation
$$ \sum_{s=1}^{k-1} \frac{a^s}{\sqrt{s}} \leq \underbrace{\int_{s=1}^{k} \frac{a^s}{\sqrt{s}} \, ds}_{g_k},$$
where we used the fact that $a^s/\sqrt{s}$ is monotonically increasing since $a =2^B \geq 2$. To bound the integral $g_k$, we employ a change of variable: $u=\sqrt{s}$. This yields:
$$ g_k = 2 \int_{u=1}^{\sqrt{k}} a^{u^2}\, du = 2 \int_{u=1}^{\sqrt{k}} e^{u^2 \log a} \, du,$$
where we have used $e$ to represent $\exp(1)$. Now let us employ another change of variable: $z= u\sqrt{\log a}$. We then obtain
\begin{equation}
    \begin{aligned}
    g_k &= \frac{2}{\sqrt{\log a}} \int_{\sqrt{\log a}}^{\sqrt{k\log a}} e^{z^2} \, dz \\
    & \leq \frac{2}{\sqrt{\log a}} \int_{0}^{\sqrt{k\log a}} e^{z^2} \, dz \\
    & = \frac{2}{\sqrt{\log a}} \int_{0}^{\sqrt{k\log a}} \left(\sum_{j=0}^{\infty} \frac{z^{2j}}{j!} \right) \, dz \\
    & = \frac{2}{\sqrt{\log a}} \sum_{j=0}^{\infty}  \left( \int_{0}^{\sqrt{k\log a}}  \frac{z^{2j}}{j!} \, dz \right) \\
    & = \frac{2}{\sqrt{\log a}} \sum_{j=0}^{\infty} \frac{{(\sqrt{k\log a})}^{2j+1}}{(2j+1) j!} \\
    & \leq  \frac{2}{\sqrt{k}\log a} \sum_{j=0}^{\infty} \frac{{({k\log a})}^{j+1}}{(j+1)!} \\
    & \leq  \frac{2 e^{k \log a}}{\sqrt{k}\log a}  \\
    & = \frac{2 a^k}{B \log (2) \sqrt{k}} \\
    & \leq \frac{3  a^k}{B\sqrt{k}},
    \end{aligned}
\end{equation}
where the interchange of the integral and the summation in the fourth step is warranted by the Fubini-Tonelli theorem. Plugging the above bound in Eq. \eqref{eqn:sum_bnd_UCB} and simplifying leads to the claim in Eq. \eqref{eqn:q_bnd_UCB}. 
\end{proof}

We are now ready to prove Theorem \ref{thm:ICUCB}. 

\begin{proof} (\textbf{Proof of Theorem \ref{thm:ICUCB}}) We start by defining a few quantities that will be used in our analysis. Define
$$ T_{1}= \ceil*{ \frac{1}{B} \frac{\log\left(\frac{(m+2\sqrt{\log T}) \sqrt{T}}{\sqrt{\log T}}\right)}{\log(2)}}; \hspace{2mm} T_2= \ceil*{ \frac{C}{\Delta^2_i} \log mT}, $$ 
where $C=3600.$ It is easy to verify that for $T_1 \leq t \leq T$, 
$$\gamma^t(m+f_1)= {\left(\frac{1}{2^B}\right)}^t (m+2\sqrt{\log T}) \leq \sqrt{\frac{\log T}{T}} \leq  \sqrt{\frac{\log T}{t}}.$$
Moreover, for $t \geq T_2$, we have
$$ 30 \sqrt{\frac{\log T}{t}} \leq \frac{\Delta_i}{2}. $$
Fix any arm $i$ other than arm $1$ and define $\tilde{T}=\max\{T_1, T_2\}$.\footnote{We have suppressed the dependence of $T_2$ and $\tilde{T}$ on $i$ to avoid cluttering the exposition.} Next, define the event $\mathcal{H}_{1i}= \mathcal{G}_1 \cap \mathcal{G}_i$, where recall that
$$  \mathcal{G}_i = \{ \vert \hat{\theta}^{(a)}_{i,k} - \theta_i \vert \leq f_k, \forall k \in [T] \}.$$

We claim that on the event $\mathcal{H}_{1i}$, action $i$ will be played at most $\tilde{T}$ times, i.e., $n_i(T) \leq \tilde{T}$. To establish this claim, we proceed via contradiction. Accordingly, suppose that on the event $\mathcal{H}_{1i}$, $n_i(T) > \tilde{T}$. Thus, there must exist a time-step $t\in [T]$ such that $n_i(t-1) = \tilde{T}$, and $a_t=e_i$. At this time-step, we have
\begin{equation}
    \begin{aligned}
    \texttt{IC-UCB}_i(t-1)& = \hat{\theta}^{(s)}_{i,n_i(t-1)}+q_{n_i(t-1)}+f_{n_i(t-1)} \\
    &= \hat{\theta}^{(s)}_{i,\tilde{T}}+q_{\tilde{T}}+f_{\tilde{T}} \\
    & \overset{(a)} \leq \hat{\theta}^{(a)}_{i,\tilde{T}} + 2 q_{\tilde{T}} + f_{\tilde{T}} \\
    & \overset{(b)} \leq \theta_i + 2 \left( q_{\tilde{T}} + f_{\tilde{T}} \right) \\
    & \overset{(c)} \leq \theta_i + 30 \sqrt{\frac{\log T}{\tilde{T}}} \\
    & \overset{(d)} \leq \theta_i + \frac{\Delta_i}{2}\\
    & < \theta_1 \\
    & \overset{(e)} \leq \texttt{IC-UCB}_1(t-1).
    \end{aligned}
\end{equation}
We have thus arrived at a contradiction as $a_t=e_i \neq \argmax_{j\in [d]} \texttt{IC-UCB}_j (t-1)$. To complete the proof of the claim, we need to justify each of the above steps. For (a), we invoked Lemma \ref{lemma:UCB_enc} to conclude that
$$ \left|\hat{\theta}^{(a)}_{i,\tilde{T}}-\hat{\theta}^{(s)}_{i,\tilde{T}}\right| = \vert \tilde{e}_{i,\tilde{T}} - e_{i,\tilde{T}} \vert \leq  \gamma p_{\tilde{T}} = q_{\tilde{T}}.$$ For (b), we used the defining property of event $\mathcal{G}_i$. For (c), we appealed to Lemma \ref{lemma:sequence}, and used the facts that $\tilde{T} \geq T_1$ and $B \geq 1$ to conclude that 
$$ q_{\tilde{T}} \leq \gamma^{\tilde{T}} (m+f_1) + \frac{12}{B} \sqrt{\frac{\log T}{\tilde{T}}} \leq 
13 \sqrt{\frac{\log T}{\tilde{T}}}.
$$ 
It remains to argue that on the event $\mathcal{H}_{1i}$ (that contains the event $\mathcal{G}_1$), 
$$ \theta_1 \leq \texttt{IC-UCB}_1(t-1). $$
We claim that the above inequality holds for all $t\in [T]$. Suppose by contradiction that there exist some $t, k \in [T]$ such that $n_i(t-1)=k$, and 
$$ \theta_1 > \hat{\theta}^{(s)}_{i,k}+q_{k}+f_{k}. $$ Based on the property of event $\mathcal{G}_1$, this would imply that
$$ \hat{\theta}^{(a)}_{i,k}+f_{k} > \hat{\theta}^{(s)}_{i,k}+q_{k}+f_{k} \implies \hat{\theta}^{(a)}_{i,k} - \hat{\theta}^{(s)}_{i,k} > q_k, $$
which is a contradiction since based on Lemma \ref{lemma:UCB_enc},
$$ \left|\hat{\theta}^{(a)}_{i,k}-\hat{\theta}^{(s)}_{i,k}\right|  \leq  q_{k}. $$

We have thus established the claim that on the event $\mathcal{H}_{1i}$, $n_i(T) \leq \tilde{T}$. From Lemma \ref{lemma:UCB_clean}, we also note that 
$$ \mathbb{P}(\mathcal{H}_{1i}) \geq 1-\frac{4}{T}.$$ This immediately leads to the following bound:
\begin{equation}
\begin{aligned}
    \mathbb{E}[n_i(T)] &= \mathbb{E}[\mathcal{I}_{\mathcal{H}_{1i}} n_i(T)]+ \mathbb{E}[\mathcal{I}_{\mathcal{H}^c_{1i}} n_i(T)]\\ 
    & \leq \tilde{T}+ T \mathbb{P}(\mathcal{H}^{c}_{1i})\\
    & \leq \tilde{T}+4.
\end{aligned}
\end{equation}
Using the facts that $m \geq 1$, $B \geq 1$, and $\Delta_i \leq 1$, one can verify that $T_1 \leq T_2$. Hence, 
$$ \mathbb{E}[n_i(T)] \leq \frac{C}{\Delta^2_i} \log (mT) +5.$$ 
This immediately yields the desired regret bound:
$$ R_T = \sum_{i=1}^{d} \Delta_i \mathbb{E}[n_i(T)] \leq 5 \sum_{i=1}^{d} \Delta_i + \sum_{i=1}^{d} \frac{C}{\Delta_i} \log (mT). $$ 
\end{proof}

We now comment on the proof of Theorem \ref{thm:gap_ind}.

\begin{proof} (\textbf{Proof of Theorem \ref{thm:gap_ind}}) Starting from the bound 
$$ \mathbb{E}[n_i(T)] \leq \frac{C}{\Delta^2_i} \log (mT) +5$$
that we derived in the analysis of Theorem \ref{thm:ICUCB}, the rest of the proof of Theorem \ref{thm:gap_ind} follows exactly the same reasoning as \cite[Theorem 7.2]{tor}. Hence, we omit the details. 
\end{proof}
\end{document}